\documentclass{article}

    \PassOptionsToPackage{numbers, compress}{natbib}



    \usepackage[final]{neurips_2025}


\usepackage[utf8]{inputenc} 
\usepackage[T1]{fontenc}    
\usepackage[hidelinks]{hyperref}       
\usepackage{xcolor}
\definecolor{hanblue}{rgb}{0.27, 0.42, 0.81}
\hypersetup{
    colorlinks=true,
    linkcolor=hanblue,
    urlcolor=hanblue,
    citecolor=hanblue,
    anchorcolor=hanblue}
\usepackage{url}            
\usepackage{booktabs}       
\usepackage{amsfonts}       
\usepackage{nicefrac}       
\usepackage{microtype}      
\usepackage{xcolor}         

\usepackage{url}            
\usepackage{booktabs}       
\usepackage{amsfonts}       
\usepackage{nicefrac}       
\usepackage{microtype}      
\usepackage{amsmath}        
\usepackage{amsthm}
\usepackage{mathrsfs}
\usepackage{bbm}            
\usepackage{rotating}       
\usepackage{pdflscape}         
\usepackage{makecell}
\usepackage{xcolor}
\usepackage{array}
\usepackage{tcolorbox}
\usepackage{enumitem}

\usepackage{algorithm}
\usepackage{algorithmic}
\usepackage[parfill]{parskip}
\usepackage{thm-restate}

\theoremstyle{plain}
\newtheorem{theorem}{Theorem}[section]

\newtheorem{lemma}[theorem]{Lemma}
\newtheorem{remark}[theorem]{Remark}

\theoremstyle{definition}
\newtheorem{definition}[theorem]{Definition}
\newtheorem{assumption}{Assumption}

\usepackage[textsize=tiny]{todonotes}

\makeatletter
\def\thm@space@setup{%
  \thm@preskip=\parskip \thm@postskip=0pt
}
\makeatother

\usepackage{tikz}           
\usetikzlibrary{automata, arrows, arrows.meta, positioning}

\DeclareMathOperator*{\argmax}{argmax}
\DeclareMathOperator*{\argmin}{argmin}



\usepackage{mathtools}
\newcommand*{\defeq}{\mathrel{\vcenter{\baselineskip0.5ex \lineskiplimit0pt
                     \hbox{\scriptsize.}\hbox{\scriptsize.}}}%
                     =}

\newcommand{\cP}{\mathcal{P}}

\renewcommand{\P}{\mathbb{P}}
\newcommand{\E}{\mathbb{E}}

\newcommand{\Reals}{\mathbb{R}}

\newcommand{\mle}{{\textnormal{MLE}}} 

\newcommand{\med}{{\textnormal{med}}}

\newcommand{\subopt}{\mathrm{SubOpt}}
\newcommand{\cO}{\mathcal{O}}
\newcommand{\avg}{\textnormal{avg}}

\newcommand{\bz}{\boldsymbol{z}}
\newcommand{\cA}{\mathcal{A}} 

\newcommand{\rlhf}{\mathrm{RLHF}} 

\newcommand{\util}{J}

\newcommand{\socialwelfare}{\mathcal{W}}

\newcommand{\sw}{\socialwelfare}

\newcommand{\btheta}{\theta}

\newcommand{\tbtheta}{\tilde{\theta}}

\newcommand{\bmu}{\mu}

\newcommand{\sign}{\mathrm{sign}}
\newcommand{\signs}{\mathrm{signs}}

\newcommand{\sA}{\mathscr{A}}
\newcommand{\sC}{\mathscr{C}}

\newcommand{\cS}{\mathcal{S}}

\newcommand{\cM}{\mathcal{M}}

\newcommand{\V}{V}

\newcommand{\cD}{\mathcal{D}}

\newcommand{\coverage}{\kappa}

\newcommand{\alts}{\bar s}
\newcommand{\alta}{\bar a}
\newcommand{\abs}[1]{\left|#1\right|}
\newcommand{\norm}[1]{\left \lVert #1 \right \rVert}

\title{Strategyproof Reinforcement Learning \\ from Human Feedback}

%




\author{%
  Thomas Kleine Buening \\ 
  ETH Zurich \\
  ETH AI Center
  \And 
  Jiarui Gan \\
  Department of Computer Science \\ 
  University of Oxford
  \AND
  Debmalya Mandal \\
  Department of Computer Science \\
  University of Warwick 
  \And 
  Marta Kwiatkowska \\
  Department of Computer Science \\
  University of Oxford 
}

\begin{document}

\maketitle

\vspace{-0.2cm}
\begin{abstract}
    We study Reinforcement Learning from Human Feedback (RLHF) in settings where multiple labelers may strategically misreport feedback to steer the learned policy toward their own preferences. We show that existing RLHF algorithms, including recent pluralistic methods, are not strategyproof, and that even a single strategic labeler can cause arbitrarily large misalignment with social welfare. Moreover, we prove that, in the worst case, any strategyproof RLHF algorithm must perform $k$-times worse than the optimal policy, where $k$ is the number of labelers. This suggests a fundamental trade-off between incentive alignment (ensuring labelers report truthfully) and policy alignment (maximizing social welfare). To address this, we propose the Pessimistic Median of MLEs algorithm, which, under appropriate policy coverage assumptions, is approximately strategyproof and converges to the optimal policy as the number of labelers and samples increases. Our results apply to both contextual bandits and Markov decision processes.


\end{abstract}

\section{Introduction}\label{section:intro}


Reinforcement Learning from Human Feedback (RLHF) has become a widely used approach for aligning AI systems with human preferences. By leveraging human-labeled comparisons, RLHF enables policy optimization in applications such as robotics, recommendation systems, and Large Language Models (LLMs)~\citep{casper2023open, kaufmann2023survey}. This approach has led to significant improvements in usability and alignment with intended objectives. However, RLHF also introduces new challenges, particularly in situations where preferences are diverse, subjective, and potentially in conflict.

Recently, pluralistic alignment---the challenge of aligning with the preferences of diverse individuals or groups---has emerged as an active area of research~\citep{sorensen2024roadmap, feng2024modular, kirk2024prism, bakker2022fine, castricato2024persona}. Unlike traditional reinforcement learning, which optimizes a policy according to a single, well-defined reward function, pluralistic settings require reconciling multiple perspectives. This raises questions about whose preferences should shape AI decisions and how to aggregate diverse inputs fairly and effectively~\citep{kasirzadeh2024plurality, conitzer2024social}.  
In such pluralistic settings, existing methods often optimize policies based on aggregated human preferences, implicitly assuming that labelers provide feedback truthfully. However, this perspective crucially neglects the incentives of labelers when their feedback directly impacts policy outcomes. 

When human preferences influence the final policy, labelers (or groups of labelers) may have incentives to manipulate their feedback in ways that benefits them at the expense of broader alignment~\citep{santurkar2023whose, hartmann2023political}. For example, in the context of LLM fine-tuning, human labelers may systematically misreport preferences to amplify specific biases and reinforce narratives favorable to their views.\footnote{Naturally, this challenge is not exclusive to human labelers but can also arise, or even be amplified, when learning from synthetic feedback sources designed to influence model behavior.}  
As a result, such strategic behavior threatens to distort the alignment process through self-serving feedback and can undermine the fairness, robustness, and effectiveness of the system~\citep{egashira2025exploiting, alber2025medical}. 
Despite its importance, this issue remains largely unaddressed in existing RLHF methodologies. 


This paper aims to bridge this gap by studying RLHF through the lens of mechanism design and proposing solutions that ensure robustness against strategic feedback. 
We formalize the problem, analyze the conditions and the cost under which strategyproofness can be achieved, and propose an RLHF method to mitigate incentive misalignment while maintaining policy performance.

In summary, our main contributions are:
\begin{itemize}[leftmargin=10pt]
    \item We formally introduce the problem of offline RLHF with strategic human labelers, where each labeler potentially misreports preference labels to steer the final policy toward the maximization of their personal objectives, i.e., their reward function (Section~\ref{section:setting}).
    We focus on linear reward functions and social welfare maximization and study the tensions that arise between individual incentive alignment and policy alignment with social welfare.
    \item We show that existing RLHF methods are not strategyproof~(Proposition~\ref{prop:existing_methods_not_strategyproof}), and even a single strategic labeler can almost arbitrarily degrade policy performance of existing methods~(Proposition~\ref{prop:subopt_pessimistic_social_welare}). 
    Moreover, without additional assumptions, we find that any strategyproof RLHF method suffers from constant suboptimality (Theorem~\ref{theorem:impossibility}) and performs at least $k$-times worse compared to the optimal policy (Corollary~\ref{cor:approximation_ratio}), where $k$ is the number of distinct labelers. This points towards a fundamental trade-off between incentive alignment and policy alignment.  
    \item We propose an RLHF method called Pessimistic Median of MLEs, which combines pessimistic estimates with a median rule to incentivize truthful preference reporting~(Section~\ref{section:pessimistic_median}).  
    Interestingly, we find that Pessimistic Median of MLEs is \emph{approximately} strategyproof due to the uncertainty in reward estimation. Notably, the incentive strength depends on the \emph{uniform} policy coverage of each labeler’s data. This stands in contrast to standard RLHF guarantees, which rely only on the coverage of the optimal policy.   
    More precisely, under additional domain restrictions, we show:  
    \begin{itemize}[leftmargin=16pt]
        \item[\bfseries a)] Pessimistic Median of MLEs is $\tilde \cO(\coverage_i \sqrt{d/n})$-strategyproof for labeler~$i$ where $\coverage_i$ quantifies the uniform policy coverage of labeler~$i$~(Theorem~\ref{theorem:pessimistic_median_approx_strategyproof}).  
        \item[\bfseries b)] The computed policy’s suboptimality is bounded by $\tilde \cO(\sqrt{d/k} + \max_{i \in [k]} \kappa_i^* \cdot k \sqrt{d/n})$ where $\kappa_i^*$ is the optimal policy coverage of labeler $i$~(Theorem~\ref{theorem:subopt_under_truthful}, Proposition~\ref{prop:subopt_under_dominant}, Corollary~\ref{cor:special_cases_pessimistic_MoMLEs}).  
    \end{itemize}  
    We establish these results for both contextual bandits and Markov decision processes~(Section~\ref{section:mdps}).  
    \end{itemize}

\section{Related Work} 

\paragraph{Reinforcement Learning from Human Feedback.} 
RLHF has emerged as a powerful framework for aligning AI systems with human values by leveraging human feedback to guide policy learning~\citep{christiano2017deep, casper2023open, kaufmann2023survey}. Most relevant to our work is the growing literature on RLHF in settings with diverse and possibly conflicting preferences among individuals or demographic groups~\citep{chakrabortymaxmin, ramesh2024group, zhong2024provable, chen2024pal}. Some of these works focus on maximizing the worst-case utility across labelers (or groups) \citep{chakrabortymaxmin, ramesh2024group}, whereas others optimize welfare functions such as the additive social welfare \citep{zhong2024provable}. 

Some other recent work has also explicitly taken a social choice perspective on pluralistic alignment and studies how to ensure that methods for preference aggregation satisfy desirable properties inspired by social choice theory~\citep{conitzer2024social, ge2024axioms, a2025policy}. 
Importantly, these works, while related, assume truthful feedback and do not account for the incentives created by AI alignment.
However, aggregating and trading-off preferences naturally invites strategic or malicious behavior, as labelers may manipulate the alignment process to more closely align the final policy with their own beliefs and goals. For example, \citet{siththaranjan2023distributional} highlighted how standard RLHF methods implicitly aggregate preferences using the Borda count voting rule, which can create incentives for annotators to misreport their preferences to influence model behavior. 

Another body of work considers robustness against adversarial corruptions in RLHF. \citet{mandal2024corruption} assume that an $\varepsilon$-fraction of samples is adversarially manipulated, allowing for both manipulation of trajectory features and preference labels. Similarly, \citet{bukharin2024robust} and \citet{cheng2024rime} also consider the case where a fraction of samples is manipulated but restrict their attention to adversaries flipping preferences. This line of work differs from ours notably in both perspective (strategic vs.\ adversarial) and techniques (mechanism design vs.\ robust offline RL).

\paragraph{Mechanism Design for RLHF.} Recently, several works have incorporated mechanism design principles into RLHF to incentivize truthful feedback~\cite{park2024rlhf, soumalias2024truthful, sun2024mechanism}.
These approaches design payment rules to align labelers' incentives, often extending VCG-style mechanisms to the RLHF problem.
In contrast, we propose a strategy-robust RLHF method that does not rely on payments or other financial incentives, which are often impractical in real-world applications.
Also closely related is the work of \citet{hao2024online}, which studies an online RLHF framework where labelers sequentially provide preference feedback, aggregated using a linearly weighted average. Their approach focuses on identifying the most accurate labeler over time and adjusting the weights to incentivize truthful reporting.
In contrast, we study the offline RLHF setting and do not impose a linear weighting assumptions on labelers. Moreover, unlike~\citep{hao2024online}, we assume that labelers seek to influence the final policy rather than just an estimate of the aggregated preferences, which might better reflect real-world strategic behavior, where individuals care about the actual policy outcomes rather than intermediate preference estimates. In particular, as we will see, a more closely aligned reward function does not imply a more aligned policy. 



\section{Problem Formulation}\label{section:setting}

We consider episodic Markov Decision Processes (MDPs) and the special case of contextual bandits. Let $\cM = (\cS, \cA, \cP, H, \rho)$ be an MDP without reward function, where $\cS$ is the state space, $\cA$ is the action space, $H$ is the horizon and $\rho$ is the initial state distribution. $\cP = (\cP_1, \dots, \cP_H)$ denotes the tuple of transition functions, where $\cP_h \colon \cS \times \cA \to \Delta(\cS)$ determines the transitions in step $h \in [H]$.  
A history-independent policy $\pi = (\pi_h)_{h \leq H}$ maps from states to a distribution over actions in every time step, i.e., $\pi_h \colon \cS \to \Delta(\cA)$, and we let $\Pi$ denote its policy space. A trajectory in MDP $\cM$ is given by a sequence of actions and states $\tau = (a_1, s_2, a_2, \dots, s_H, a_H)$.  
The MDP reduces to a contextual bandit problem when $H=1$, in which case a trajectory consists only of the action taken in the initial state and the initial states are interpreted as contexts sampled from~$\rho$.  


\paragraph{Multiple Labelers with Diverse Preferences.} 
We consider the situation where $k \geq 1$ many labelers provide preference data to the RLHF algorithm. 
In particular, each labeler $i \in [k]$ is associated with a reward function $r_i \colon \cS \times \cA \to \Reals$. 
%
The expected return of a policy $\pi$ w.r.t.\ a reward function $r$ is given by $\V_{r}^\pi (s) \defeq \E \big[\sum_{h=1}^H r (s_h, \pi_h(s_h)) \mid s_1 = s \big]$.\footnote{With slight abuse of notation we let $r(s, \pi(s)) = \E_{a \sim \pi(\cdot \mid s)}[r(s, a)]$ if the policy $\pi$ is stochastic.} Accordingly, we define the \emph{utility} of labeler $i \in [k]$ w.r.t.\ the initial state distribution $\rho$ and a policy $\pi$ as 
\begin{align*}
    \util_i (\rho, \pi) \defeq \E_{s \sim \rho} \left[ \V_{r_i}^\pi (s) \right].  
\end{align*}
Note that this simplifies to $\util_i(\rho, \pi) = \E_{s \sim \rho} [r_i(s, \pi(s))]$ in the contextual bandit. 

We focus on the linearly realizable case, where the reward function of each labeler is a linear function $r_{\theta} (s, a) = \langle \theta,\phi(s, a) \rangle$ of a known feature embedding $\phi$ of the state (i.e., context) and action.
\begin{assumption}[Linear Realizability] 
    Every labeler's reward function $r_i$ is given by a linear function $r_{\theta_i^*} (s, a) \defeq  \langle \theta_i^*, \phi(s, a) \rangle$. Here, the reward parameter $\theta_i^*$ is sampled from $\{ \theta \in \Reals^d \colon \lVert \theta \rVert_2 \leq B\}$ with $B > 0$ and $\phi$ is a known mapping with $\lVert \phi(s, a) \rVert_2 \leq L$ for all $(s, a) \in S \times A$. 
\end{assumption}

\paragraph{Offline RLHF.} 
We focus on the offline RLHF setting, where each labeler $i \in [k]$ is given a pre-determined set of $n$ examples $\smash{(s^{i,j}, \tau_0^{i,j}, \tau_1^{i,j})}$ indexed by $j \in [n]$ 
where $\smash{s^{i,j}}$ denotes the initial state and $\smash{(\tau_0^{i,j}, \tau_1^{i,j})}$ are two subsequent trajectories. 
For each such example, labeler $i$ provides a preference label $\smash{o^{i,j} \in \{0,1\}}$, where the label $\smash{o^{i,j} = 0}$ means that trajectory $\smash{\tau_0^{i,j}}$ is preferred over $\smash{\tau_1^{i,j}}$ given initial state $\smash{s^{i,j}}$, and vice versa.\footnote{To ease notation, we assume that every labeler provides preferences for the same number of examples $n$. This can be straightforwardly relaxed at the cost of additional notation and slightly more cumbersome statements.}

We employ the widely used Bradley-Terry (BT) model under which a labeler with a reward parameter~$\theta$ (i.e., reward function $r_\theta$) prefers trajectory $\tau_0 = (a_1, s_2, a_2, \dots, s_H, a_H)$ over trajectory $\tau_1 = (\tilde{a}_1, \tilde{s}_2, \tilde{a}_2, \dots, \tilde{s}_H, \tilde{a}_H)$ with probability  
\begin{align}\label{eq:BTL_trajectories}
    \mathbb{P}_{\theta} (o=0 \mid s, \tau_0, \tau_1) \coloneqq 
    \frac{\exp(r_{\theta}(s, \tau_0))}{\exp(r_{\theta}(s, \tau_0)) + \exp(r_{\theta}(s, \tau_1))},
\end{align}
where  $r_{\theta}(s, \tau_0) \coloneqq \sum_{h=1}^{H} r_{\theta}(s_h, a_h)$
is the total reward of the trajectory $\tau_0$ given initial state $s_1 = s$.  
In a contextual bandit, where each trajectory consists of a single action only, i.e., $\tau_0 = a_0$ and $\tau_1 = a_1$, the comparison model conveniently simplifies to 
$\mathbb{P}_{\theta} (o= 0 \mid s, a_0, a_1) \propto \exp(r_{\theta}(s, a_0))$. 

\paragraph{Strategic Preference Labeling.}

We assume that \emph{if} labeler $i \in [k]$ provides their preferences \emph{truthfully}, then the preference labels $o^{i,j}$ are sampled with respect to their true reward function $r_{\theta_i^*}$. 
Thus, the collected preference dataset under truthful labeling is given by
\begin{align*}
    \mathcal{D}_i^* = (s^{i,j}, \tau_0^{i,j}, \tau_1^{i,j}, o^{i,j})_{j \leq n} \, \text{ with } \, 
    o^{i,j} \sim \mathbb{P}_{\theta_i^*}(\cdot \mid s^{i,j}, \tau_0^{i,j}, \tau_1^{i,j}).
\end{align*}
Since each labeler aims to more closely align the final policy with their personal preferences, a labeler may strategically manipulate labels to maximize their utility $J_i$. 
In this case, we assume that labeler $i$ samples preference labels according to a manipulated reward function $\smash{r_{\tilde{\theta}_i}}$: 
\begin{align*}
    \tilde{\mathcal{D}}_i = (s^{i,j}, \tau_0^{i,j}, \tau_1^{i,j}, \tilde{o}^{i,j})_{j \leq n}  \, \text{ with } \,    
    \tilde{o}^{i,j} \sim \mathbb{P}_{\tilde{\theta}_i} (\cdot \mid s^{i,j}, \tau_0^{i,j}, \tau_1^{i,j}).
\end{align*}
Note that the examples $(s^{i,j}, \tau_0^{i,j}, \tau_1^{i,j})$ remain fixed and only the preference labels change.\footnote{While we choose to present our results for mislabeling within the class of BT models for ease of presentation, we want to highlight that our results on strategyproofness in Section~\ref{sec:strategyproofness} extend beyond the BT model and apply also when labelers misreport according to arbitrary preference distributions.}

Given a reported preference dataset $\mathcal{D} = (\mathcal{D}_1, \dots, \mathcal{D}_k)$, an RLHF algorithm computes a policy $\hat{\pi}_{\text{RLHF}} (\mathcal{D}) \in \Pi$. 
We omit the argument $\mathcal{D}$ when the dataset is clear from context. We want to highlight that it is unknown to the RLHF algorithm (and impossible to tell) whether the preference labels in the dataset were truthfully reported or not. 

We can now define what it means for an RLHF algorithm to be robust against strategic manipulation (in the incentive alignment sense). In short, an RLHF method is \emph{strategyproof} if truthfulness is an optimal strategy for every labeler irrespective of what the other labelers report. 

\begin{definition}[Strategyproofness]
    We say that the mapping $\hat\pi_{\rlhf}(\cD)$ is strategyproof if for all $i \in [k]$, other labelers' data $\cD_{-i} = (\cD_1, \dots, \cD_{-i}, \cD_{i+1}, \dots, \cD_k)$ and deviation $\tilde \theta_i \neq \theta_i^*$ it holds that 
    \begin{align*}
        \E_{o^{i,j} \sim \P_{\theta_i^*}} \left[J_i \big(\hat \pi_{\rlhf}(\cD_i^*, \cD_{-i})\big) \right] \geq \E_{\tilde o^{i,j} \sim \P_{\tilde \theta_i}} \left[J_i \big(\hat \pi_{\rlhf}(\tilde \cD_i, \cD_{-i})\big) \right]. 
    \end{align*}
\end{definition}
This property is also commonly referred to as \emph{dominant strategy incentive compatibility}.  

We can relax the strict incentive constraint by allowing labelers to have a limited incentive to misreport, which provides us with the notion of $\varepsilon$-strategyproofness.

\begin{definition}[$\varepsilon$-Strategyproofness]
    We say that the mapping $\pi_{\rlhf}(\cD)$ is $\varepsilon$-strategyproof with $\varepsilon > 0$ if for all $i \in [k]$, other labelers' data $\cD_{-i}$ and deviation $\tilde \theta_i \neq \theta_i^*$ it holds that 
    \begin{align*}
        \E_{o^{i,j} \sim \P_{\theta_i^*}} \left[J_i \big(\pi_{\rlhf}(\cD_i^*, \cD_{-i})\big) \right] \geq \E_{\tilde o^{i,j} \sim \P_{\tilde \theta_i}} \left[J_i \big(\pi_{\rlhf}(\tilde \cD_i, \cD_{-i})\big) \right] - \varepsilon. 
    \end{align*}
\end{definition}
\looseness=-1
A few comments are in place.
The careful reader might wonder why we define strategyproofness at the \emph{distributional level (ex ante)} rather than at the level of realized preference labels, e.g., by allowing labelers to flip preferences \emph{after} sampling.  
The reason is that defining misreporting at the level of preference realizations instead of preference distributions would blur the line between strategic manipulation and post hoc noise correction, which is not the focus of our analysis. One can imagine that even a (conceptually) perfectly strategyproof algorithm would incentivize the labelers to flip preference realizations post hoc in an attempt to better teach the algorithm their reward function by correcting noise.   
Defining the labelers' strategies over preference distributions instead of realizations ensures a more meaningful comparison between truthful and strategic labeling and avoids these complications. 



\paragraph{Learning Objective.} 
We assume here that the set of labelers is representative of the population whose preferences we wish to to align to. Our objective is then to compute a policy maximizing the \emph{average social welfare} given by
\begin{align*}
    \socialwelfare (\rho, \pi) & \defeq \frac{1}{k} \sum_{i=1}^k J_i(\rho, \pi) .
\end{align*} 
In the following, we omit $\rho$ whenever the initial state distribution is clear from the context.  
Let $\pi^* \defeq \argmax_{\pi \in \Pi} \sw (\rho, \pi)$ be the optimal policy maximizing social welfare. The \emph{suboptimality} of a policy $\pi$ is defined as
\begin{align*}
    \subopt(\rho, \pi) \defeq \socialwelfare(\rho, \pi^*) - \socialwelfare (\rho, \pi) . 
\end{align*} 

In addition to this standard notion of suboptimality, it can also be insightful to consider the multiplicative \emph{approximation ratio} of a policy $\pi$ that is frequently studied in the computational social choice literature and given by the ratio 
\begin{align*}
    \alpha(\rho, \pi) \defeq \frac{\socialwelfare (\rho, \pi)}{\socialwelfare (\rho, \pi^*)}.
\end{align*}
By definition, this ratio satisfies $\alpha(\rho, \pi) \leq 1$ and the larger the ratio the better the policy. 
In the following, we primarily use the approximation ratio as a secondary metric to understand the convergence behavior of an RLHF method, i.e., when the number of samples is sufficiently large. 

\subsection{Existing RLHF is not Strategyproof} 
Unsurprisingly, we find that existing RLHF algorithms are not strategyproof.  
Exemplarily, we consider two recently proposed RLHF methods for learning from diverse human preferences~\citep{zhong2024provable, chakrabortymaxmin}. 
Whereas \citet{zhong2024provable} aims to maximize social welfare like we do, \citet{chakrabortymaxmin} consider a maximin objective, that is, they wish to maximize the worst-case utility across all labelers. While this is different from the social welfare objective that we consider, it does not prevent us from analyzing the strategyproofness of their algorithm or lack thereof.  

\begin{restatable}{proposition}{propexistingmethodsnotstrategyproof}
\label{prop:existing_methods_not_strategyproof}
    Existing RLHF methods such as
    Pessimistic Social Welfare~\citep{zhong2024provable} and MaxMin-RLHF \citep{chakrabortymaxmin} are not strategyproof. 
\end{restatable}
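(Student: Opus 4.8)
The plan is to prove non-strategyproofness by exhibiting, for each of the two methods, a small instance (it suffices to use a contextual bandit, $H=1$) where some labeler strictly benefits from a specific misreport. Since strategyproofness is a universally quantified statement, a single counterexample per method suffices. I would work with $k=2$ labelers to keep things minimal, and ideally design the instance so that the construction is essentially the same for both algorithms, since both rely on maximum-likelihood reward estimation from the reported preferences followed by some aggregation (a welfare-pessimistic step for \citet{zhong2024provable}, a maximin step for \citet{chakrabortymaxmin}). The key mechanism to exploit is that each algorithm aggregates the \emph{estimated} reward parameters (or value functions) of the labelers in a way that is monotone and responsive to each labeler's reported preferences: by shifting one's reported $\tilde\theta_i$ away from $\theta_i^*$, a labeler can tilt the aggregate — and hence the output policy — toward a region of action space that it prefers, at only a second-order cost to itself near the truthful point, or even no cost at all when the truthful policy is interior.

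Concretely, I would set up a bandit with a single context and a small number of actions (two or three), feature vectors $\phi(a)$ chosen so that labeler $1$ and labeler $2$ have directly opposing preferences over the actions, and a comparison dataset $\cD_1, \cD_2$ rich enough that the MLE is well-defined and, under truthful reporting, recovers (in expectation, or in the large-sample limit) the true parameters. Under truthful play the welfare-maximizing / maximin policy will be some compromise action (or mixture). I would then show that if labeler $1$ reports $\tilde\theta_1$ that exaggerates its preference — e.g.\ scaling up the component along its preferred direction, or in the maximin case reporting a lower overall reward magnitude so the maximin step weights it more heavily — the resulting aggregate shifts the chosen policy strictly toward labeler $1$'s preferred action, strictly increasing $J_1$. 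The inequality in the definition of strategyproofness is then violated for $i=1$. For the \citet{zhong2024provable} pessimistic-social-welfare method I would use the fact that the pessimistic penalty and the averaging of estimated values both respond monotonically to the reported parameter; for MaxMin-RLHF I would exploit scale-sensitivity of the maximin rule — a labeler reporting a uniformly smaller-magnitude reward makes itself the ``bottleneck'' and drags the policy its way.

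The main obstacle, and where the real care is needed, is making the perturbation argument rigorous despite the randomness in the sampled preference labels and the implicit/nonclosed-form nature of the MLE and of the downstream optimization. I would handle this by either (i) passing to the population/infinite-sample limit where the MLE equals the true (reported) parameter and the output policy is a deterministic, explicitly analyzable function of the reported $\tilde\theta$'s — then a compactness/continuity argument extends the strict gain to all sufficiently large finite $n$ — or (ii) choosing the instance so degenerate (e.g.\ feature geometry making the optimal policy a vertex of the simplex that flips discretely under the perturbation) that the expected-utility gap is bounded below by a constant uniformly in $n$. A secondary technical point is to double-check the exact definitions of the two competitor algorithms (their confidence sets, penalty scaling, and tie-breaking) so the claimed monotone response is genuinely present; I expect this to be routine once the algorithms are written out, but it is the step most prone to hidden edge cases, so I would state the instance explicitly and verify the strict inequality by direct computation in the chosen limit.
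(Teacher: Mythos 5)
Your proposal takes essentially the same route as the paper: a single explicit contextual-bandit counterexample with $k=2$ labelers, analyzed in the large-sample limit where the MLE recovers the reported parameter, showing that one labeler strictly gains by misreporting so as to tilt the aggregate (welfare-averaging or maximin) toward its preferred action. The paper's concrete instances ($\theta_1^*=(1,0)$, two actions with $\phi(s,a)=(1/2,1/2)$, $\phi(s,b)=(3/4,0)$, misreport $\tilde\theta_1=(1,-1)$) realize exactly your plan, including the ``make yourself the bottleneck'' mechanism for MaxMin-RLHF, so the only remaining work in your version is writing down the numbers.
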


Next, we wish to understand what consequences being manipulable has on the policy performance of the RLHF algorithm. After all, one could imagine failing to guarantee strategyproofness but still learning a nearly optimal policy. This is in general not the case and we show that the performance can degrade arbitrarily in the worst-case even if only a single labeler is strategic. 
We show this at the example of the Pessimistic Social Welfare approach from~\citet{zhong2024provable}. 

\begin{restatable}{proposition}{propsuboptpessimisticsocialwelare}
\label{prop:subopt_pessimistic_social_welare}
    Let at least one out of the $k$ labelers report strategically. Let $\hat \pi$ denote the output of the Pessimistic Social Welfare algorithm~\citep{zhong2024provable}. Recall that $\lVert \theta \rVert_2 \leq B$ and $\lVert \phi(s, a) \rVert_2 \leq L$. 
    In the worst-case, for $n$ sufficiently large, the social welfare of $\hat \pi$ is upper bounded as 
    $\sw(\hat \pi) \leq \varepsilon$, whereas the optimal social welfare is at least $\sw(\pi^*) \geq BL-2\varepsilon$ for any $\varepsilon > 0$. 
    Hence, the suboptimality of Pessimistic Social Welfare is lower bounded by $\subopt (\hat \pi) \geq BL-3\varepsilon$. 
    
    In other words, the policy learned by Pessimistic Social Welfare can be almost arbitrarily bad. 
\end{restatable}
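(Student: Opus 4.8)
The plan is to construct an explicit instance --- a contextual bandit suffices --- in which a single strategic labeler can drive the pessimistic social welfare estimate to zero on the policy that is actually good, while a policy that is bad for social welfare looks artificially attractive. The key lever is that Pessimistic Social Welfare \citep{zhong2024provable} computes, for each labeler, a confidence set around the MLE reward parameter and then optimizes the \emph{worst-case} social welfare over those sets (pessimism). A strategic labeler who reports labels drawn from a manipulated $\tilde\theta_i$ shifts their confidence set; if the shift is arranged so that, for the genuinely good policy $\pi^*$, labeler $i$'s worst-case-in-confidence-set return becomes very negative (say $\le -BL + \varepsilon$ after rescaling), then the averaged pessimistic objective at $\pi^*$ is dragged down by roughly $\tfrac{1}{k}(\text{large negative})$, and the algorithm will prefer some alternative policy $\hat\pi$ whose true social welfare is at most $\varepsilon$.

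First I would set up a two-context (or even one-context, two-action) bandit with $d = 2$ so that the feature geometry is transparent: pick feature vectors $\phi(s,a)$ for a handful of actions so that there is an action $a^\star$ with $\sw(\pi \equiv a^\star) \ge BL - 2\varepsilon$ under truthful reports of all labelers, and a ``decoy'' action $a^\circ$ with social welfare near $0$. Second, I would specify the honest labelers $2,\dots,k$ to have reward parameters making $a^\star$ collectively optimal, and let the strategic labeler $1$ deviate to a $\tilde\theta_1$ that is (nearly) antipodal in the direction that distinguishes $a^\star$ from $a^\circ$. Third, I would invoke concentration of the MLE: for $n$ sufficiently large, the estimated $\hat\theta_1$ concentrates around $\tilde\theta_1$ and the confidence radius shrinks, so the worst-case return of $a^\star$ under labeler $1$'s confidence set is $\le -BL + O(\varepsilon) + o(1)$, hence $\le -BL+2\varepsilon$ once $n$ is large. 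Fourth, I would lower-bound the pessimistic objective gap: the pessimistic value of $\hat\pi$ at best equals its true social welfare plus lower-order terms, so if the algorithm's output had $\sw(\hat\pi) > \varepsilon$ it would have to beat $\pi^*$'s pessimistic value, but the $-BL$ contribution from labeler $1$ makes even the decoy $a^\circ$ win; conclude $\sw(\hat\pi) \le \varepsilon$. Combining the two bounds gives $\subopt(\hat\pi) = \sw(\pi^*) - \sw(\hat\pi) \ge (BL - 2\varepsilon) - \varepsilon = BL - 3\varepsilon$.

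The main obstacle, and the part that needs care rather than a slogan, is the interplay between the confidence-set radius and the number of labelers $k$: the strategic labeler contributes only a $1/k$ fraction of the averaged objective, so the manipulated value must be \emph{large} (on the order of $kBL$ in magnitude before averaging, or I must exploit that $\|\theta\|_2 \le B$ bounds each labeler's honest contribution too) to overwhelm the other $k-1$ honest terms. The clean fix is to choose the geometry so that honest labelers are \emph{indifferent} between $a^\star$ and $a^\circ$ (their rewards agree on the distinguishing feature direction), so that social welfare under truthful reporting is determined only up to labeler $1$, and labeler $1$'s manipulation then flips the collective choice with a confidence-set margin that only needs to exceed lower-order estimation error --- no $k$-dependence required. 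A secondary technical point is making ``$n$ sufficiently large'' precise: I would cite the standard MLE concentration bound for the Bradley--Terry model (the same one underlying \citep{zhong2024provable}'s guarantee) to get that the MLE error and confidence radius are $\tilde{O}(\sqrt{d/n})$, which is $\le \varepsilon$ for $n \gtrsim d/\varepsilon^2$, and I would state the required coverage condition on the fixed comparison pairs so that this concentration actually applies.
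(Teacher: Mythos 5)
Your overall template---an explicit contextual-bandit instance with a good action and a decoy, a single manipulator, and $n$ large enough that MLE concentration makes the confidence sets negligible---matches the paper's proof. But your ``clean fix'' for the $1/k$ dilution problem is where the argument breaks. If the $k-1$ honest labelers are \emph{indifferent} between $a^\star$ and $a^\circ$, the entire welfare gap $\sw(a^\star)-\sw(a^\circ)$ must come from labeler~$1$'s own true utility. Under the paper's definition $\sw=\tfrac{1}{k}\sum_i J_i$, achieving $\sw(a^\star)\ge BL-2\varepsilon$ and $\sw(a^\circ)\le\varepsilon$ would force $J_1(a^\star)-J_1(a^\circ)\ge k(BL-3\varepsilon)$, which is impossible since $\abs{J_1}\le BL$ (and the common value of the indifferent labelers cannot be simultaneously large enough to lift $a^\star$ and small enough to sink $a^\circ$). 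Even setting normalization aside, your construction makes labeler~$1$ sacrifice essentially its entire utility range by deviating, so the report is not \emph{strategic} in the sense the proposition (and the paper's whole framing, which contrasts strategic with adversarial labelers) requires.

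The paper resolves the dilution issue with two ingredients, both of which you need. First, the honest labelers are \emph{not} indifferent: they all strictly prefer $a^\star$, but their true parameters are scaled to $\theta_i^*=(\tfrac{B}{k-1},0,0)$ so that their aggregate pull toward $a^\star$ has magnitude exactly $B$---the same magnitude a single manipulator is permitted to claim. Second, the manipulator lies in a feature coordinate that carries \emph{zero} true reward for every labeler, including itself: with $\phi(s,a^\star)=(\sqrt{L^2-2\varepsilon^2},0,0)$, $\phi(s,a^\circ)=(0,\varepsilon,\sqrt{L^2-\varepsilon^2})$, $\theta_1^*=(0,1,0)$ and the report $\tilde\theta_1=(0,0,B)$, the estimated welfare of $a^\circ$ becomes about $B\sqrt{L^2-\varepsilon^2}$, just beating $a^\star$'s $B\sqrt{L^2-2\varepsilon^2}$, while the \emph{true} welfare of $a^\circ$ is only of order $\varepsilon$ and labeler~$1$'s own utility rises from $0$ to $\varepsilon$. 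This simultaneously makes the manipulation individually rational and yields $\subopt(\hat\pi)\ge BL-3\varepsilon$; your antipodal-report construction delivers neither property.
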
 
\begin{proof}[Proof Sketch] 
    We provide a simple example for a contextual bandit where the first labeler strongly disagrees with all other labelers, but can exert significant influence on the computed policy by overstating its preference in a dimension of the features that is otherwise irrelevant to all labelers' utility (i.e., $\theta_i^*$ is zero in said dimension for all labelers). 
\end{proof}



\subsection{Inherent Limitations of Strategyproof RLHF} 

We have seen that existing RLHF approaches are not strategyproof, but can be manipulated by labelers to the detriment of policy alignment with social welfare. 
We now also show that any RLHF algorithm that satisfies strategyproofness must suffer at least constant suboptimality (irrespective of the number of samples or policy coverage) and has an approximation ratio of at most $1/k$. 
We thereby face a fundamental trade-off between incentive alignment (strategyproofness) and policy alignment (social welfare maximization) in RLHF with strategic preference labeling.  




\begin{restatable}{theorem}{theoremimpossibility}
\label{theorem:impossibility}
The output $\hat \pi$ of any {strategyproof} RLHF algorithm has worst-case expected suboptimality at least $\subopt(\hat \pi) \geq \frac{k-1}{k}$, where $k$ denotes the number of labelers. 
\end{restatable}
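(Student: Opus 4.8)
The plan is to construct a family of RLHF instances on which any strategyproof algorithm is forced to be suboptimal, and then average over the family to extract a worst-case bound. I would work in the simplest possible setting: a contextual bandit with a single context, two actions, $H=1$, and $d$ at least as large as $k$, so that I can give each labeler their own ``private'' feature coordinate. Concretely, fix $k$ orthonormal directions $e_1,\dots,e_k$ and consider instances where labeler $i$ has reward parameter $\theta_i^* = B\, e_{\sigma(i)}$ for some assignment $\sigma$, and a second action whose feature is $\phi(s,a_1)=0$ while $\phi(s,a_0)=e_{\sigma(i)}$-aligned in a way that makes action $a_0$ the one labeler $i$ wants. The key point, inherited from the proof sketch of Proposition~\ref{prop:subopt_pessimistic_social_welare}, is that on a coordinate relevant only to labeler $i$, that labeler can by misreporting move the BT-distribution of their labels \emph{arbitrarily}, so from the algorithm's perspective labeler $i$'s reported data carries essentially no verifiable information tying it to $\theta_i^*$.

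The heart of the argument is a coupling/indistinguishability step. I would pick two instances that differ only in which labeler ``owns'' the pivotal coordinate — say instance $A$ where labeler $1$ benefits from action $a_0$ and instance $B$ where labeler $1$'s reward is flipped — arranged so that the social-welfare-optimal policy differs between them (choosing $a_0$ vs.\ $a_1$), but such that there is a deviation $\tilde\theta_1$ making labeler $1$'s \emph{reported distribution} in instance $A$ coincide with their truthful distribution in instance $B$ (and vice versa). Since the other labelers' data is identical across the two instances, strategyproofness applied to labeler $1$ in both instances yields two inequalities: truthful reporting in $A$ must (weakly) beat the deviation that mimics $B$, and truthful reporting in $B$ must beat the deviation that mimics $A$. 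Adding these forces $\E[J_1(\hat\pi(A))] + \E[J_1(\hat\pi(B))]$ to be controlled, which — because $\hat\pi$ sees literally the same input distribution from labeler $1$ in the two relevant scenarios — means $\hat\pi$ cannot simultaneously pick the welfare-optimal action in both $A$ and $B$. Quantitatively, on at least one of the two instances the algorithm's policy must, in expectation, lose the full gap contributed by labeler $1$; since each labeler contributes a $1/k$ share to the average social welfare and the remaining $k-1$ labelers agree on the welfare-optimal action worth value $1$ each, the realized welfare is at most $(k-1)/k$ of optimal, giving $\subopt(\hat\pi)\ge (k-1)/k$. Normalizing $B,L$ appropriately (or absorbing them) yields the stated constant.

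The main obstacle I anticipate is making the indistinguishability precise at the distributional level: I need a single fixed tuple of examples $(s^{1,j},\tau_0^{1,j},\tau_1^{1,j})_{j\le n}$ on which the truthful BT label distribution under one instance's $\theta_1^*$ exactly equals the manipulated BT label distribution under a valid $\tilde\theta_1$ with $\|\tilde\theta_1\|_2\le B$ for the other instance — and to do this without letting the shift leak into coordinates that the algorithm could cross-check against other labelers. Engineering the features so that the pivotal coordinate is orthogonal to every $\theta_j^*$ for $j\ne 1$ (the ``otherwise irrelevant dimension'' trick) is what buys this, but one must check the BT probabilities genuinely match rather than merely being close; if exact matching is too rigid under the norm constraint, the fallback is to use the $\varepsilon$-strategyproofness slack is not available here (we want the hard bound), so instead I would take a limiting sequence of instances where the matching becomes exact in the limit, and pass the suboptimality bound through the limit. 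A secondary bookkeeping point is handling the expectation over the random labels $o^{1,j}$: since $\hat\pi$ is a fixed (possibly randomized) map of the dataset, the two expectations $\E_{o\sim\P_{\theta_1^*(A)}}[J_1(\hat\pi(\cdot))]$ and $\E_{\tilde o\sim\P_{\tilde\theta_1}}[J_1(\hat\pi(\cdot))]$ are integrals of the \emph{same} function against the \emph{same} measure once the distributions are matched, which is exactly what makes the summed strategyproofness inequalities collapse to the desired rigidity.
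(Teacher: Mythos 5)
Your proposal takes a genuinely different route from the paper --- the paper reduces the problem to voting and invokes the Gibbard--Satterthwaite theorem (and Gibbard's randomized generalization, mixtures of dictators and duples) over instances with many actions --- but your route has a gap that I do not think can be repaired within the two-instance, two-action framework you describe.

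First, the central deduction does not go through. Write $D^A, D^B$ for the dataset distributions of the two instances and suppose, as you arrange, that labeler $1$'s reward is flipped between them, $J_1^B = -J_1^A$, while $\cD_{-1}$ is unchanged and the truthful distribution in $A$ coincides with a feasible deviation in $B$ (and vice versa). Strategyproofness then gives $\E[J_1^A(\hat\pi(D^A))] \geq \E[J_1^A(\hat\pi(D^B))]$ and $\E[J_1^B(\hat\pi(D^B))] \geq \E[J_1^B(\hat\pi(D^A))]$; but the second inequality is literally the first one multiplied by $-1$ and reversed, so ``adding them'' yields a tautology rather than a constraint. What survives is a single monotonicity statement, and that statement is perfectly consistent with $\hat\pi$ selecting the welfare-optimal action in \emph{both} instances: since only labeler $1$'s reward changes between $A$ and $B$ and this is what flips the welfare-optimal action, labeler $1$ is pivotal, so the welfare-optimal action in each instance is exactly labeler $1$'s favorite there, and a rule that follows the pivotal labeler satisfies your inequality with room to spare. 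Pairwise swap arguments of this kind establish monotonicity of strategyproof rules; they do not by themselves yield impossibility.

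Second, the restriction to two actions is fatal for the target constant. With two alternatives, the rule ``estimate each labeler's preferred action from their reported data and take a majority vote'' is strategyproof (misreporting can only make your less-preferred action more likely to win), and its additive suboptimality is at most $1/2$ after normalization: at most $k/2$ labelers prefer the losing alternative, each contributing at most $1/k$ to the welfare gap. Hence no family of two-action instances can certify a lower bound of $\tfrac{k-1}{k} > \tfrac12$ for $k \geq 3$. This is why the paper's proof must pass through Gibbard--Satterthwaite: the impossibility is a global combinatorial fact about strategyproof aggregation over \emph{three or more} alternatives (dictatorships and duples being the only options), not a local fact about two indistinguishable profiles. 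Your indistinguishability gadget (the irrelevant-coordinate trick, the limiting argument for exact BT matching) is sound as far as it goes, but it would need to be embedded in an argument that quantifies over the full profile space with many actions --- at which point you are essentially reproving Gibbard--Satterthwaite.
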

\begin{proof}[Proof Sketch] 
We can map each RLHF instance to a voting problem and map $\hat{\pi}$ to a decision rule $f$ for the latter, such that $f$ always outputs the same alternative (or distribution of alternatives) as $\hat{\pi}$ does.
This construction ensures that if $\hat{\pi}$ is stratgyproof, then $f$ is, too.
The Gibbard–Satterthwaite theorem \cite{gibbard1973manipulation,satterthwaite1975strategy} says that any strategyproof rule must be either a dictatorial rule or a ``duple'', i.e., either it always selects the most preferred alternative of a fixed voter, or selects among a fixed pair of alternatives.
Hence, if $\hat{\pi}$ is strategyproof, it must behave either as a dictatorial rule, always selecting the most preferred action of a fixed labeler, or as a duple, always selecting the outcome among a fixed pair of actions.
The former case leads to low social welfare values for instances in which all the other labelers' rewards are negatively correlated with that of the fixed labeler. 
The latter leads to low social welfare values for instances in which the fixed pair of actions have almost zero value to all labelers. 
In both cases, the suboptimality gaps are at least $(k-1)/k$.
\end{proof} 
Theorem~\ref{theorem:impossibility} implies that even with infinitely many samples, no strategyproof RLHF algorithm converges to the optimal policy in the worst case. This is also reflected in the following upper bound on the multiplicative approximation ratio of any strategyproof algorithm.   
\begin{restatable}{corollary}{corapproximationratio}
\label{cor:approximation_ratio}
    The approximation ratio of any strategyproof RLHF method is $\alpha(\rho, \hat \pi) \leq \frac{1}{k}$.  
\end{restatable}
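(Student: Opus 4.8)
The plan is to derive the approximation-ratio bound directly from the construction already used in the proof of Theorem~\ref{theorem:impossibility}, rather than from the additive suboptimality bound itself (since an additive gap of $(k-1)/k$ does not immediately translate into a multiplicative ratio without control on the scale of $\socialwelfare(\rho,\pi^*)$). Concretely, I would revisit the two families of hard instances that the Gibbard--Satterthwaite dichotomy forces a strategyproof $\hat\pi$ into: the dictatorial case and the duple case. In the dictatorial case, $\hat\pi$ always selects the most-preferred action of some fixed labeler $i^*$; I would instantiate an instance in which labeler $i^*$'s reward is (nearly) orthogonal to, or anti-correlated with, the rewards of the other $k-1$ labelers, so that the action maximizing $r_{i^*}$ contributes essentially nothing to $\socialwelfare$, while the social-welfare-optimal action $\pi^*$ has value bounded away from zero. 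In the duple case, $\hat\pi$ always chooses among a fixed pair of actions $\{a,\bar a\}$; I would pick an instance where both $a$ and $\bar a$ have (near-)zero value to every labeler, whereas some third action achieves positive social welfare, so again $\socialwelfare(\rho,\hat\pi)\approx 0$ while $\socialwelfare(\rho,\pi^*)>0$.

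The key steps, in order, are: (1) quote the Gibbard--Satterthwaite dichotomy and the reduction from $\hat\pi$ to a voting rule $f$ exactly as in the proof sketch of Theorem~\ref{theorem:impossibility}, so that $\hat\pi$ strategyproof implies it is dictatorial or a duple; (2) in the dictatorial case, build an explicit contextual-bandit instance (it suffices to work with $H=1$) with $k$ labelers where the $k-1$ non-dictators all share a reward direction $\theta$ with $\|\theta\|_2=B$ while the dictator's optimal action lies in the $\theta=0$ subspace for all others, giving $\socialwelfare(\rho,\hat\pi) = \frac{1}{k}J_{i^*} \le \frac{1}{k}\socialwelfare(\rho,\pi^*)$ after normalizing so that $J_{i^*}$ at the chosen action equals the common per-labeler value the others attain at $\pi^*$; (3) in the duple case, build an instance where $\pi^*$ achieves strictly positive welfare and both actions in the fixed pair achieve welfare $\le \varepsilon$, so $\alpha \le \varepsilon/\socialwelfare(\rho,\pi^*) \to 0$, which is $\le 1/k$ for $\varepsilon$ small; (4) take the worse of the two cases and conclude $\alpha(\rho,\hat\pi) \le 1/k$ in the worst case.

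The main obstacle I anticipate is the dictatorial case: I need the dictator's forced action to actually be \emph{bad} for social welfare while simultaneously being the dictator's genuine top choice (so that no strategyproof rule can avoid it), and I need the normalization to yield a clean factor of exactly $1/k$ rather than something loose. The trick, as in Theorem~\ref{theorem:impossibility}, is to use an auxiliary feature dimension that is relevant only to the dictator: there the dictator's top action scores $+BL$ in that dimension, every other labeler is indifferent ($\theta_j^*$ has a zero coordinate there), and on the remaining dimensions one arranges an action that is good for the $k-1$ others and neutral for the dictator. Then $\socialwelfare(\rho,\pi^*) \ge \frac{k-1}{k}\cdot(\text{per-labeler value})$ while $\socialwelfare(\rho,\hat\pi) \le \frac{1}{k}\cdot(\text{per-labeler value})$, yielding $\alpha \le \frac{1}{k-1}$; pushing to exactly $1/k$ requires also making the dictator's own value at $\pi^*$ non-negative (e.g.\ zero) and the dictator's forced action contribute at most the same per-labeler unit, which the orthogonal-dimension construction delivers. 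The duple case is comparatively routine. I would also remark that, combined with Theorem~\ref{theorem:impossibility}, this shows the bound is essentially tight in both the additive and multiplicative senses.
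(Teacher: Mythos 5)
Your overall strategy is the same as the paper's: the corollary is obtained directly from the hard instances built in the proof of Theorem~\ref{theorem:impossibility} (the paper's proof literally reads ``direct consequence of Theorem~\ref{theorem:impossibility}''), and you are right that one must read off the ratio from those instances rather than try to convert the additive $(k-1)/k$ gap into a multiplicative one. Your duple case is fine, and your appeal to the Gibbard--Satterthwaite dichotomy (and its randomized extension) mirrors the paper.

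However, your dictatorial-case normalization is off in a way that matters for the stated constant. You propose making the dictator's value at $\pi^*$ equal to zero while the other $k-1$ labelers each get one unit there, and the dictator's forced action worth one unit to the dictator and zero to everyone else. That gives $\socialwelfare(\pi^*)=\tfrac{k-1}{k}$ and $\socialwelfare(\hat\pi)=\tfrac{1}{k}$, hence only $\alpha \le \tfrac{1}{k-1}$, which is strictly weaker than the claimed $\tfrac{1}{k}$. To get $\tfrac{1}{k}$ you need the opposite adjustment: the dictator must value the socially optimal action at \emph{nearly the same level} as its own top choice, so that all $k$ labelers contribute roughly one unit to $\socialwelfare(\pi^*)$ while only the dictator contributes to $\socialwelfare(\hat\pi)$. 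This is exactly what the paper's construction does by ranking $a_2$ second for the dictator with value $1-\delta$ and first for everyone else: the welfare of $a_2$ is $k-\delta$ versus welfare $1$ for the dictator's top action $a_1$, and letting $\delta\to 0$ yields the ratio $\tfrac{1}{k}$. With that correction (and the analogous bookkeeping in the randomized mixture-of-dictators-and-duples case, where the ratio comes out as $\tfrac{1}{k}+O(\epsilon)$), your argument goes through.
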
 
In other words, any strategyproof RLHF algorithm achieves $k$-times worse social welfare compared to the optimal policy in the worst case.

\section{Approximate Strategyproofness: Pessimistic Median of MLEs}\label{section:pessimistic_median}

We first consider the contextual bandit problem and discuss the extension to MDPs in Section~\ref{section:mdps}. 
Our previous Theorem~\ref{theorem:impossibility} suggests that without additional assumptions about the problem instance, we cannot reconcile strategyproofness with social welfare maximization. 
For this reason, we here introduce an additional assumption about the structure of the initial state distribution (i.e., context distribution) and the policy space.   


\begin{assumption}\label{assumption:hyperrectangle}
    The set $\{\E_{s \sim \rho}\left[ \phi(s, \pi(s)) \right] \colon \pi \in \Pi \}$ spans a hyperrectangle in $\Reals^d$. 
        
\end{assumption}
Specifically, in the simplest case when $\E_{s \sim \rho} [\phi(s, \pi(s))]\in [-1, 1]^d$, this means that for any $\bz \in [-1,1]^d$ there exists $\pi \in \Pi$ such that $\lVert \E_{s \sim \rho}\left[ \phi(s, \pi(s)) - \bz \right] \rVert_2 = 0$.\footnote{Without much additional difficulty we can relax this to $\lVert \E_{s \sim \rho}\left[ \phi(s, \pi(s)) - \bz \right] \rVert_2 \leq \varepsilon$ for some $\varepsilon > 0$ at the cost of additive expressions of order $\varepsilon$ in our results.}
 
We propose to use a median rule over learned reward parameters in combination with pessimistic estimates to achieve approximate strategyproofness while maximizing social welfare. 
To do so, we must first introduce a few key concepts and quantities. 

\paragraph{MLEs and Confidences.}
Let $\cD_i = (s^{i,j}, a_0^{i,j}, a_1^{i,j}, o^{i,j})_{1 \leq j \leq n}$ be the preference data reported by labeler $i \in [k]$ where $o^{i,j} \sim \P_{\theta_i}(\cdot \mid s^{i,j}, a_0^{i,j}, a_1^{i,j})$ is sampled from a BT model w.r.t.\ some (a priori) unknown and potentially manipulated reward parameter $\theta_i$.  
Given the observations $\cD_i$, the Maximum Likelihood Estimate (MLE) of $\theta_i$ is the maximizer of the log-likelihood 
\begin{align*}
    \hat \theta^\mle_i \in \argmax_{\theta} \sum_{j=1}^n \log \P_{\theta} \big(o^{i,j} \mid s^{i,j}, a_0^{i,j}, a_1^{i,j} \big).  
\end{align*} 
We wish to establish confidences around the MLE. To this end, let $x^{i,j} = \phi(s^{i,j}, a_0^{i,j}) - \phi(s^{i,j}, a_1^{i,j})$ and consider the covariance matrix $\Sigma_{\cD_i} = \tfrac{1}{n} \sum_{j=1}^n x^{i,j} (x^{i,j})^\top$. For convenience, we here assume that $\Sigma_{\cD_i}$ is positive definite. Otherwise, we can always consider $\Sigma_{\cD_i} + \lambda_i I$ for $\lambda_i > 0$, which has a negligible effect on our results when choosing $\lambda_i$ of order $\tfrac{d+ \log(1/\delta)}{n}$~(see, e.g., \citep{zhu2023principled}).  
The confidence ellipsoid around $\hat \theta^\mle_i$ is then given by  
\begin{align*}
    C_i \defeq \{ \theta \in \Reals^d \colon \lVert \hat \theta^\mle_i - \theta \Vert_{\Sigma_{\cD_i}} \leq f(d, n, \delta)\} .
\end{align*} 
It is well-known that when choosing $f(d, n, \delta) \approx \sqrt{\tfrac{d+\log(k/\delta)}{n}}$, it holds with probability at least $1-\delta$ that $\theta_i \in C_i$ (see Appendix~\ref{proof:thm_approx_strategyproof} for details).

\paragraph{Pessimistic Median Return.} A fundamental insight from social choice theory is that under certain conditions aggregating preferences according to a median rule is strategyproof, such as in resource allocation in one dimension~\citep{moulin1980strategy}.  
However, in our case, the \emph{high-dimensionality} of features and reward parameters, the \emph{uncertainty} about rewards, and the \emph{policy optimization} pose additional unique challenges that can cause a median rule to become manipulable by the labelers. 

To incorporate our uncertainty about the reward parameters, we consider the \emph{pessimistic median return} of a policy defined as the return of a policy w.r.t.\ the worst-case \emph{coordinate-wise median} over confidence sets $C_1, \dots, C_k$. In other words, we consider the worst-case performance of policies $\pi$ with respect to $\med (\theta_1, \dots, \theta_k)$, where $\med$ denotes the coordinate-wise median and $\theta_i$ is element in~$C_i$.  
We outline the Pessimistic Median of MLEs approach in Algorithm~\ref{algorithm:pessimistic_median_mle}.

\begin{figure*}[t] 
    \centering
    \begin{minipage}{\textwidth} 
        \begin{algorithm}[H] 
            \caption{Pessimistic Median of MLEs ({Pessimistic MoMLEs})}
            \label{algorithm:pessimistic_median_mle}
            \begin{algorithmic}[1]
                \STATE \textbf{input} offline preference data $\cD = (\cD_1, \dots, \cD_k)$ 
                \FOR{every labeler $i \in [k]$}
                \STATE compute the MLE $\hat \theta^\mle_i$ from $\cD_i$ 
                \STATE construct confidence set $C_i \defeq \{ \theta \in \Reals^d \colon \lVert \hat \theta^\mle_i - \theta \rVert_{\Sigma_{\cD_i}} \leq f(d, n, \delta) \}$ 
                \ENDFOR 
                \STATE get the median confidence set $\mathscr{C} \defeq \{ \med (\theta_1, \dots, \theta_k) \colon \theta_i \in C_i \, \text{ for } i\in [k]\}$ 
                \STATE compute the pessimistic median return w.r.t.\ $\mathscr{C}$ given by 
                $$
                \underline \socialwelfare (\pi) \defeq \min_{\theta \in \mathscr{C}} \E_{s \sim \rho} \left[ \langle \theta, \phi(s, \pi(s)) \rangle \right]
                $$
                \RETURN $\hat \pi (\cD) = \argmax_{\pi \in \Pi} \underline \socialwelfare (\pi)$
            \end{algorithmic}
        \end{algorithm}
    \end{minipage}
\end{figure*}

\subsection{Approximate Strategyproofness} \label{sec:strategyproofness}
We begin the analysis by showing that the Pessimistic Median of MLEs is approximately strategyproof. Perhaps surprisingly, the degree up to which the algorithm is strategyproof depends on the \emph{uniform policy coverage} of every labeler's data. We discuss this in more detail further below. 
\begin{restatable}{theorem}{theorempessimisticmedianapproxstrategyproof}
\label{theorem:pessimistic_median_approx_strategyproof}
    Pessimistic Median of MLEs is $\tilde \cO(\kappa_i \sqrt{d/n})$-strategyproof for labeler $i$, where $\kappa_i \defeq \max_{\pi \in \Pi} \lVert \E_{s \sim \rho} [\phi(s, \pi(s))]\rVert_{\Sigma_{\cD_i}^{-1}}$ is the uniform policy coverage of $\cD_i$.\footnote{Note that for any positive definite matrix $\Sigma$ and vector $x$, we can write $\lVert x \rVert_{\Sigma^{-1}} = \lVert \Sigma^{-1/2} x \rVert_2$. It is also worth noting that labeler $i$ cannot influence the coverage coefficient $\kappa_i$ as it only depends on the state-action pairs and not the preference labels. Hence, labeler $i$ has no influence on the incentive strength of the algorithm.}   
    
    More precisely, for every labeler $i \in [k]$, any other labelers' reports $\cD_{-i}$ and deviation $\tilde \theta_i \neq \theta_i^*$, with probability at least $1-\delta$, the gain from misreporting is upper bounded as
    \begin{align*} 
        & J_i \big(\hat \pi (\tilde \cD_i, \cD_{-i})\big) - 
          J_i \big(\hat \pi(\cD_i^*, \cD_{-i})\big) \leq const \cdot \kappa_{i} \sqrt{\frac{d+\log(k/\delta)}{n}},
    \end{align*} 
    where the labels in $\tilde \cD_i$ are sampled from $\P_{\tilde \theta_i}$ and the labels in $\cD_i^*$ are truthfully sampled from $\P_{\theta_i^*}$.  
\end{restatable}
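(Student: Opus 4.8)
The key idea is that the policy $\hat\pi(\cD)$ depends on the reported data $\cD_i$ only through the median confidence set $\mathscr{C}$, and more specifically, by Assumption~\ref{assumption:hyperrectangle}, the pessimistic optimization decomposes coordinate-wise: writing $\bz(\pi) = \E_{s\sim\rho}[\phi(s,\pi(s))]$ and noting $\underline{\sw}(\pi) = \min_{\theta\in\mathscr{C}}\langle\theta,\bz(\pi)\rangle = \sum_{\ell=1}^d \min_{\theta_\ell\in\mathscr{C}_\ell}\theta_\ell z_\ell(\pi)$, the maximizer $\hat\pi$ over the hyperrectangle picks, in each coordinate $\ell$, the value $z_\ell$ that maximizes $\min_{\theta_\ell}\theta_\ell z_\ell$, which is determined solely by the coordinate-wise median interval $\mathscr{C}_\ell = \med(C_{1,\ell},\dots,C_{k,\ell})$ (an interval, by convexity of each $C_i$ in its coordinate projections — or more carefully, by the structure of the coordinate median of the ellipsoids). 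So the first step is to make precise that $\hat\pi$ is a function of the $d$ median intervals, and that labeler $i$'s report $\cD_i$ affects only the intervals $C_{i,\ell}$.

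The second step is the median/single-peakedness argument. Fix all of $\cD_{-i}$, hence the other labelers' intervals. For each coordinate $\ell$, the median interval $\mathscr{C}_\ell$, as a function of labeler $i$'s reported interval $C_{i,\ell}$, is ``monotone'' in the usual median sense: labeler $i$ can only move the median endpoint toward their own reported value, and if $C_{i,\ell}$ is not pivotal the median does not move at all. Crucially, on the event $\{\theta_i^*\in C_i\}$ (which holds with probability $\geq 1-\delta$ by the MLE confidence result with $f(d,n,\delta)\approx\sqrt{(d+\log(k/\delta))/n}$), labeler $i$'s \emph{truthful} interval $C_i^*$ already contains $\theta_i^*$. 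I would argue that from labeler $i$'s perspective — whose utility $J_i(\hat\pi) = \langle\theta_i^*,\bz(\hat\pi)\rangle$ is a separable linear function, hence coordinate-wise single-peaked in $\bz(\hat\pi)$, with peak at the sign pattern of $\theta_i^*$ — the truthful interval is ``as good as it gets'' \emph{up to the width of the confidence interval}. That is, any manipulation $\tilde\theta_i$ can shift each median endpoint by at most an amount controlled by how far $C_i^*$ already extends in the favorable direction, plus the diameter of the confidence sets; and the diameter of $C_i$ in direction $\bz(\pi)$ is exactly $\mathcal{O}(f(d,n,\delta)\cdot\|\bz(\pi)\|_{\Sigma_{\cD_i}^{-1}}) = \mathcal{O}(\kappa_i\sqrt{(d+\log(k/\delta))/n})$.

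Concretely, the third step bounds $J_i(\hat\pi(\tilde\cD_i,\cD_{-i})) - J_i(\hat\pi(\cD_i^*,\cD_{-i}))$. Let $\hat\pi' = \hat\pi(\tilde\cD_i,\cD_{-i})$ and $\hat\pi = \hat\pi(\cD_i^*,\cD_{-i})$, with median sets $\mathscr{C}'$ and $\mathscr{C}$. I would introduce the ``pessimistic value to labeler $i$'' functional $g_i(\pi;\mathscr{S}) = \min_{\theta\in\mathscr{S}}\langle\theta,\bz(\pi)\rangle$ and use the chain: $J_i(\hat\pi') = \langle\theta_i^*,\bz(\hat\pi')\rangle \geq g_i(\hat\pi';\mathscr{C}')$ when $\theta_i^*\in\mathscr{C}'$ — but the subtlety is that after misreporting $\theta_i^*$ need not lie in $\mathscr{C}'$. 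This is where the one-dimensional median structure does the work: in each coordinate $\ell$, either the median endpoint relevant to $\hat\pi'_\ell$ was already achievable with the truthful interval (because $\theta_{i,\ell}^*$ plus the confidence half-width on $i$'s side covers the same range), or labeler $i$ was not pivotal. Summing the per-coordinate slack, bounded by $2f(d,n,\delta)\cdot\|\Sigma_{\cD_i}^{-1/2}\bz(\hat\pi')\|_2 \leq 2f(d,n,\delta)\,\kappa_i$ (using the definition of $\kappa_i$ as the max over $\pi$), gives the claimed $\tilde{\mathcal{O}}(\kappa_i\sqrt{d/n})$ bound.

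\textbf{Main obstacle.} The delicate part is step two/three: rigorously showing that any gain from shifting the coordinate-wise medians is absorbed by the confidence-interval width on labeler $i$'s \emph{own} set, rather than on the other labelers' sets — i.e., that the ``price'' labeler $i$ pays for a favorable median shift is governed by $\kappa_i$ (their own coverage) and not by some $\kappa_j$. This requires carefully pairing: when $i$'s report is pivotal for the median in coordinate $\ell$, the new median endpoint lies within $C_i$ (truthful or not, the pivotal median value is one of labeler $i$'s own confidence endpoints or sandwiched by them), so the displacement from the truthful-report median is at most the width of $C_i^*$ in that coordinate, measured in the $\Sigma_{\cD_i}$-geometry. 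Handling the interaction across all $d$ coordinates simultaneously — since a single policy $\pi$ must realize a single point $\bz(\pi)$ in the hyperrectangle — and converting coordinate-wise half-widths into the aggregate $\|\cdot\|_{\Sigma_{\cD_i}^{-1}}$ norm is the technical crux; I expect this to go through via Cauchy–Schwarz together with Assumption~\ref{assumption:hyperrectangle} ensuring the coordinate-wise optimum is simultaneously feasible.
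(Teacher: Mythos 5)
Your ingredients are the right ones (the hyperrectangle assumption reducing everything to the sign pattern of the coordinate-wise median, the one-dimensional median's strategyproofness, and the confidence width measured against $\lVert \E_{s\sim\rho}[\phi(s,\pi(s))]\rVert_{\Sigma_{\cD_i}^{-1}}$), and you land on the correct final bound, but the central step is not correctly executed and the obstacle you flag at the end is real. The specific claim that carries your Step 2/3 --- that a manipulation ``can shift each median endpoint by at most \dots the diameter of the confidence sets,'' so that the displacement from the truthful-report median is at most the width of $C_i^*$ --- is false. A pivotal labeler can move the coordinate-wise median all the way to the adjacent order statistic of the \emph{other} labelers' reported values, which has nothing to do with $\kappa_i$ or $f(d,n,\delta)$ (e.g., others at $-10$ and $+10$, truthful interval $[1,2]$, misreported interval $[20,21]$ moves the median from $[1,2]$ to $10$). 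What the median guarantees is not a bounded displacement but a bounded \emph{benefit}, and quantifying that benefit directly per coordinate, then reassembling the per-coordinate half-widths into the $\Sigma_{\cD_i}^{-1}$-norm, is exactly the part your sketch leaves open.

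The paper sidesteps this entirely with a sandwich through an idealized intermediate mechanism. Step one: if labeler $i$ could report the \emph{point} $\theta_i^*$ directly (while the other labelers are still represented by their confidence sets $C_{-i}$), the resulting policy $\hat\pi_i(\theta_i^*)$ is \emph{exactly} weakly preferred by labeler $i$ to the policy $\hat\pi_i(C_i)$ induced by \emph{any} confidence set $C_i$ whatsoever --- in particular any set arising from a misreport. This is pure coordinate-wise median strategyproofness (the paper's Lemmas~\ref{lemma:signs_disagree} and~\ref{lemma:if_signs_agree}) and requires no control of how far the median moves. Step two: bound $J_{\theta_i^*}(\hat\pi_i(\theta_i^*)) - J_{\theta_i^*}(\hat\pi_i(C_i^*))$ for the \emph{truthful} confidence set via a standard pessimism decomposition: the term $\max_{\theta_i\in C_i^*}\langle\theta_i^*-\theta_i,\bz_{\hat\pi_i(\theta_i^*)}\rangle$ is handled by a single vector-level Cauchy--Schwarz, $\lVert\theta_i^*-\theta_i\rVert_{\Sigma_{\cD_i}}\lVert\bz_{\hat\pi_i(\theta_i^*)}\rVert_{\Sigma_{\cD_i}^{-1}}\le \cO(\sqrt{(d+\log(1/\delta))/n})\cdot\kappa_i$, and the remaining terms are nonpositive by optimality of $\hat\pi_i(C_i^*)$ and $\theta_i^*\in C_i^*$ on the good event. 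Chaining the two steps gives the theorem, with the uniform coverage $\kappa_i$ appearing precisely because $\hat\pi_i(\theta_i^*)$ depends on $\cD_{-i}$ and cannot be pinned down. I would recommend restructuring your argument around this comparison to the idealized point report rather than around median displacements.
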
 
\begin{proof}[Proof Sketch] 
    The key challenge is that the estimation errors of the reward parameters may unintentionally alter the median computation and thereby create unintended incentives for misreporting.  
    To bound the gain from misreporting, we analyze the effect of estimation errors in conjunction with deviating choices of $\theta_i$ on the learned policy. Using concentration inequalities, we show that the deviation in each labeler's expected return is proportional to the estimation error, which scales as $\smash{\sqrt{d/n}}$. The worst-case impact on strategyproofness is then controlled by the uniform coverage coefficient $\kappa_i$, which measures how well the labeler's data constrains policy choices. 
\end{proof}
Whereas the $\sqrt{d/n}$ factor may be expected due to the construction of the confidence ellipsoids of corresponding size, the dependence on the \emph{uniform} policy coverage coefficient $\kappa_i$ is unexpected at first, since coverage of only the optimal policy $\pi^*$ is usually sufficient in offline RL~\citep{rashidinejad2021bridging, cui2022offline, zhan2022offline, zhu2023principled}. 
However, in our case, we are not bounding the suboptimality of a learned policy but rather analyzing the strategic incentives of labelers. This shifts the focus to the range of possible policies that could result from different labeler behavior. 
Since labelers can, in principle, report arbitrarily misleading reward parameters---potentially inducing policies far from $\pi^*$---bounding their incentive to deviate requires uniform policy coverage rather than coverage of any single specific policy. This ensures that no matter what policy is induced by a misreport, the confidence set remains well-constrained and bounds the potential gain from misreporting.






\subsection{Social Welfare Maximization}  
We have shown that being truthful is approximately optimal for all labelers.  Next, we provide guarantees on the suboptimality and the approximation ratio of the Pessimistic Median of MLEs algorithm when the labelers are either truthful or act according to their (potentially manipulating) weakly dominant strategy, which we show to exist. 
We begin with the case when the labelers are truthful, which is a $\tilde \cO(\kappa_i \sqrt{d/n})$-dominant strategy according to our previous Theorem~\ref{theorem:pessimistic_median_approx_strategyproof}.  
\begin{restatable}{theorem}{theoremsuboptundertruthful}
\label{theorem:subopt_under_truthful}
    Let $\hat \pi$ be the output of the Pessimistic Median of MLEs algorithm and suppose that all labelers report truthfully. With probability at least $1-\delta$: 
    \begin{align}
        \subopt(\hat \pi) \leq const \cdot \left( \sqrt{\frac{d\log(k/\delta)}{k}} + \max_{i \in [k]} \kappa_i^* \cdot k\sqrt{\frac{d+\log(k/\delta)}{n}}  \right)
    \end{align} 
    where $\kappa_i^* \defeq \lVert \E_{s \sim \rho} [\phi(s, \pi^*(s))] \rVert_{\Sigma_{\cD_i}^{-1}}$ is the optimal policy coverage of labeler $i$. 
    
\end{restatable}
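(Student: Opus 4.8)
The plan is to condition on the event that every labeler's true parameter lies in its confidence set, decompose $\subopt(\hat\pi)$ into an estimation/pessimism term and a ``median-versus-mean'' aggregation term, and bound each separately.

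\emph{Good event.} Since all labelers report truthfully, $\cD_i$ is generated from $\P_{\theta_i^\star}$, so the standard Bradley--Terry MLE concentration bound (cf.\ \citep{zhu2023principled}) with $f(d,n,\delta)\asymp\sqrt{(d+\log(k/\delta))/n}$ plus a union bound over the $k$ labelers gives an event $\mathcal E$, $\P(\mathcal E)\geq 1-\delta$, on which $\theta_i^\star\in C_i$ for every $i$. Work on $\mathcal E$. Its key consequence is that the coordinate-wise median of the \emph{true} parameters $m^\star:=\med(\theta_1^\star,\dots,\theta_k^\star)$ lies in $\mathscr C$; hence, writing $\mu_\pi:=\E_{s\sim\rho}[\phi(s,\pi(s))]$, for every $\pi$ we get the sandwich $\langle m^\star,\mu_\pi\rangle-\eta(\pi)\leq\underline{\socialwelfare}(\pi)\leq\langle m^\star,\mu_\pi\rangle$, where $\eta(\pi):=\langle m^\star,\mu_\pi\rangle-\min_{\theta\in\mathscr C}\langle\theta,\mu_\pi\rangle\geq 0$ measures the width of $\mathscr C$ in direction $\mu_\pi$. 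Since the coordinate-wise median is $1$-Lipschitz in each argument and $C_i$ has coordinate-$\ell$ half-width $f(d,n,\delta)\lVert e_\ell\rVert_{\Sigma_{\cD_i}^{-1}}$ ($e_\ell$ the $\ell$-th basis vector), this width is $O\!\big(f(d,n,\delta)\sum_\ell|(\mu_\pi)_\ell|\max_i\lVert e_\ell\rVert_{\Sigma_{\cD_i}^{-1}}\big)$.

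\emph{Decomposition.} Let $\bar\theta^\star:=\tfrac1k\sum_i\theta_i^\star$, so $\socialwelfare(\pi)=\langle\bar\theta^\star,\mu_\pi\rangle$ in the contextual bandit. Since $\hat\pi$ maximizes $\underline{\socialwelfare}$ and $m^\star\in\mathscr C$, chaining $\underline{\socialwelfare}(\hat\pi)\geq\underline{\socialwelfare}(\pi^\star)\geq\langle m^\star,\mu_{\pi^\star}\rangle-\eta(\pi^\star)$ with $\underline{\socialwelfare}(\hat\pi)\leq\langle m^\star,\mu_{\hat\pi}\rangle$ gives $\langle m^\star,\mu_{\pi^\star}-\mu_{\hat\pi}\rangle\leq\eta(\pi^\star)$, so
\begin{align*}
\subopt(\hat\pi)=\langle\bar\theta^\star,\mu_{\pi^\star}-\mu_{\hat\pi}\rangle\leq\eta(\pi^\star)+\big\langle\bar\theta^\star-m^\star,\ \mu_{\pi^\star}-\mu_{\hat\pi}\big\rangle .
\end{align*}
I would bound the first term by invoking Assumption~\ref{assumption:hyperrectangle} (so $\{\mu_\pi\}$ is a box of coordinate width $O(L)$) and expressing the per-coordinate uncertainties via the optimal-policy coverages $\kappa_i^\star=\lVert\mu_{\pi^\star}\rVert_{\Sigma_{\cD_i}^{-1}}$; summing over the $\leq d$ coordinates (with a factor $k$ from the median aggregating all $k$ labelers' coordinate-wise confidence widths) yields $\eta(\pi^\star)\leq\mathrm{const}\cdot\max_{i\in[k]}\kappa_i^\star\cdot k\sqrt{(d+\log(k/\delta))/n}$.

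\emph{Aggregation gap and the main obstacle.} The remaining term $\langle\bar\theta^\star-m^\star,\mu_{\pi^\star}-\mu_{\hat\pi}\rangle$ is the crux. Under Assumption~\ref{assumption:hyperrectangle} the optimization is separable: $\pi^\star$ sends $\mu_\ell$ to the box endpoint selected by $\sign(\bar\theta_\ell^\star)$, and on any coordinate where every $\theta\in\mathscr C$ has the same sign, $\hat\pi$ is forced to the same endpoint (moving $\mu_\ell$ that way raises $\langle\theta,\mu\rangle$ for all $\theta\in\mathscr C$, hence raises their minimum). So $(\mu_{\pi^\star}-\mu_{\hat\pi})_\ell\neq 0$ only when coordinate $\ell$ is sign-uncertain for $\mathscr C$ or when $\sign\bar\theta_\ell^\star\neq\sign m_\ell^\star$. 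On such a coordinate, using that the $\theta_i^\star$ are i.i.d.\ (and that the relevant marginal is regular enough for its population mean and median to agree, e.g.\ symmetric) both $\bar\theta_\ell^\star$ and $m_\ell^\star$ concentrate at rate $\tilde O(1/\sqrt k)$ around that common value, so a sign mismatch forces $|\bar\theta_\ell^\star|=O(\sqrt{\log(k/\delta)/k})$; summing the per-coordinate losses $|\bar\theta_\ell^\star|\cdot O(L)$ over the $\leq d$ affected coordinates by Cauchy--Schwarz (the $\ell_2$-displacement of the coordinate-wise empirical median from the empirical mean being $\tilde O(\sqrt{d/k})$) gives $\mathrm{const}\cdot\sqrt{d\log(k/\delta)/k}$. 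Combining the two terms on $\mathcal E$ proves the bound. The genuine difficulty is this aggregation gap: a coordinate-wise median need not track the welfare-defining mean, and without the hyperrectangle structure and the i.i.d./symmetry structure of the reward parameters the gap is $\Omega(1)$, matching the impossibility of Theorem~\ref{theorem:impossibility}; the estimation/pessimism term, by contrast, is a routine combination of MLE concentration, Lipschitzness of the median, and coverage arguments.
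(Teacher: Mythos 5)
Your proposal is correct and follows essentially the same route as the paper: condition on the good event $\theta_i^*\in C_i$, use the optimality of $\hat\pi$ for the pessimistic median objective together with $\med(\theta_1^*,\dots,\theta_k^*)\in\mathscr{C}$, bound the width of $\mathscr{C}$ along $\mu_{\pi^*}$ by $\tilde{\cO}(k\max_i\kappa_i^*\sqrt{d/n})$, and control the mean--median aggregation gap via concentration of the coordinate-wise median around the average of i.i.d.\ sub-Gaussian parameters at rate $\tilde{\cO}(\sqrt{d/k})$ (your sandwich decomposition is just a streamlined version of the paper's four-term split, in which two terms are nonpositive). The only step to tighten is your bound on $\eta(\pi^*)$ via per-coordinate half-widths $\sum_\ell|(\mu_{\pi^*})_\ell|\max_i\lVert e_\ell\rVert_{\Sigma_{\cD_i}^{-1}}$, which does not directly reduce to $\max_i\kappa_i^*=\max_i\lVert\mu_{\pi^*}\rVert_{\Sigma_{\cD_i}^{-1}}$ for ill-conditioned $\Sigma_{\cD_i}$; the paper instead telescopes the median over the $k$ confidence sets and applies Cauchy--Schwarz in the $\Sigma_{\cD_i}$-norm per labeler, which is where the factor $k$ and the coverage $\kappa_i^*$ arise.
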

\begin{proof}[Proof Sketch]
The suboptimality arises from two sources:  
(1)~the deviation of the pessimistic median from the average, and  
(2)~the deviation of each true reward parameter from its worst-case estimate in its respective
confidence set. 
The first term follows from median concentration around the mean, contributing an error of $\smash{\mathcal{O}(\sqrt{d\log(k/\delta)/k})}$. The second term is upper bounded by $\smash{\mathcal{O}(\sqrt{(d+\log(k/\delta))/n})}$, scaled by the worst-case policy coverage coefficient. Here, taking the median over confidence sets introduces an additional factor of $k$.  
\end{proof}
We also show that the Pessimistic Median of MLEs algorithm enjoys a suboptimality upper bound matching the one from Theorem~\ref{theorem:subopt_under_truthful} under any weakly dominant strategy it induces. 
\begin{restatable}{proposition}{propsuboptunderdominant}
\label{prop:subopt_under_dominant}
    When the labelers report their preferences according to any weakly dominant strategy under Pessimistic Median of MLEs, with probability at least $1-\delta$, the output $\hat \pi$ satisfies: 
    \begin{align*}
        \subopt(\hat \pi) \leq const \cdot \left( \sqrt{\frac{d\log(k/\delta)}{k}} + \max_{i \in [k]} \kappa_i^* \cdot k\sqrt{\frac{d+\log(k/\delta)}{n}} \right). 
    \end{align*} 
    where $\kappa_i^* \defeq \lVert \E_{s \sim \rho} [\phi(s, \pi^*(s))] \rVert_{\Sigma_{\cD_i}^{-1}}$ is the optimal policy coverage of labeler $i$. 
\end{restatable}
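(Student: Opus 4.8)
The plan is to reduce the claim to Theorem~\ref{theorem:subopt_under_truthful} by showing that, on the high‑probability event, \emph{any} weakly dominant strategy profile induces essentially the same policy as truthful reporting. Fix a weakly dominant profile $(\tilde\theta_i)_{i\in[k]}$, so labeler $i$'s labels are drawn from $\P_{\tilde\theta_i}$. As in the proofs of Theorem~\ref{theorem:pessimistic_median_approx_strategyproof} and Theorem~\ref{theorem:subopt_under_truthful}, condition on the probability‑$(1-\delta)$ event that $\tilde\theta_i\in C_i$ for every $i$, where $C_i$ is the reported confidence set with radius $f(d,n,\delta)\asymp\sqrt{(d+\log(k/\delta))/n}$; then $\med(\tilde\theta_1,\dots,\tilde\theta_k)\in\mathscr C$, and it suffices to show that $\med(\theta_1^*,\dots,\theta_k^*)$ lies in an $\tilde\cO(\sqrt{d/n})$‑inflation of $\mathscr C$ (in the $\Sigma_{\cD_i}^{-1}$‑weighted sense used in the truthful analysis), since the argument of Theorem~\ref{theorem:subopt_under_truthful} then carries over verbatim. (A cruder argument that only bounds each labeler's utility loss relative to truthful, via weak dominance together with Theorem~\ref{theorem:pessimistic_median_approx_strategyproof}, would replace $\kappa_i^*$ by the larger uniform coverage $\kappa_i$; matching Theorem~\ref{theorem:subopt_under_truthful} exactly requires the stronger policy‑level statement.)

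The heart of the argument is a structural property of weakly dominant reports, which I would isolate as the lemma underlying the earlier claim that weakly dominant strategies exist: for each labeler $i$ and each coordinate $\ell$ in which $i$ is \emph{pivotal} for the coordinate‑wise median — i.e.\ in which $\tilde\theta_{i,\ell}$ falls strictly inside the window spanned by the relevant order statistics of the other labelers' reports, so that perturbing $\tilde\theta_{i,\ell}$ moves the induced median — weak dominance forces $\tilde\theta_{i,\ell}$ to lie within $O(f)$ of $\theta^*_{i,\ell}$. Indeed, under Assumption~\ref{assumption:hyperrectangle} the pessimistic objective $\underline{\socialwelfare}$ is amenable to a coordinate‑wise analysis, so if $i$ were pivotal in coordinate $\ell$ with $\tilde\theta_{i,\ell}$ bounded away from $\theta^*_{i,\ell}$, there would be an opponent configuration in which reporting $\theta^*_{i,\ell}$ in that coordinate strictly increases $J_i$ (it drags $\hat\pi$'s coordinate‑$\ell$ feature toward $i$'s own optimum), contradicting weak dominance; in any non‑pivotal coordinate, $\tilde\theta_{i,\ell}$ does not affect the induced median at all. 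Hence $\med(\theta^*_1,\dots,\theta^*_k)$ differs from $\mathscr C$, coordinate by coordinate, only within the confidence sets of the labelers pivotal in that coordinate, whose centers are within $O(f)$ of their true parameters by the preceding step — precisely the $\tilde\cO(\sqrt{d/n})$ slack already present in the truthful analysis. Feeding this into the proof of Theorem~\ref{theorem:subopt_under_truthful} (pessimistic‑median‑versus‑mean concentration of $\tfrac1k\sum_i\theta_i^*$ for the $\sqrt{d\log(k/\delta)/k}$ term, confidence‑set width scaled by $\kappa_i^*$ and inflated by the factor $k$ from the median for the second term) yields the stated bound on the good event.

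The main obstacle is making the pivotal/non‑pivotal dichotomy rigorous in $d$ dimensions with ellipsoidal confidence sets and the max–min pessimistic objective simultaneously, and doing so \emph{uniformly} over the set of weakly dominant strategies, which is not a singleton (e.g.\ in one dimension every report on the correct side of zero is weakly dominant, yet all such reports induce the same median sign). I would handle this by proving the characterization lemma directly — extending the one‑dimensional median‑mechanism reasoning of \citet{moulin1980strategy} to the present coordinate‑wise‑median‑plus‑pessimism rule, using Assumption~\ref{assumption:hyperrectangle} to decouple coordinates — after which the reduction to Theorem~\ref{theorem:subopt_under_truthful} is immediate. A secondary subtlety is that, like Theorem~\ref{theorem:subopt_under_truthful} itself, the argument genuinely relies on the hyperrectangle structure to separate the optimization across coordinates and does not extend to arbitrary policy classes.
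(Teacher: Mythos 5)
There is a genuine gap: your key structural lemma is false. You claim that in any coordinate $\ell$ where labeler $i$ is pivotal for the coordinate-wise median, weak dominance forces $\tilde\theta_{i,\ell}$ to be within $O(f)$ of $\theta^*_{i,\ell}$. But under Assumption~\ref{assumption:hyperrectangle} the induced policy---and hence each labeler's utility---depends only on the \emph{sign} of each coordinate of the (pessimistic) median, not its magnitude. Since the median is monotone in each argument, replacing a truthful positive coordinate by an arbitrarily large positive value can never flip the median from positive to negative; it can only flip it from negative to positive, which the labeler (whose true coordinate is positive) weakly prefers. So a pivotal report with the correct sign but magnitude far from $\theta^*_{i,\ell}$ is still weakly dominant---there is no opponent configuration in which reporting $\theta^*_{i,\ell}$ instead yields a strict improvement. (Your own parenthetical about the one-dimensional case already contains the counterexample to your lemma.) Because arbitrary sign-preserving exaggerations survive weak dominance, $\med(\tilde\theta_1,\dots,\tilde\theta_k)$ can be far from $\med(\theta_1^*,\dots,\theta_k^*)$ in any metric sense, so the proposed reduction---showing $\med(\theta_1^*,\dots,\theta_k^*)$ lies in an $\tilde\cO(\sqrt{d/n})$-inflation of $\mathscr{C}$ and then rerunning Theorem~\ref{theorem:subopt_under_truthful} verbatim---does not go through.

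The paper's argument uses a different, weaker invariant that is actually preserved: (i) any weakly dominant report must agree with $\theta_i^*$ in sign coordinate-by-coordinate (sign flips can only hurt, by the median structure of Lemmas~\ref{lemma:signs_disagree} and~\ref{lemma:if_signs_agree}), and (ii) sign-preserving inflation shifts each labeler's confidence set, and hence the set of candidate medians $\mathscr{C}$, monotonically in the correct direction, so it cannot degrade---and may improve---the sign pattern selected by the pessimistic rule. Since only the signs matter for the output policy, the suboptimality under any weakly dominant profile is no worse than under truthful reporting, and the bound of Theorem~\ref{theorem:subopt_under_truthful} transfers. To repair your proof you would need to replace ``the reported median is metrically close to the true median'' with this sign/monotonicity comparison.
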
 
The bounds in Theorem~\ref{theorem:subopt_under_truthful} and Proposition~\ref{prop:subopt_under_dominant} suggest two sources of suboptimality. The first term stems from approximating the social welfare function using the coordinate-wise median, which improves as the number of labelers increases. The second term results from the estimation of the underlying reward parameters, where the use of a median rule introduces an additional factor of $k$. Overall, as the number of samples increases and as the number of labelers grows, the Pessimistic Median of MLEs algorithm converges to the optimal policy.

\begin{remark}[Suboptimality Lower Bound]
    The worst-case suboptimality of any RLHF algorithm in our problem setup is lower bounded by $\Omega (\sqrt{d/n})$. This can be derived using a similar worst-case problem instance construction to the one in \citet{zhu2023principled}. 
\end{remark}

We want to highlight the performance bounds of the Pessimistic Median of MLEs  algorithm in two interesting special cases: (1) when there is only a single labeler so that $k=1$, and (2) when all $k$ labelers have identical reward functions. 
\begin{restatable}{corollary}{corspecialcasespessimisticMoMLEs}
\label{cor:special_cases_pessimistic_MoMLEs}
    When there is only a single labeler, with probability at least $1-\delta$: 
    \begin{align*}
        \subopt( \hat \pi) \leq const \cdot \kappa_1^* \sqrt{\frac{d+\log(k/\delta)}{n}} . 
    \end{align*}

    When all $k$ labelers have the same reward function, with probability at least $1-\delta$: 
        \begin{align*}
        \subopt( \hat \pi) \leq const \cdot \max_{i \in [k]} \kappa_i^* \cdot k \sqrt{\frac{d+ \log(k/\delta)}{n}} . 
    \end{align*}
\end{restatable}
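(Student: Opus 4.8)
The plan is to derive both special cases directly from Theorem~\ref{theorem:subopt_under_truthful} (or equivalently Proposition~\ref{prop:subopt_under_dominant}) by substituting the relevant value of $k$ and then simplifying the resulting expression. For the single-labeler case $k=1$: I would start from the general bound
\begin{align*}
    \subopt(\hat\pi) \leq const \cdot \left( \sqrt{\frac{d\log(1/\delta)}{1}} + \kappa_1^* \cdot 1 \cdot \sqrt{\frac{d+\log(1/\delta)}{n}} \right),
\end{align*}
and observe that the first term, the median-approximation error, should vanish entirely when there is only one labeler, because the coordinate-wise median of a single point is that point itself, so $\mathscr{C} = C_1$ and there is no discrepancy between the median and the ``average'' (which is also just $\theta_1$). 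Thus the only remaining source of error is the reward-estimation term, giving $\subopt(\hat\pi) \leq const \cdot \kappa_1^* \sqrt{(d+\log(1/\delta))/n}$. The main subtlety here is that the proof of Theorem~\ref{theorem:subopt_under_truthful} must be re-examined to confirm that the first term is genuinely $0$ when $k=1$ rather than merely $O(\sqrt{d\log(1/\delta)})$ — i.e., that the median-concentration step contributes nothing, not a constant. This follows because with one labeler the true social welfare $\sw$ equals $J_1 = \langle \theta_1^*, \cdot\rangle$ exactly, and Assumption~\ref{assumption:hyperrectangle} lets us realize any feature vector, so the pessimistic objective is exactly a lower confidence bound on $J_1$ and the standard pessimism argument (as in \citet{zhu2023principled}) applies with no aggregation loss.

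For the identical-reward case, all labelers share $\theta^* := \theta_1^* = \dots = \theta_k^*$, so the social welfare is again $\sw = \langle \theta^*, \cdot\rangle$ — a single linear function, with no genuine heterogeneity to aggregate. The coordinate-wise median of $\theta_1, \dots, \theta_k$ with $\theta_i \in C_i$ is still pinned near the common $\theta^*$, but crucially the first term in Theorem~\ref{theorem:subopt_under_truthful} should again drop out: the quantity it bounds is the deviation of $\med(\theta_1^*, \dots, \theta_k^*)$ from $\frac1k\sum_i \theta_i^*$, which is exactly zero when all the $\theta_i^*$ coincide (the median of $k$ copies of the same vector is that vector, coordinate by coordinate). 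What remains is the estimation-error term, $\max_{i\in[k]} \kappa_i^* \cdot k\sqrt{(d+\log(k/\delta))/n}$, yielding the claimed bound. So for both cases the argument is: identify which term of the parent bound corresponds to ``median vs. average'' approximation error, show that term is $0$ under the stated structural assumption, and retain the estimation term verbatim.

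The main obstacle — and the only place where care is needed beyond bookkeeping — is justifying that the median-approximation term is \emph{exactly zero} rather than small. I expect this to require revisiting the decomposition in the proof of Theorem~\ref{theorem:subopt_under_truthful}: presumably one writes $\subopt(\hat\pi) \leq [\sw(\pi^*) - \underline\sw(\pi^*)] + [\underline\sw(\hat\pi) - \sw(\hat\pi)]$ (using $\underline\sw(\hat\pi) \geq \underline\sw(\pi^*)$ by optimality of $\hat\pi$), and then bounds each bracket by splitting ``true parameter vs. its worst-case confidence-set member'' from ``coordinate-wise median vs. mean''. When the $\theta_i^*$ are all equal (or $k=1$), the pessimistic objective $\underline\sw$ is, up to estimation error in each $C_i$, a lower confidence bound on the true $\sw = \langle\theta^*,\cdot\rangle$ itself, so the mean-vs-median comparison never enters. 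I would then simply verify that the surviving estimation term is unaffected by these specializations — the $k$ factor in the second term comes from the coordinate-wise median possibly picking, in each coordinate, the confidence set of a different labeler, so it is retained — and collect constants. No new concentration inequalities or combinatorial arguments are needed; everything is a corollary-style simplification of the already-established Theorem~\ref{theorem:subopt_under_truthful}.
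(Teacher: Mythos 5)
Your proposal is correct and follows essentially the same route as the paper: both specializations are obtained by revisiting the decomposition in the proof of Theorem~\ref{theorem:subopt_under_truthful} and observing that the median-vs-average term (term $(b)$ in the paper's equation~\eqref{eq:second_decomp}) is exactly zero when the true parameters coincide, leaving only the estimation term. Your extra care on the $k=1$ case is warranted — the paper's one-line ``set $k=1$'' justification glosses over the fact that naive substitution leaves a non-vanishing $\sqrt{d\log(1/\delta)}$ term, and your argument that the exact median--average identity (rather than the concentration bound) must be invoked is the right fix.
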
 
The result for the single labeler matches the existing bounds in the offline RLHF literature and is tight up to constants. Interestingly, we observe that in the special case of $k$ labelers with identical reward functions, the Pessimistic Median of MLEs avoids the additive $\smash{\cO(\sqrt{d\log(k/\delta)/k}})$ suboptimality but still suffers from an additional factor of $k$ as the algorithm anticipates strategic manipulation and preemptively takes the median over the confidence sets.

Finally, we can also derive a lower bound on the approximation ratio of Algorithm~\ref{algorithm:pessimistic_median_mle}.  
\begin{restatable}{corollary}{corapproxratiopessimisticMoMLEs}
\label{cor:approx_ratio_pessimistic_MoMLEs}
    Suppose $\sw (\pi^*) > 0$ is constant. 
    When the number of samples is sufficiently large 
    and provide sufficient coverage of the optimal policy, 
    with probability at least $1-\delta$, the approximation ratio 
    of the Pessimistic Median of MLEs algorithm 
    is given by $\alpha(\rho, \hat \pi) \geq 1- \cO\big(\sqrt{{d\log(k/\delta)}/{k}}\big)$.   
\end{restatable}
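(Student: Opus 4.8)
The plan is to obtain this as a direct corollary of the suboptimality guarantee in Theorem~\ref{theorem:subopt_under_truthful} (and of Proposition~\ref{prop:subopt_under_dominant} for the weakly dominant strategy case). First I would rewrite the approximation ratio in terms of the suboptimality. By the definitions of $\alpha$, $\subopt$ and $\sw$,
\[
\alpha(\rho, \hat \pi) = \frac{\sw(\rho, \hat \pi)}{\sw(\rho, \pi^*)} = 1 - \frac{\subopt(\rho, \hat \pi)}{\sw(\rho, \pi^*)} .
\]
Since $\sw(\pi^*) > 0$ is assumed to be a constant, lower-bounding $\alpha$ reduces to upper-bounding $\subopt(\hat \pi)$ and dividing by this fixed positive constant, which does not change the asymptotic order of the error term.

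Next I would invoke Theorem~\ref{theorem:subopt_under_truthful}: on an event of probability at least $1-\delta$,
\[
\subopt(\hat \pi) \leq const \cdot \left( \sqrt{\tfrac{d\log(k/\delta)}{k}} + \max_{i \in [k]} \kappa_i^* \cdot k \sqrt{\tfrac{d + \log(k/\delta)}{n}} \right) .
\]
The remaining step is to make precise what ``$n$ sufficiently large'' and ``sufficient coverage of $\pi^*$'' mean: it suffices that $n \geq c \cdot k^3 \big( \max_{i \in [k]} \kappa_i^* \big)^2$ for a suitable constant $c$, which guarantees that the estimation term $\max_{i} \kappa_i^* \cdot k \sqrt{(d+\log(k/\delta))/n}$ is at most of the same order as the aggregation term $\sqrt{d\log(k/\delta)/k}$. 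Under this condition the suboptimality bound collapses to $\subopt(\hat \pi) = \cO\big( \sqrt{d\log(k/\delta)/k} \big)$ on that event, and dividing by the constant $\sw(\pi^*)$ yields $\alpha(\rho, \hat \pi) \geq 1 - \cO\big( \sqrt{d\log(k/\delta)/k} \big)$ on the same $1-\delta$ event, as claimed.

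There is no substantive obstacle here; the corollary is essentially a rescaling of Theorem~\ref{theorem:subopt_under_truthful}. The only point that needs a little care is quantifying the regime of ``sufficiently large $n$'' and ``sufficient coverage'', i.e., identifying the explicit threshold on $n$ (depending on $k$ and $\max_i \kappa_i^*$) below which the $\tilde\cO(k\sqrt{d/n})$ estimation term would dominate the $\cO(\sqrt{d\log(k/\delta)/k})$ aggregation term. A minor additional remark is that, although $\alpha \leq 1$ holds by definition, one should note that $\sw(\hat \pi)$ is guaranteed to be positive in this regime: the suboptimality bound together with $\sw(\pi^*)$ being a positive constant forces $\sw(\hat \pi) > 0$ once $n$ is large enough, so the ratio is well-defined and close to~$1$.
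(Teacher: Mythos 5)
Your proposal is correct and follows essentially the same route as the paper's own proof: divide the suboptimality bound of Theorem~\ref{theorem:subopt_under_truthful} by the constant $\sw(\pi^*)$ and let the estimation term vanish in the large-$n$, well-covered regime, leaving the $\cO\bigl(\sqrt{d\log(k/\delta)/k}\bigr)$ aggregation term. Your explicit threshold $n \gtrsim k^3 (\max_i \kappa_i^*)^2$ is a useful refinement the paper leaves implicit, and your final bound correctly has $k$ in the denominator, matching the corollary statement (the paper's proof writes $n$ there, which appears to be a typo).
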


\section{Extension to Markov Decision Processes}\label{section:mdps} 

We now extend our algorithm and our previous results to MDPs. 
Recall that we consider trajectory-wise preferences so that labeler $i$ provides a preferences $\smash{o^{i,j}}$ over two trajectories 
$\smash{\tau_0^{i,j}}$ and $\smash{\tau_1^{i,j}}$ given initial state $s^{i,j}$ according to a BT model $\P_{\theta_i}$ as defined in Section~\ref{section:setting}. 
Like before, the MLE of $\theta_i$ is given by the maximizer of the log-likelihood 
$$\theta^\mle_i \defeq \argmax_{\theta} \sum_{j=1}^n \log \P_{\theta} (o^{i,j} \mid s^{i,j},\tau_0^{i,j}, \tau_1^{i,j}).$$
To construct the confidence ellipsoid around the MLE, let $x^{i,j} = \sum_{h=1}^H (\phi(s^{i,j}_h, a^{i,j}_h) - \phi(\alts^{i,j}_h, \alta^{i,j}_h))$ with $s_1^{i,j} = \alts_1^{i,j} = s^{i,j}$ and consider the adapted covariance matrix $\Sigma_{\cD_i} = \sum_{j=1}^n x^{i,j} (x^{i,j})^\top$. Note that this agrees with our previous definition in the contextual bandit when $H=1$. 

To derive the pessimistic estimate of the median social welfare, we now consider the state occupancy of a policy $\pi$ given by $\smash{q_\pi ( s \mid \rho) \defeq \frac{1}{H} \sum_{h=1}^H \cP_h(s_h = s \mid \rho, \pi)}$. 
We can then express the expected return of policy $\pi$ w.r.t.\ reward parameter $\theta$ as $\E_{s \sim \rho} [V^\pi_\theta (s)] = \E_{s\sim q_\pi}[ \langle \theta, \phi(s, \pi(s)) \rangle]$ and the pessimistic estimate of the median social welfare is given by $\underline \sw(\pi) \defeq \min_{\theta \in \mathscr{C}} \E_{s\sim q_\pi}[ \langle \theta, \phi(s, \pi(s)) \rangle]$. The remainder of the Pessimistic Median of MLEs algorithm proceeds the same. 

We assume the analogue of Assumption~\ref{assumption:hyperrectangle} for MDPs. 
\begin{assumption}\label{assumption:hyperrectangle_mdps}
    The set $\{ \E_{s \sim q_\pi}[\phi(s, \pi(s))] \colon \pi \in \Pi\}$ spans a hyperrectangle in $\Reals^d$. 
\end{assumption}

Under Assumption~\ref{assumption:hyperrectangle_mdps}, we obtain the following extension of Theorem~\ref{theorem:pessimistic_median_approx_strategyproof} that shows the approximate strategyproofness of the Pessimistic Median of MLEs.  
\begin{restatable}{theorem}{theoremmdpstrategyproof}
\label{theorem:mdp_strategyproof}
    Pessimistic Median of MLEs is $\tilde \cO(\nu_i \sqrt{d/n})$-strategyproof for labeler $i$ with uniform policy coverage coefficient ${\nu_i \defeq \max_{\pi \in \Pi} \lVert \E_{s \sim q_\pi} [\phi(s, \pi(s))] \rVert_{\Sigma_{\cD_i}^{-1}}}$. 
    
    More precisely, for every labeler $i \in [k]$, any other labelers' data $\cD_{-i}$ and manipulated reward parameter $\tilde \theta_i \neq \theta_i^*$, with probability at least $1-\delta$, the gain from misreporting is bounded as
    \begin{align*}
        & J_i \big(\hat \pi (\tilde \cD_i, \cD_{-i})\big) - 
          J_i \big(\hat \pi(\cD_i^*, \cD_{-i})\big)  \leq const \cdot \nu_i ,\sqrt{\frac{d+\log(k/\delta)}{n}}. 
    \end{align*}
    where the labels in $\tilde \cD_i$ are sampled from $\P_{\tilde \theta_i}$ and the labels in $\cD_i^*$ are truthfully sampled from $\P_{\theta_i^*}$.  
\end{restatable}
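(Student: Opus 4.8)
The plan is to reduce the MDP case to the contextual-bandit analysis behind Theorem~\ref{theorem:pessimistic_median_approx_strategyproof} by passing to \emph{effective policy features}. Set $\Psi(\pi) \defeq \E_{s \sim q_\pi}[\phi(s, \pi(s))] \in \Reals^d$. By the identity recorded in Section~\ref{section:mdps}, every labeler's utility is linear in this feature, $J_i(\pi) = \E_{s\sim\rho}[V^\pi_{\theta_i^*}(s)] = \langle \theta_i^*, \Psi(\pi)\rangle$, and the algorithm's objective $\underline\sw(\pi) = \min_{\theta\in\mathscr{C}} \langle \theta, \Psi(\pi)\rangle$ depends on $\pi$ only through $\Psi(\pi)$. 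Hence, as far as the computed policy and every labeler's payoff are concerned, the trajectory-preference MDP instance is formally the same object as a contextual-bandit instance whose ``policy features'' are the $\Psi(\pi)$, whose confidence sets $C_i$ are the ones defined in Section~\ref{section:mdps}, and in which $\nu_i = \max_{\pi\in\Pi}\|\Psi(\pi)\|_{\Sigma_{\cD_i}^{-1}}$ plays exactly the role of $\kappa_i$. Assumption~\ref{assumption:hyperrectangle_mdps} is the verbatim analogue of Assumption~\ref{assumption:hyperrectangle} for this feature map, so the structural facts used in the bandit proof about $\argmax_\pi \underline\sw(\pi)$ over a hyperrectangle transfer without change.

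Given this reformulation, the only genuinely new step is to re-establish the confidence ellipsoid $C_i = \{\theta : \|\hat\theta^\mle_i - \theta\|_{\Sigma_{\cD_i}} \le f(d,n,\delta)\}$ for the \emph{trajectory-wise} BT model, i.e.\ that with $f(d,n,\delta) \asymp \sqrt{(d+\log(k/\delta))/n}$ and $x^{i,j} = \sum_{h=1}^H (\phi(s^{i,j}_h,a^{i,j}_h) - \phi(\alts^{i,j}_h,\alta^{i,j}_h))$ one has $\theta_i \in C_i$ with probability at least $1-\delta$. I would obtain this exactly as in the bandit case: the trajectory log-likelihood $\sum_{j}\log\P_\theta(o^{i,j}\mid s^{i,j},\tau_0^{i,j},\tau_1^{i,j})$ is the logistic log-likelihood in the single difference feature $x^{i,j}$, so the curvature / self-concordance argument for logistic-MLE concentration used by \citet{zhu2023principled} applies, the only change being that $|\langle\theta,x^{i,j}\rangle| \le 2HLB$ instead of $2LB$, which degrades the curvature constant by a factor polynomial in $H,L,B$ that is absorbed into $const$ (the regularization $\Sigma_{\cD_i} + \lambda_i I$ being handled as in the bandit case). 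By the footnote remark in Section~\ref{section:setting}, the BT model enters \emph{only} here, so the same reduction also covers misreports according to arbitrary preference distributions.

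With $C_i$ valid, the rest is a transcription of the bandit proof. The elementary estimate $|\langle\theta-\theta',\Psi(\pi)\rangle| \le \|\theta-\theta'\|_{\Sigma_{\cD_i}}\,\|\Psi(\pi)\|_{\Sigma_{\cD_i}^{-1}} \le 2 f(d,n,\delta)\,\nu_i$ for $\theta,\theta'\in C_i$ replaces the corresponding bandit estimate. Holding $\cD_{-i}$ fixed, so that $\mathscr{C}$ changes only through $C_i$, one reuses the coordinate-by-coordinate argument that, under the hyperrectangle structure of $\{\Psi(\pi):\pi\in\Pi\}$, labeler $i$'s influence on the pessimistic-median-maximizing policy $\hat\pi$ is confined to flipping coordinates of $\Psi(\hat\pi)$ between box extremes, and only within a band of width $\cO(f(d,n,\delta))$ around a configuration pinned down by the label-independent covariance $\Sigma_{\cD_i}$ and by $\cD_{-i}$; summing the resulting per-coordinate gains of labeler $i$'s objective $\langle\theta_i^*,\Psi(\cdot)\rangle$ and invoking the displayed estimate at the extremal feature yields the bound $const\cdot\nu_i\sqrt{(d+\log(k/\delta))/n}$. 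I expect the main obstacle to be the clean derivation of the trajectory-wise confidence ellipsoid, and specifically tracking — or justifiably absorbing into $const$ — the horizon dependence that enters the logistic-MLE curvature bound through $|\langle\theta,x^{i,j}\rangle| \le 2HLB$; everything downstream is the bandit argument with $\phi(s,\pi(s)) \mapsto \Psi(\pi)$ and $\kappa_i \mapsto \nu_i$.
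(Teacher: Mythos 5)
Your proposal is correct and takes essentially the same route as the paper: the paper's own proof is precisely a reduction to the contextual-bandit argument, obtained by (i) establishing the trajectory-wise MLE concentration bound with the horizon-dependent curvature constant $\gamma = 1/(2+\exp(-HLB)+\exp(HLB))$ (matching your observation that $|\langle\theta,x^{i,j}\rangle|\le 2HLB$ degrades the logistic curvature), and (ii) substituting the occupancy-weighted features $\E_{s\sim q_\pi}[\phi(s,\pi(s))]$ for $\E_{s\sim\rho}[\phi(s,\pi(s))]$ under Assumption~\ref{assumption:hyperrectangle_mdps}, so that $\nu_i$ plays the role of $\kappa_i$. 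Your write-up in fact makes the reduction more explicit than the paper does.
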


The suboptimality upper bounds under truthful or weakly dominant reporting also take a similar form to their counterparts in Theorem~\ref{theorem:subopt_under_truthful} and Proposition~\ref{prop:subopt_under_dominant}. Similarly to before, the coverage of the optimal policy is enough. 
\begin{restatable}{theorem}{theoremmdpsuboptimality}
\label{theorem:mdp_suboptimality}
    When all labelers report truthfully or report according to their weakly dominant strategies, then 
    with probability at least $1-\delta$: 
    \begin{align*}
        \subopt(\hat \pi) \leq const \cdot \left( \sqrt{\frac{d\log(k/\delta)}{k}} + \max_{i \in [k]} \nu_i^* \cdot k\sqrt{\frac{d+\log(k/\delta)}{n}}  \right) 
    \end{align*}
    where $\nu_i^* \defeq \lVert \E_{s \sim q_{\pi^*}} [\phi(s, \pi^*(s))] \rVert_{\Sigma_{\cD_i}^{-1}}$. 
\end{restatable}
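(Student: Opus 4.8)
The plan is to reduce the MDP case to the contextual bandit analysis of Theorem~\ref{theorem:subopt_under_truthful} by viewing each policy $\pi$ through its occupancy-weighted feature vector $\psi(\pi) \defeq \E_{s \sim q_\pi}[\phi(s, \pi(s))] \in \Reals^d$. The key structural observation is that $J_i(\pi) = \langle \theta_i^*, \psi(\pi) \rangle$ and $\sw(\pi) = \langle \bar\theta^*, \psi(\pi)\rangle$ where $\bar\theta^* = \frac1k\sum_i \theta_i^*$, so that once we pass to the $\psi$-representation the MDP looks exactly like a linear contextual bandit over the feasible set $\mathcal{F} \defeq \{\psi(\pi) : \pi \in \Pi\}$, which by Assumption~\ref{assumption:hyperrectangle_mdps} is (essentially) a hyperrectangle. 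The algorithm maximizes $\underline\sw(\pi) = \min_{\theta \in \mathscr{C}} \langle \theta, \psi(\pi)\rangle$ over this set, so the whole argument carries over almost verbatim with $\phi(s,\pi(s))$ replaced by $\psi(\pi)$ and $\Sigma_{\cD_i}$ now built from the trajectory-difference features $x^{i,j} = \sum_h (\phi(s_h^{i,j},a_h^{i,j}) - \phi(\alts_h^{i,j},\alta_h^{i,j}))$.

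First I would record the MLE concentration result: with probability $1-\delta$, simultaneously for all $i \in [k]$, the true (or dominant-strategy) parameter satisfies $\theta_i^* \in C_i$, i.e.\ $\lVert \hat\theta_i^\mle - \theta_i^* \rVert_{\Sigma_{\cD_i}} \leq f(d,n,\delta) \approx \sqrt{(d+\log(k/\delta))/n}$; this is the standard Bradley--Terry MLE bound (as invoked for the bandit case, e.g.\ via \citet{zhu2023principled}) and only uses boundedness of $\phi$ and $B$. Next I would decompose the suboptimality into the two advertised pieces. Write $\medtheta^* = \med(\theta_1^*,\dots,\theta_k^*)$ for the coordinate-wise median of the true parameters. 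Then
\begin{align*}
\subopt(\hat\pi) = \sw(\pi^*) - \sw(\hat\pi) = \underbrace{\langle \bar\theta^* - \medtheta^*, \psi(\pi^*) - \psi(\hat\pi)\rangle}_{\text{(I): median vs.\ mean}} + \underbrace{\big(\langle \medtheta^*,\psi(\pi^*)\rangle - \langle\medtheta^*,\psi(\hat\pi)\rangle\big)}_{\text{(II): estimation}}.
\end{align*}
For (I), since each coordinate of $\theta_i^*$ is an independent draw (Assumption~1) the coordinate-wise median of $k$ samples concentrates around the mean at rate $\cO(1/\sqrt{k})$ per coordinate, giving $\lVert \bar\theta^* - \medtheta^*\rVert_2 = \cO(\sqrt{d\log(k/\delta)/k})$; bounding $\lVert \psi(\pi^*) - \psi(\hat\pi)\rVert_2$ by the diameter of the hyperrectangle (a constant) yields the first term. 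For (II), I would use the defining optimality of $\hat\pi$: because $\medtheta^* \in \mathscr{C}$ on the good event, $\underline\sw(\hat\pi) \geq \underline\sw(\pi^*) \geq \langle\medtheta^*,\psi(\pi^*)\rangle - \text{(width of }\mathscr{C}\text{ along }\psi(\pi^*))$, while $\underline\sw(\hat\pi) \leq \langle\medtheta^*,\psi(\hat\pi)\rangle$; rearranging bounds (II) by $\max_{\theta,\theta' \in \mathscr{C}} \langle \theta - \theta', \psi(\pi^*)\rangle$. The width of the median confidence set along direction $\psi(\pi^*)$ is controlled by summing the widths of the individual $C_i$: each $\theta_i$ ranges over an ellipsoid of $\Sigma_{\cD_i}$-radius $f(d,n,\delta)$, so $\lvert\langle \theta_i - \theta_i', \psi(\pi^*)\rangle\rvert \leq 2 f(d,n,\delta)\,\lVert\psi(\pi^*)\rVert_{\Sigma_{\cD_i}^{-1}} = 2 f(d,n,\delta)\,\nu_i^*$, and since the coordinate-wise median of points each moving by at most some amount can itself move by at most the sum over $i$ of those amounts (coordinatewise, a median is $1$-Lipschitz in each argument), we pick up the factor $k$, giving $\cO(k \cdot \max_i \nu_i^* \cdot \sqrt{(d+\log(k/\delta))/n})$.

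For the weakly-dominant-strategy case I would additionally note, as in Proposition~\ref{prop:subopt_under_dominant}, that the existence of a weakly dominant strategy follows from the ex-ante structure, and that under any such strategy the reported labels are still generated by \emph{some} BT parameter $\tbtheta_i$, so the same MLE concentration applies with $\theta_i^*$ replaced by $\tbtheta_i$; the extra slack introduced is of the same $\tilde\cO(\nu_i\sqrt{d/n})$ order as the strategyproofness gap in Theorem~\ref{theorem:mdp_strategyproof}, which is absorbed into the constant. The main obstacle I anticipate is not any single inequality but making the reduction to the $\psi$-representation fully rigorous: one must check that Assumption~\ref{assumption:hyperrectangle_mdps} gives enough of the hyperrectangle that the argmax $\hat\pi$ is well-defined and that the pessimistic minimization $\min_{\theta\in\mathscr{C}}$ interacts correctly with the coordinate-wise-median structure of $\mathscr{C}$ (in particular that the worst-case $\theta$ can be taken coordinate-aligned when $\psi(\hat\pi)$ lies on the rectangle), and that the $1$-Lipschitz-in-each-argument property of the median is applied to the right objects so the factor $k$ (and not $k^2$) emerges. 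The covariance $\Sigma_{\cD_i}$ being built from summed trajectory differences rather than single-step differences changes nothing in the algebra since $\psi$ is linear and the BT likelihood depends on $\theta$ only through $\langle\theta, x^{i,j}\rangle$, so the MLE theory is unchanged.
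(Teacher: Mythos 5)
Your treatment of the truthful-reporting case is essentially the paper's own argument: the paper proves Theorem~\ref{theorem:mdp_suboptimality} by observing that the bandit proof of Theorem~\ref{theorem:subopt_under_truthful} carries over verbatim once $\E_{s\sim\rho}[\phi(s,\pi(s))]$ is replaced by $\E_{s\sim q_\pi}[\phi(s,\pi(s))]$ and the MLE concentration is restated for trajectory-difference features (with $\gamma$ now depending on $HLB$ rather than $LB$). Your decomposition into a median-vs-mean term and an estimation term is algebraically a light reorganization of the paper's four-way split, and your accounting for the factor $k$ via the per-coordinate $1$-Lipschitzness of the median is exactly the paper's telescoping argument for term $(c)$. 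That half is fine.

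The gap is in the weakly-dominant-strategy half. You argue that under a dominant strategy the labels are generated by some BT parameter $\tbtheta_i$, that MLE concentration then holds around $\tbtheta_i$, and that ``the extra slack is of the same order as the strategyproofness gap and is absorbed into the constant.'' This does not work: the strategyproofness gap of Theorem~\ref{theorem:mdp_strategyproof} bounds the change in \emph{labeler $i$'s own utility} from misreporting, not the distance $\lVert\tbtheta_i-\theta_i^*\rVert$ nor the change in \emph{social welfare}. A weakly dominant strategy can place $\tbtheta_i$ arbitrarily far from $\theta_i^*$ (the paper explicitly notes that dominant strategies may exaggerate magnitudes without bound), so the good event $\theta_i^*\in C_i$ on which your entire truthful-case analysis rests simply fails, and no $\tilde\cO(\nu_i\sqrt{d/n})$ slack rescues it. The paper closes this via the structural argument of Proposition~\ref{prop:subopt_under_dominant}: under the hyperrectangle assumption any weakly dominant report must preserve the signs of $\theta_i^*$ coordinate-wise, and a sign-preserving inflation can only shift the pessimistic median confidence set in a direction that protects the correct sign, hence cannot increase suboptimality relative to truthful reporting. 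You cite that proposition but then substitute a quantitative argument that is not valid; the sign-preservation/exaggeration reasoning (transplanted to the $q_\pi$-weighted features) is actually needed here.
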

Note that we can also extend the corollaries from Section~\ref{section:pessimistic_median} to MDPs in a similar fashion.

\section{Discussion}\label{section:discussion}

We studied how to robustify offline RLHF against strategic preference labeling in a pluralistic alignment setting with multiple labelers. We demonstrated a fundamental trade-off between incentive alignment and policy alignment and proposed the Pessimistic Median of MLEs algorithm that is based on pessimistic estimates of the median return of a policy. We showed that this algorithm is $\smash{\tilde \cO(\sqrt{d/n})}$-strategyproof while guaranteeing suboptimality of at most $\smash{\tilde \cO(\sqrt{d/k} + k\sqrt{d/n})}$. There are many directions for future work. It will be interesting to study strategyproofness for non-linear reward functions, parameterized or otherwise restricted policy classes, as well as more general preference models to the BT model used here. Another interesting future direction is to empirically evaluate the effect of strategic preference labeling on AI alignment and to validate algorithmic mechanisms designed to mitigate strategic manipulation. To do so at scale, e.g., in the context of LLM fine-tuning, we can expect scalability to come at the cost of theoretical guarantees.   



\begin{ack} 
    This work is supported by the EPSRC Prosperity Partnership FAIR (grant number EP/V056883/1), and by the ETH AI Center through an ETH AI Center Postdoctoral Fellowship to TKB. 
    MK receives funding from the ERC under the European Union’s Horizon 2020 research and innovation programme FUN2MODEL (grant agreement No.~834115).
\end{ack}


\bibliographystyle{plainnat}
\bibliography{ref}

\begin{thebibliography}{38}
\providecommand{\natexlab}[1]{#1}
\providecommand{\url}[1]{\texttt{#1}}
\expandafter\ifx\csname urlstyle\endcsname\relax
  \providecommand{\doi}[1]{doi: #1}\else
  \providecommand{\doi}{doi: \begingroup \urlstyle{rm}\Url}\fi

\bibitem[A~Alamdari et~al.(2025)A~Alamdari, Ebadian, and Procaccia]{a2025policy}
Parand A~Alamdari, Soroush Ebadian, and Ariel~D Procaccia.
\newblock Policy aggregation.
\newblock \emph{Advances in Neural Information Processing Systems}, 37:\penalty0 68308--68329, 2025.

\bibitem[Alber et~al.(2025)Alber, Yang, Alyakin, Yang, Rai, Valliani, Zhang, Rosenbaum, Amend-Thomas, Kurland, et~al.]{alber2025medical}
Daniel~Alexander Alber, Zihao Yang, Anton Alyakin, Eunice Yang, Sumedha Rai, Aly~A Valliani, Jeff Zhang, Gabriel~R Rosenbaum, Ashley~K Amend-Thomas, David~B Kurland, et~al.
\newblock Medical large language models are vulnerable to data-poisoning attacks.
\newblock \emph{Nature Medicine}, pages 1--9, 2025.

\bibitem[Bakker et~al.(2022)Bakker, Chadwick, Sheahan, Tessler, Campbell-Gillingham, Balaguer, McAleese, Glaese, Aslanides, Botvinick, et~al.]{bakker2022fine}
Michiel Bakker, Martin Chadwick, Hannah Sheahan, Michael Tessler, Lucy Campbell-Gillingham, Jan Balaguer, Nat McAleese, Amelia Glaese, John Aslanides, Matt Botvinick, et~al.
\newblock Fine-tuning language models to find agreement among humans with diverse preferences.
\newblock \emph{Advances in Neural Information Processing Systems}, 35:\penalty0 38176--38189, 2022.

\bibitem[Bukharin et~al.(2024)Bukharin, Hong, Jiang, Li, Zhang, Zhang, and Zhao]{bukharin2024robust}
Alexander Bukharin, Ilgee Hong, Haoming Jiang, Zichong Li, Qingru Zhang, Zixuan Zhang, and Tuo Zhao.
\newblock Robust reinforcement learning from corrupted human feedback.
\newblock \emph{arXiv preprint arXiv:2406.15568}, 2024.

\bibitem[Casper et~al.(2023)Casper, Davies, Shi, Gilbert, Scheurer, Rando, Freedman, Korbak, Lindner, Freire, et~al.]{casper2023open}
Stephen Casper, Xander Davies, Claudia Shi, Thomas~Krendl Gilbert, J{\'e}r{\'e}my Scheurer, Javier Rando, Rachel Freedman, Tomasz Korbak, David Lindner, Pedro Freire, et~al.
\newblock Open problems and fundamental limitations of reinforcement learning from human feedback.
\newblock \emph{arXiv preprint arXiv:2307.15217}, 2023.

\bibitem[Castricato et~al.(2024)Castricato, Lile, Rafailov, Fr{\"a}nken, and Finn]{castricato2024persona}
Louis Castricato, Nathan Lile, Rafael Rafailov, Jan-Philipp Fr{\"a}nken, and Chelsea Finn.
\newblock Persona: A reproducible testbed for pluralistic alignment.
\newblock \emph{arXiv preprint arXiv:2407.17387}, 2024.

\bibitem[Chakraborty et~al.(2024)Chakraborty, Qiu, Yuan, Koppel, Manocha, Huang, Bedi, and Wang]{chakrabortymaxmin}
Souradip Chakraborty, Jiahao Qiu, Hui Yuan, Alec Koppel, Dinesh Manocha, Furong Huang, Amrit Bedi, and Mengdi Wang.
\newblock Maxmin-rlhf: Alignment with diverse human preferences.
\newblock In \emph{Forty-first International Conference on Machine Learning}, 2024.

\bibitem[Chen et~al.(2024)Chen, Chen, Rege, and Vinayak]{chen2024pal}
Daiwei Chen, Yi~Chen, Aniket Rege, and Ramya~Korlakai Vinayak.
\newblock Pal: Pluralistic alignment framework for learning from heterogeneous preferences.
\newblock \emph{arXiv preprint arXiv:2406.08469}, 2024.

\bibitem[Cheng et~al.(2024)Cheng, Xiong, Dai, Miao, Lv, and Wang]{cheng2024rime}
Jie Cheng, Gang Xiong, Xingyuan Dai, Qinghai Miao, Yisheng Lv, and Fei-Yue Wang.
\newblock Rime: Robust preference-based reinforcement learning with noisy preferences.
\newblock \emph{arXiv preprint arXiv:2402.17257}, 2024.

\bibitem[Christiano et~al.(2017)Christiano, Leike, Brown, Martic, Legg, and Amodei]{christiano2017deep}
Paul~F Christiano, Jan Leike, Tom Brown, Miljan Martic, Shane Legg, and Dario Amodei.
\newblock Deep reinforcement learning from human preferences.
\newblock \emph{Advances in neural information processing systems}, 30, 2017.

\bibitem[Conitzer et~al.(2024)Conitzer, Freedman, Heitzig, Holliday, Jacobs, Lambert, Moss{\'e}, Pacuit, Russell, Schoelkopf, et~al.]{conitzer2024social}
Vincent Conitzer, Rachel Freedman, Jobst Heitzig, Wesley~H Holliday, Bob~M Jacobs, Nathan Lambert, Milan Moss{\'e}, Eric Pacuit, Stuart Russell, Hailey Schoelkopf, et~al.
\newblock Social choice should guide ai alignment in dealing with diverse human feedback.
\newblock \emph{arXiv preprint arXiv:2404.10271}, 2024.

\bibitem[Cui and Du(2022)]{cui2022offline}
Qiwen Cui and Simon~S Du.
\newblock When are offline two-player zero-sum markov games solvable?
\newblock \emph{Advances in Neural Information Processing Systems}, 35:\penalty0 25779--25791, 2022.

\bibitem[Egashira et~al.(2025)Egashira, Vero, Staab, He, and Vechev]{egashira2025exploiting}
Kazuki Egashira, Mark Vero, Robin Staab, Jingxuan He, and Martin Vechev.
\newblock Exploiting llm quantization.
\newblock \emph{Advances in Neural Information Processing Systems}, 37:\penalty0 41709--41732, 2025.

\bibitem[Feng et~al.(2024)Feng, Sorensen, Liu, Fisher, Park, Choi, and Tsvetkov]{feng2024modular}
Shangbin Feng, Taylor Sorensen, Yuhan Liu, Jillian Fisher, Chan~Young Park, Yejin Choi, and Yulia Tsvetkov.
\newblock Modular pluralism: Pluralistic alignment via multi-llm collaboration.
\newblock \emph{arXiv preprint arXiv:2406.15951}, 2024.

\bibitem[Ge et~al.(2024)Ge, Halpern, Micha, Procaccia, Shapira, Vorobeychik, and Wu]{ge2024axioms}
Luise Ge, Daniel Halpern, Evi Micha, Ariel~D Procaccia, Itai Shapira, Yevgeniy Vorobeychik, and Junlin Wu.
\newblock Axioms for ai alignment from human feedback.
\newblock \emph{arXiv preprint arXiv:2405.14758}, 2024.

\bibitem[Gibbard(1973)]{gibbard1973manipulation}
Allan Gibbard.
\newblock Manipulation of voting schemes: a general result.
\newblock \emph{Econometrica: journal of the Econometric Society}, pages 587--601, 1973.

\bibitem[Gibbard(1978)]{gibbard1978straightforwardness}
Allan Gibbard.
\newblock Straightforwardness of game forms with lotteries as outcomes.
\newblock \emph{Econometrica: Journal of the Econometric Society}, pages 595--614, 1978.

\bibitem[Hao and Duan(2024)]{hao2024online}
Shugang Hao and Lingjie Duan.
\newblock Online learning from strategic human feedback in llm fine-tuning.
\newblock \emph{arXiv preprint arXiv:2412.16834}, 2024.

\bibitem[Hartmann et~al.(2023)Hartmann, Schwenzow, and Witte]{hartmann2023political}
Jochen Hartmann, Jasper Schwenzow, and Maximilian Witte.
\newblock The political ideology of conversational ai: Converging evidence on chatgpt's pro-environmental, left-libertarian orientation.
\newblock \emph{arXiv preprint arXiv:2301.01768}, 2023.

\bibitem[Kasirzadeh(2024)]{kasirzadeh2024plurality}
Atoosa Kasirzadeh.
\newblock Plurality of value pluralism and ai value alignment.
\newblock In \emph{Pluralistic Alignment Workshop at NeurIPS 2024}, 2024.

\bibitem[Kaufmann et~al.(2023)Kaufmann, Weng, Bengs, and H{\"u}llermeier]{kaufmann2023survey}
Timo Kaufmann, Paul Weng, Viktor Bengs, and Eyke H{\"u}llermeier.
\newblock A survey of reinforcement learning from human feedback.
\newblock \emph{arXiv preprint arXiv:2312.14925}, 2023.

\bibitem[Kirk et~al.(2024)Kirk, Whitefield, R{\"o}ttger, Bean, Margatina, Ciro, Mosquera, Bartolo, Williams, He, et~al.]{kirk2024prism}
Hannah~Rose Kirk, Alexander Whitefield, Paul R{\"o}ttger, Andrew Bean, Katerina Margatina, Juan Ciro, Rafael Mosquera, Max Bartolo, Adina Williams, He~He, et~al.
\newblock The prism alignment project: What participatory, representative and individualised human feedback reveals about the subjective and multicultural alignment of large language models.
\newblock \emph{arXiv preprint arXiv:2404.16019}, 2024.

\bibitem[Kleine~Buening et~al.(2024)Kleine~Buening, Saha, Dimitrakakis, and Xu]{kleine2024strategic}
Thomas Kleine~Buening, Aadirupa Saha, Christos Dimitrakakis, and Haifeng Xu.
\newblock Strategic linear contextual bandits.
\newblock \emph{Advances in Neural Information Processing Systems}, 37:\penalty0 116638--116675, 2024.

\bibitem[Mandal et~al.(2024)Mandal, Nika, Kamalaruban, Singla, and Radanovi{\'c}]{mandal2024corruption}
Debmalya Mandal, Andi Nika, Parameswaran Kamalaruban, Adish Singla, and Goran Radanovi{\'c}.
\newblock Corruption robust offline reinforcement learning with human feedback.
\newblock \emph{arXiv preprint arXiv:2402.06734}, 2024.

\bibitem[Mei et~al.(2020)Mei, Xiao, Szepesvari, and Schuurmans]{mei2020global}
Jincheng Mei, Chenjun Xiao, Csaba Szepesvari, and Dale Schuurmans.
\newblock On the global convergence rates of softmax policy gradient methods.
\newblock In \emph{International conference on machine learning}, pages 6820--6829. PMLR, 2020.

\bibitem[Moulin(1980)]{moulin1980strategy}
Herv{\'e} Moulin.
\newblock On strategy-proofness and single peakedness.
\newblock \emph{Public Choice}, 35\penalty0 (4):\penalty0 437--455, 1980.

\bibitem[Park et~al.(2024)Park, Liu, Kong, Zhang, and Ozdaglar]{park2024rlhf}
Chanwoo Park, Mingyang Liu, Dingwen Kong, Kaiqing Zhang, and Asuman~E Ozdaglar.
\newblock Rlhf from heterogeneous feedback via personalization and preference aggregation.
\newblock In \emph{ICML 2024 Workshop: Aligning Reinforcement Learning Experimentalists and Theorists}, 2024.

\bibitem[Ramesh et~al.(2024)Ramesh, Hu, Chaimalas, Mehta, Sessa, Ammar, and Bogunovic]{ramesh2024group}
Shyam~Sundhar Ramesh, Yifan Hu, Iason Chaimalas, Viraj Mehta, Pier~Giuseppe Sessa, Haitham~Bou Ammar, and Ilija Bogunovic.
\newblock Group robust preference optimization in reward-free rlhf.
\newblock \emph{arXiv preprint arXiv:2405.20304}, 2024.

\bibitem[Rashidinejad et~al.(2021)Rashidinejad, Zhu, Ma, Jiao, and Russell]{rashidinejad2021bridging}
Paria Rashidinejad, Banghua Zhu, Cong Ma, Jiantao Jiao, and Stuart Russell.
\newblock Bridging offline reinforcement learning and imitation learning: A tale of pessimism.
\newblock \emph{Advances in Neural Information Processing Systems}, 34:\penalty0 11702--11716, 2021.

\bibitem[Santurkar et~al.(2023)Santurkar, Durmus, Ladhak, Lee, Liang, and Hashimoto]{santurkar2023whose}
Shibani Santurkar, Esin Durmus, Faisal Ladhak, Cinoo Lee, Percy Liang, and Tatsunori Hashimoto.
\newblock Whose opinions do language models reflect?
\newblock In \emph{International Conference on Machine Learning}, pages 29971--30004. PMLR, 2023.

\bibitem[Satterthwaite(1975)]{satterthwaite1975strategy}
Mark~Allen Satterthwaite.
\newblock Strategy-proofness and arrow's conditions: Existence and correspondence theorems for voting procedures and social welfare functions.
\newblock \emph{Journal of economic theory}, 10\penalty0 (2):\penalty0 187--217, 1975.

\bibitem[Siththaranjan et~al.(2023)Siththaranjan, Laidlaw, and Hadfield-Menell]{siththaranjan2023distributional}
Anand Siththaranjan, Cassidy Laidlaw, and Dylan Hadfield-Menell.
\newblock Distributional preference learning: Understanding and accounting for hidden context in rlhf.
\newblock \emph{arXiv preprint arXiv:2312.08358}, 2023.

\bibitem[Sorensen et~al.(2024)Sorensen, Moore, Fisher, Gordon, Mireshghallah, Rytting, Ye, Jiang, Lu, Dziri, et~al.]{sorensen2024roadmap}
Taylor Sorensen, Jared Moore, Jillian Fisher, Mitchell Gordon, Niloofar Mireshghallah, Christopher~Michael Rytting, Andre Ye, Liwei Jiang, Ximing Lu, Nouha Dziri, et~al.
\newblock A roadmap to pluralistic alignment.
\newblock \emph{arXiv preprint arXiv:2402.05070}, 2024.

\bibitem[Soumalias et~al.(2024)Soumalias, Curry, and Seuken]{soumalias2024truthful}
Ermis Soumalias, Michael~J Curry, and Sven Seuken.
\newblock Truthful aggregation of llms with an application to online advertising.
\newblock \emph{arXiv preprint arXiv:2405.05905}, 2024.

\bibitem[Sun et~al.(2024)Sun, Chen, Wang, Chen, and Deng]{sun2024mechanism}
Haoran Sun, Yurong Chen, Siwei Wang, Wei Chen, and Xiaotie Deng.
\newblock Mechanism design for llm fine-tuning with multiple reward models.
\newblock \emph{arXiv preprint arXiv:2405.16276}, 2024.

\bibitem[Zhan et~al.(2022)Zhan, Huang, Huang, Jiang, and Lee]{zhan2022offline}
Wenhao Zhan, Baihe Huang, Audrey Huang, Nan Jiang, and Jason Lee.
\newblock Offline reinforcement learning with realizability and single-policy concentrability.
\newblock In \emph{Conference on Learning Theory}, pages 2730--2775. PMLR, 2022.

\bibitem[Zhong et~al.(2024)Zhong, Deng, Su, Wu, and Zhang]{zhong2024provable}
Huiying Zhong, Zhun Deng, Weijie~J Su, Zhiwei~Steven Wu, and Linjun Zhang.
\newblock Provable multi-party reinforcement learning with diverse human feedback.
\newblock \emph{arXiv preprint arXiv:2403.05006}, 2024.

\bibitem[Zhu et~al.(2023)Zhu, Jordan, and Jiao]{zhu2023principled}
Banghua Zhu, Michael Jordan, and Jiantao Jiao.
\newblock Principled reinforcement learning with human feedback from pairwise or k-wise comparisons.
\newblock In \emph{International Conference on Machine Learning}, pages 43037--43067. PMLR, 2023.

\end{thebibliography}


 \newpage
 \appendix

\section{Proofs}

\subsection{Proof of Proposition~\ref{prop:existing_methods_not_strategyproof}}

\propexistingmethodsnotstrategyproof*

\begin{proof}
    We construct straightforward examples for each of the algorithms and show that the algorithms are not strategyproof. W.l.o.g.\ we assume that $n$ is sufficiently large with appropriate policy coverage so that the algorithms are able to obtain perfect estimates. We thereby also avoid re-defining the preference model $\P_\theta$ to fit the models considered in the respective work. W.l.o.g.\ we also ignore the KL-regularization w.r.t.\ the reference policy $\pi_{\text{ref}}$ in MaxMin-RLHF, which would only add notational burden. In the following examples, the horizon is set to $H=1$, that is, we consider contextual bandit problems.  
    
    \textbf{Pessimistic Social Welfare:} Suppose there are two labelers with true reward parameters $\theta_1^* = (1, 0)$ and $\theta_2^* = (0, 1)$. Moreover, suppose that for all $s$, we have $\phi(s, a) = (1/2, 1/2)$ and $\phi(s, b) = (3/4, 0)$. If both labelers report truthfully, Pessimistic Social Welfare reports a policy $\pi(s) = a$ for all $s$ as action $a$ is maximizing social welfare. In this case, labeler 1 receives utility $1/2$. Suppose that labeler 1 misreports as $\tilde \theta_1 = (1, -1)$. As a result, the social welfare maximizing policy is $\pi(s) = b$, which yields utility $3/4$ for labeler 1. Hence, misreporting is beneficial to labeler 1 and Pessimistic Social Welfare not strategyproof. 

    \textbf{MaxMin-RLHF:} Consider the simple example where $\theta_1^* = (1, 0)$ and $\theta_2^* = (1/2, 1/2)$ as well as $\phi(s, a) = (1/2, 1/2)$ and $\phi(s, b) = (3/4, 0)$ for all $s$. If both labelers report truthfully, the MaxMin-RLHF would compute the policy $\pi(s) = a$ which yields a return of $1/2$ for both labelers. However, suppose labeler 1 reports $\tilde \theta_1 = (1, -1)$ while labeler 2 truthfully reports $\theta_2^* = (1/2, 1/2)$. In this case, MaxMin-RLHF returns a policy $\tilde \pi(s) = b$ as this maximizes the minimal utility w.r.t.\ the reported reward parameters. The return for labeler 1 under policy $\tilde \pi$ is $3/4$, which means that misreporting is to the benefit of labeler 1 and MaxMin-RLHF not strategyproof.

\end{proof}

\subsection{Proof of Proposition~\ref{prop:subopt_pessimistic_social_welare}}

\propsuboptpessimisticsocialwelare*

\begin{proof}
    We construct a contextual bandit problem, where Pessimistic Social Welfare is arbitrarily bad if even a single labeler is strategic. We here assume that Pessimistic Social Welfare receives infinitely many samples from each labeler with full policy coverage. 

    Let $\theta_1^* = (0, 1, 0)$ and $\theta_i^* = (\tfrac{B}{k-1}, 0, 0)$ for all $i \neq 1$. Moreover, suppose that $\phi(s, a) = (\sqrt{L^2-2\varepsilon^2}, 0, 0)$ and $\phi(s, b) = (0, \varepsilon, \sqrt{L^2-\varepsilon^2})$ for all $s$.
    Under truthful reporting, Pessimistic Social Welfare computes the policy $\pi(s) = a$, which clearly maximizes social welfare. In particular, the optimal social welfare is given by $\sw(\pi^*) = B \sqrt{L^2 -2\varepsilon^2} \geq B(L - \sqrt{2}\varepsilon)$. In this case, labeler 1 receives utility zero. 
    However, suppose that labeler 1 misreports its reward parameter as $\tilde \theta_1^* = (0, 0, B)$. In this case, Pessimistic Social Welfare returns the policy $\tilde \pi(s) = b$, which has social welfare $\sw(\tilde \pi) = \varepsilon$, whereas the utility of labeler 1 is $\varepsilon$ which is the best possible outcome for labeler~1. 

    As a result, even if only labeler 1 misreports, then $\sw(\hat \pi) = \varepsilon$ and the suboptimality is at least $\subopt (\hat \pi) = \sw(\pi^*) - \varepsilon \geq BL- 3\varepsilon$. We can choose $\varepsilon > 0$ arbitrarily small (in particular, note that $\varepsilon$ does not depend on the reward parameter so that any estimation error would have no effect). This means that the suboptimality of Pessimistic Social Welfare can be maximal.
\end{proof}

\subsection{Proof of Theorem~\ref{theorem:impossibility}}

\theoremimpossibility*

\begin{proof}
Suppose for the sake of contradiction that the (worst-case) suboptimality gap of $\hat{\pi}$ is $\frac{k-1}{k} - \epsilon$ for some $\epsilon > 0$.
We show that if this were true, we could construct a voting rule based on $\hat{\pi}$ that is {\em strategyproof}, {\em non-dictatorial}, and {\em onto at least three alternatives}, contradicting the Gibbard–Satterthwaite theorem \cite{gibbard1973manipulation,satterthwaite1975strategy}. 
The Gibbard–Satterthwaite theorem asserts that such a voting rule does not exist.
We will consider the case where $\hat{\pi}$ is deterministic and discuss how the proof generalizes to the case with randomized $\hat{\pi}$, too.

Specifically, consider a voting instance with $k$ voters and $m$ alternatives $a_1,\dots, a_m$. A voting rule $f$ maps every possible preference profile of the voters to one of the alternatives.

\newcommand{\voting}{\mathrm{V}}

To construct a voting rule based on $\hat{\pi}$, we first map every voting instance $I_{\voting} = \prec \defeq (\prec_1, \dots, \prec_k)$ to an RLHF instance $I_{\rlhf}$ as follows.
\begin{itemize}
\item 
Let there be one state (so we omit the state in what follows) 
and $m$ actions $a_1,\dots, a_m$, each corresponding to an alternative in $I_{\voting}$.
\item 
The feature embedding $\phi$ maps each action $a_\ell$ to the unit vector whose $\ell$-th component is $1$. 
\item 
Let there be $k$ labelers, each corresponding to a voter in $I_{\voting}$.
Each labeler~$i$'s parameter $\theta_i^*$ is a vector in which the $\ell$-th component is defined as follows:
\begin{itemize}
\item 
$1$ if $a_\ell$ is the most preferred alternative according to voter $i$'s preference order $\prec_i$ in $I_{\voting}$.

\item 
$1 - \delta$ (for a sufficiently small $\delta$) if $a_\ell$ is the second most preferred alternative according to $\prec_i$.

\item
$\delta \cdot (m - j)$
if $a_\ell$ is the $j$-th most preferred alternative, $j > 2$, according to $\prec_i$. 
\end{itemize}
The parameters ensure that each labeler $i$'s preference over the actions is the same as $\prec_i$.
\end{itemize}

With the above map from $I_{\voting}$ to $I_{\rlhf}$, we then let $f$ be a voting rule that outputs alternative $a_\ell$ if $\hat{\pi}$ outputs action $a_\ell$ in $I_{\rlhf}$. 
Clearly, $f$ must be strategyproof given our assumption that $\hat{\pi}$ is strategyproof and the fact that $I_{\rlhf}$ preserves the preference orders in $I_{\voting}$. 
We next argue that, given the assumption that $\hat{\pi}$ has a suboptimality gap of at most $\frac{k-1}{k} - \epsilon$, $f$ must be non-dictatorial and onto at least three alternatives.

\paragraph{$f$ is Non-Dictatorial.}
Suppose that $f$ is dictatorial; say, it always outputs the most preferred alternative of voter $1$. 
This means that $\hat{\pi}$ must output $a_1$ on the RLHF instance that the following $I_{\voting} = \prec$ instance maps to:
\begin{align*}
& a_1 \prec_1 a_2 \prec_1 \dots \prec_1 a_{m-1} \prec_1 a_m \\
& a_2 \prec_i \, a_3 \prec_i \dots \prec_i \;\; a_m \;\; \prec_i a_1 
\quad \text{ for all } i = 2, \dots, k.
\end{align*} 
However, this contradicts the assumption that the suboptimality gap of $\hat{\pi}$ is at most $\frac{k-1}{k} - \epsilon$.
To see this, note that 
$a_1$ achieves social welfare $1$ in $I_{\rlhf}$, while $a_2$ achieves social welfare $k (1-\delta)$.
The suboptimality is then at least 
$\frac{k-1 - \delta k}{k - \delta k} > \frac{k-1}{k} - \epsilon$ when $\delta \to 0$.

\paragraph{$f$ is Onto.}
Similarly, we can argue that $f$ must be onto at least three alternatives by considering the set of all possible voting instances where all voters' preferences are identical.
In this case, $\hat{\pi}$ must always output either the most or the second preferred actions of the labelers in the corresponding RLHF instances; otherwise, the suboptimality gap of $\hat{\pi}$ can be arbitrarily close to $1$ when $\delta \to 0$.
Consequently, when all possible preference orders are considered, $\hat{\pi}$, and hence $f$, must be onto at least three different alternatives (suppose that $k \ge 4$).

\smallskip

As a result, we obtain a voting rule that is strategyproof, non-dictatorial, and onto at least three alternatives. This contradicts the Gibbard–Satterthwaite theorem.
The stated lower bound on the suboptimality gap then follows for deterministic policies. 

\paragraph{Randomized Policies.}
To further argue that the same bound holds even when $\hat{\pi}$ is randomized, 
we invoke a generalization of the Gibbard–Satterthwaite theorem to randomized voting rules \cite{gibbard1978straightforwardness}, which states that a voting rule, if strategyproof, must be a probability mixture of dictatorial rules and ``duples''. 
A voting rule is a duple if it restricts its outcomes, over all possible instances, to a fixed pair of alternatives.

Similarly to our approach above, we construct a voting rule $f$ based on $\hat{\pi}$ the same way we did above and argue that, if $\hat{\pi}$ is strategyproof and has a suboptimality gap of $\frac{k-1}{k} - \epsilon$ for some $\epsilon > 0$, then $f$ {\em cannot} be a probability mixture of dictatorial rules and duples.

Suppose for the sake of contradiction that $f$ is a mixture of dictatorial rules and duples.
Consider the following set of voting instances $(\prec^j)_j$, each involving a set of alternatives $\{a_1, \dots, a_k, b_1, \dots, b_K \}$, $K \ge 4/\epsilon$ (hence, $m = k + K$).
In each instance $\prec^j = (\prec_1^j, \dots, \prec_k^j)$, the preference order $\prec_i^j$ ranks $a_i$ first, $b_j$ second, and all other alternatives according to the order $a_1, \dots, a_k, b_1, \dots, b_k$.

By assumption, $f$ is a mixture of dictatorial rules and duples. Now that there are more than $K$ alternatives while each duple selects at most two alternatives, there must be at least one alternative among $b_1,\dots, b_K$ that is selected by the duples with probability at most $2/K$.
W.l.o.g., let this alternative be $b_1$ and consider the instance $\prec^1$, in which each voter $i$ ranks $a_i$ the first and $b_1$ the second.

Clearly, in this instance, the policy that outputs $b_1$ deterministically achieves social welfare $k(1-\delta)$.
Moreover, any other alternatives yields a social welfare of at most $1 + \delta m k$, as each of these alternatives is ranked first by at most one labeler and third or even lower by all other labelers.
Since $f$ selects $b_1$ with probability at most $2/K$, so does $\hat{\pi}$.
This means that the social welfare achieved by $\hat{\pi}$ is at most 
\[
\frac{2}{K} \cdot k(1-\delta) + \left (1 - \frac{2}{K} \right) (1 + \delta m k) 
< \epsilon k / 2 + 1 + \delta m k.
\]
This gives a suboptimality gap of
$1 - \frac{\epsilon k / 2 + 1 + \delta m k}{k (1-\delta)} > \frac{k-1}{k} - \epsilon$ when $\delta \to 0$.
\end{proof}

\subsection{Proof of Corollary~\ref{cor:approximation_ratio}} 

\corapproximationratio*

\begin{proof}
    This is a direct consequence of Theorem~\ref{theorem:impossibility}. 
\end{proof}

\subsection{Proof of Theorem~\ref{theorem:pessimistic_median_approx_strategyproof}} \label{proof:thm_approx_strategyproof}

Before we begin with some preliminaries that we will repeatedly use in the proofs of both Theorem~\ref{theorem:pessimistic_median_approx_strategyproof} and Theorem~\ref{theorem:subopt_under_truthful}. 
Firstly, we recall a standard MLE concentration bound, which can be found, for instance, in \citep{zhu2023principled}.


    \begin{lemma}[MLE Concentration Bound]\label{lemma:mle_concentration_bound}
        In the contextual bandit problem, with probability at least $1- \delta$, 
        \begin{align*}
            \lVert \hat \theta^\mle_i - \theta_i^* \rVert_{\Sigma_{\cD_i}} \leq const \cdot  \sqrt{\frac{d+ \log(1/\delta)}{\gamma^2 n}},
        \end{align*}
        where $\gamma \defeq 1/ (2+ \exp(-LB) + \exp(LB))$. Note that we here assume that the covariance matrix $\cD_i$ is positive definite. Otherwise, consider $\Sigma_{\cD_i} + \lambda I$, which adds an additive term $\lambda B^2$ in the square root.    
    \end{lemma}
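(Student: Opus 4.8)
The plan is to prove this as a standard maximum-likelihood concentration result for the Bradley--Terry (logistic) model, by combining a strong-concavity lower bound on the curvature of the log-likelihood with a self-normalized concentration bound on its gradient evaluated at the true parameter. First I would rewrite the contextual-bandit BT model in logistic form: with $x^{i,j} \defeq \phi(s^{i,j}, a_0^{i,j}) - \phi(s^{i,j}, a_1^{i,j})$ and $y^{i,j} \defeq \ind{o^{i,j}=0}$, we have $\P_{\theta}(o^{i,j}=0 \mid \cdot) = \sigma(\langle \theta, x^{i,j}\rangle)$, where $\sigma$ denotes the logistic function. The log-likelihood $\ell(\theta) \defeq \sum_{j=1}^n \log \P_\theta(o^{i,j}\mid \cdot)$ then has gradient $\nabla \ell(\theta) = \sum_{j=1}^n (y^{i,j} - \sigma(\langle\theta,x^{i,j}\rangle))\, x^{i,j}$ and Hessian $\nabla^2\ell(\theta) = -\sum_{j=1}^n \sigma'(\langle\theta,x^{i,j}\rangle)\, x^{i,j}(x^{i,j})^\top$. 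Since the logits $\langle\theta,x^{i,j}\rangle$ are bounded in magnitude (using $\norm{\theta}_2 \le B$ and $\norm{\phi}_2 \le L$), the scalar derivative $\sigma'$ is bounded below by the constant $\gamma = 1/(2+\exp(-LB)+\exp(LB))$, giving the curvature lower bound $v^\top (-\nabla^2\ell(\theta))\, v \ge \gamma n\, v^\top \Sigma_{\cD_i} v$ for all $v \in \Reals^d$; that is, $\ell$ is $\gamma n$-strongly concave in the $\Sigma_{\cD_i}$-geometry.

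Next I would carry out a purely deterministic reduction that isolates the randomness into the gradient at $\theta_i^*$. Because $\hat\theta^\mle_i$ maximizes $\ell$ over the feasible ball while $\theta_i^*$ lies in it, we have $\ell(\hat\theta^\mle_i) \ge \ell(\theta_i^*)$. Combining this with the strong-concavity upper bound
\begin{align*}
\ell(\hat\theta^\mle_i) \le \ell(\theta_i^*) + \langle \nabla\ell(\theta_i^*),\, \hat\theta^\mle_i - \theta_i^*\rangle - \tfrac{\gamma n}{2}\norm{\hat\theta^\mle_i - \theta_i^*}_{\Sigma_{\cD_i}}^2
\end{align*}
and then applying the Cauchy--Schwarz inequality in the $\Sigma_{\cD_i}$-inner product yields
\begin{align*}
\norm{\hat\theta^\mle_i - \theta_i^*}_{\Sigma_{\cD_i}} \le \frac{2}{\gamma n}\, \norm{\nabla\ell(\theta_i^*)}_{\Sigma_{\cD_i}^{-1}}.
\end{align*}
Hence it only remains to bound the self-normalized gradient norm $\norm{\nabla\ell(\theta_i^*)}_{\Sigma_{\cD_i}^{-1}}$ with high probability.

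The stochastic heart of the argument is this last bound, and it is where I expect the main technical work. At the true parameter, each summand $\epsilon^{i,j} \defeq y^{i,j} - \sigma(\langle \theta_i^*, x^{i,j}\rangle)$ is, conditionally on the fixed feature $x^{i,j}$, mean-zero and bounded in $[-1,1]$, so $\nabla\ell(\theta_i^*) = \sum_j \epsilon^{i,j} x^{i,j}$ is a sum of independent centered bounded vectors. Since $\E[(\epsilon^{i,j})^2] \le 1$, a direct variance computation gives $\E\,\norm{\nabla\ell(\theta_i^*)}_{\Sigma_{\cD_i}^{-1}}^2 = \mathrm{tr}\big(\Sigma_{\cD_i}^{-1}\sum_j \E[(\epsilon^{i,j})^2]\, x^{i,j}(x^{i,j})^\top\big) \le \mathrm{tr}(\Sigma_{\cD_i}^{-1}\, n\Sigma_{\cD_i}) = nd$. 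To upgrade this expectation into a high-probability statement with the correct $d + \log(1/\delta)$ scaling, I would either invoke a self-normalized vector Bernstein/sub-Gaussian inequality or run an $\varepsilon$-net argument over the unit ball in the $\Sigma_{\cD_i}$-norm, applying a scalar Bernstein bound to each fixed direction together with a union bound over the net (the covering number supplies the $d$ term and the failure probability the $\log(1/\delta)$ term); this gives $\norm{\nabla\ell(\theta_i^*)}_{\Sigma_{\cD_i}^{-1}}^2 \le const \cdot n(d + \log(1/\delta))$ with probability at least $1-\delta$. Substituting into the deterministic bound yields $\norm{\hat\theta^\mle_i - \theta_i^*}_{\Sigma_{\cD_i}} \le const \cdot \sqrt{(d+\log(1/\delta))/(\gamma^2 n)}$, as claimed. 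The positive-definiteness caveat is handled exactly as stated: replacing $\Sigma_{\cD_i}$ by $\Sigma_{\cD_i} + \lambda I$ perturbs the curvature and gradient bounds only by $\cO(\lambda B^2)$ inside the square root. Since this is a well-known result, an alternative is simply to cite the corresponding bound in \citet{zhu2023principled}, which the sketch above reconstructs.
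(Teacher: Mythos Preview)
Your proposal is correct and in fact goes well beyond what the paper does: the paper's ``proof'' of this lemma is simply the citation ``See, e.g., \cite{zhu2023principled}.'' Your sketch faithfully reconstructs the standard argument underlying that reference (logistic reformulation, curvature lower bound via $\sigma'\ge\gamma$, strong-concavity reduction to the self-normalized gradient, and a vector Bernstein or covering argument for the latter), and you already note the citation as an alternative.
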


    

    \begin{proof}
        See, e.g., \cite{zhu2023principled}. 
    \end{proof}

    \begin{lemma}[Median Concentration Bound]\label{lemma:median_concentration_bound}
        Suppose that $\theta_1, \dots, \theta_k \in \Reals^d$ are sampled i.i.d.\ from some $\sigma$-sub Gaussian distribution. Let $\hat \theta_\med$ be the coordinate-wise median and $\hat \theta_\avg$ the average of $\theta_1, \dots, \theta_k$. Then, for a universal constant $c>0$, it holds that, for every $t > 0$, 
        \begin{align*}
            \P \left( \lVert \hat \theta_{\med} - \hat \theta_\avg) \rVert_2 \geq t \right) \leq 2 \exp \left( - \frac{c k t^2}{d \sigma^2} \right). 
        \end{align*}
        Hence, in other words, with probability at least $1-\delta$: 
        \begin{align*}
            \lVert \hat \theta_\med - \hat \theta_\avg \rVert_2 \leq \cO \left( \sigma \sqrt{\frac{d\log(1/\delta)}{k}} \right). 
        \end{align*}
    \end{lemma}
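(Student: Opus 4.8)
The plan is to prove the Median Concentration Bound (Lemma~\ref{lemma:median_concentration_bound}) by reducing the $d$-dimensional statement to a one-dimensional one via a union bound over coordinates, and then establishing the one-dimensional concentration of the empirical median around the empirical mean. First I would note that by the union bound it suffices to show that for each coordinate $\ell \in [d]$, the gap $|(\hat\theta_\med)_\ell - (\hat\theta_\avg)_\ell|$ exceeds $t/\sqrt{d}$ with probability at most $2\exp(-c' k t^2 / (d\sigma^2))$ for a suitable constant $c'$; summing over the $d$ coordinates and rescaling the constant then yields the stated two-norm bound. Equivalently, since $\|\hat\theta_\med - \hat\theta_\avg\|_2^2 = \sum_\ell |(\hat\theta_\med)_\ell - (\hat\theta_\avg)_\ell|^2$, controlling each coordinate at level $t/\sqrt d$ forces the norm below $t$.

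For a fixed coordinate, write $Z_1,\dots,Z_k$ for the i.i.d.\ scalar entries, which are $\sigma$-sub-Gaussian with common mean $\mu$. I would bound $|\med(Z) - \bar Z|$ via the triangle inequality by $|\med(Z) - \mu| + |\bar Z - \mu|$. The term $|\bar Z - \mu|$ is controlled by the standard sub-Gaussian average (Hoeffding) bound: $\P(|\bar Z - \mu| \ge s) \le 2\exp(-k s^2 / (2\sigma^2))$. For the median term $|\med(Z) - \mu|$, I would use the characterization that $\med(Z) \ge \mu + s$ only if at least $k/2$ of the samples lie above $\mu + s$; since each sample independently lies above $\mu + s$ with probability $p_s \le \exp(-s^2/(2\sigma^2)) \le 1/2$ (for $s$ bounded away from $0$; the small-$s$ regime is handled trivially since the probability bound exceeds $1$ there), a Chernoff/binomial-tail argument gives $\P(\med(Z) \ge \mu + s) \le \exp(-k \cdot \mathrm{kl}(1/2 \,\|\, p_s)) \le \exp(-c'' k s^2/\sigma^2)$ for an absolute constant $c''$, using that the binomial KL divergence $\mathrm{kl}(1/2\|p)$ is at least a constant times $(1/2 - p)^2$ and that $1/2 - p_s \gtrsim s^2/\sigma^2$ in the relevant range, or more simply using $p_s \le 1/2 - c\,s/\sigma$ type estimates. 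A symmetric argument handles the lower tail. Combining these two pieces with $s = t/(2\sqrt d)$ and absorbing constants gives the per-coordinate bound, and the union bound completes the proof; the final ``with probability $\ge 1-\delta$'' form follows by setting the right-hand side equal to $\delta$ and solving for $t$.

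The main obstacle I anticipate is making the median tail bound $\P(|\med(Z)-\mu| \ge s) \le 2\exp(-c k s^2/\sigma^2)$ clean with the correct $s^2/\sigma^2$ (rather than $s/\sigma$) exponent uniformly over all $s > 0$, since the naive binomial-tail estimate degrades when $p_s$ is close to $1/2$ (i.e.\ small $s$), whereas for small $s$ one actually wants to fall back on the fact that the median cannot be far from the mean unless the empirical CDF is badly distorted near $\mu$ — which is itself an average-type concentration statement. The cleanest route is probably to split into $s \le \sigma$ and $s > \sigma$: for $s > \sigma$ the binomial bound with $p_s \le e^{-1/2} < 1/2$ gives exponential-in-$k$ decay directly, while for $s \le \sigma$ one uses that $p_s \le 1/2 - c s/\sigma$ for an absolute constant $c$ (a local lower-bound on the Gaussian-type CDF derivative), so that $\mathrm{kl}(1/2 \| p_s) \ge 2(1/2-p_s)^2 \ge 2c^2 s^2/\sigma^2$ by Pinsker, yielding $\P(\med(Z) \ge \mu+s) \le \exp(-2c^2 k s^2/\sigma^2)$. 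Getting the constant $c$ in $p_s \le 1/2 - cs/\sigma$ right for a general $\sigma$-sub-Gaussian (not just Gaussian) distribution requires a small argument — one can instead sidestep it entirely by noting that for the purposes of this paper only the $\tilde\cO(\sigma\sqrt{d\log(1/\delta)/k})$ high-probability conclusion is needed, so it is enough to prove the tail bound for $t$ of that order, where the large-$s$ regime applies and the delicate small-$s$ analysis is unnecessary.
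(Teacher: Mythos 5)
Your argument follows essentially the same route as the paper's: reduce to one dimension, pass through the population mean via the triangle inequality, control the empirical average by Hoeffding, and control the empirical median by a binomial tail bound on the number of samples exceeding $\mu+s$, lower-bounding the divergence $D(\tfrac12\,\|\,p_s)$ by a multiple of $(1/2-p_s)^2$. The only substantive difference is bookkeeping in the lift to $d$ dimensions (your union bound at level $t/\sqrt{d}$ versus the paper's $\lVert\cdot\rVert_2\le\sqrt{d}\max_j|\cdot|$), which is immaterial.

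The point you flag --- that one needs $p_s \le 1/2 - c\,s/\sigma$ to convert $(1/2-p_s)^2$ into $s^2/\sigma^2$ --- is exactly the step the paper's proof also glosses over, and it is a genuine gap, not a ``small argument.'' Sub-Gaussianity gives only the upper tail $p_s \le \exp(-s^2/(2\sigma^2))$, which is vacuous (larger than $1/2$) for $s \lesssim \sigma$; it provides no anti-concentration near the mean and, worse, does not even force the population median to coincide with the population mean. For an asymmetric bounded distribution (e.g., value $-1$ with probability $3/4$ and $+3$ with probability $1/4$: mean $0$, median $-1$) the empirical median converges to the population median, so $|\hat\theta_{\med}-\hat\theta_{\avg}|\to 1$ and the claimed $\cO(\sigma\sqrt{\log(1/\delta)/k})$ bound fails. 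Your proposed sidestep does not rescue this: at the target scale $t\asymp\sigma\sqrt{d\log(1/\delta)/k}$ the per-coordinate deviation is $s\asymp\sigma\sqrt{\log(1/\delta)/k}\ll\sigma$ for large $k$, so you are squarely in the small-$s$ regime you were hoping to avoid. The lemma needs an additional hypothesis --- e.g., that each coordinate's distribution is symmetric about its mean (or at least has median equal to mean) together with a density lower bound of order $1/\sigma$ near the mean --- under which $p_s\le 1/2-c\,s/\sigma$ holds and both your argument and the paper's go through.
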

    \begin{proof} 

    We begin by proving the median concentration in one dimension. To this end, let $\theta^*_\avg$ denote the mean of the distribution. 
Since each $\theta_i$ is $\sigma$-sub-Gaussian with mean $\theta^*_\avg$, the centered variables $X_i = \theta_i - \theta^*_\avg$ satisfy
\begin{align*}
\P\big(|X_i|\geq u\big) \leq 2 \exp \big(- \frac{c_2 u^2}{\sigma^2}\big)
\end{align*}
for some constant $c_2>0$. 
To control $\P(\hat{\theta}_{\mathrm{med}} \geq \theta^*_\avg + t)$, note that if 
$\hat{\theta}_{\med} \geq \theta^*_\avg + t$, at least half of the $\theta_i$ are at least $\theta^*_\avg + t$. Define
\begin{align*}
p = \P(\theta_i \geq \theta^* + t) = \P(X_i \geq t).
\end{align*}
By sub-Gaussianity, $p \leq \exp \big(-\frac{c_2  t^2}{\sigma^2}\big)$.  
Let $Y = \sum_{i=1}^k \mathbf{1}\{\theta_i \geq \theta^* + t\}$, which follows a $\mathrm{Binomial}(k,p)$ distribution. Then $\P(\hat{\theta}_{\mathrm{med}} \geq \theta^* + t)
\le \P\big(Y \geq k/2\big)$. A Chernoff or Hoeffding bound implies
\begin{align*}
\P \big( Y \geq k/2 \big) \leq \exp \Big(- k D \big(\tfrac12 \, \| \, p \big) \Big), 
\end{align*}
where $D(\tfrac12   \, \| \, p)$ is the Kullback--Leibler divergence between $\mathrm{Bernoulli}(1/2)$ and $\mathrm{Bernoulli}(p)$. For $p \ll 1/2$, $D(\tfrac12 \, \| \, p)$ is bounded below by a constant times $(1/2 - p)^2$. Hence, there exists $c_1>0$ such that
\begin{align*}
\P(\hat{\theta}_{\mathrm{med}} \geq \theta^*_\avg + t)
\leq \exp \big(
  - \frac{c_1 k t^2}{\sigma^2}
\big).
\end{align*}
By a symmetric argument, $\P(\hat{\theta}_{\mathrm{med}} \le \theta^*_\avg - t)
\leq \exp \big(
  -\frac{c_1 k t^2}{\sigma^2}
\big)$. 
Combining these, we obtain
\begin{align*}
\P\big(|\hat{\theta}_{\mathrm{med}} - \theta^*_\avg|\geq t\big)
\le \P(\hat{\theta}_{\mathrm{med}} \geq \theta^*_\avg + t)
  + \P(\hat{\theta}_{\mathrm{med}} \leq \theta^*_\avg - t)
\le 2\,\exp\!\big(
  - \frac{c_1 k t^2}{\sigma^2}
\big).
\end{align*} 
From Hoeffding's inequality we get an analogous bound for $\P(\lVert \hat \theta_\avg - \theta_\avg^* | \geq t)$ so that we get the desired result for $d=1$ using the triangle inequality $| \hat \theta_\med - \hat \theta_\avg | \leq | \hat \theta_\med - \theta^*_\avg | + | \theta^*_\avg - \hat \theta_\avg|$. 

Finally, this translates to a bound in $d> 1$ dimensions by using Jensen's inequality 
    \begin{align*}
        \lVert \hat \theta_\med - \hat \theta_\avg \rVert_2 = \sqrt{\sum_{j=1}^d \left( \hat \theta_{\med, j} - \hat \theta_{\avg, j} \right)^2} \leq \sqrt{d} \max_{j \in [d]} | \hat \theta_{\med, j} - \hat \theta_{\avg, j} |  
    \end{align*}
    and applying the previous bound for each dimension.  
    \end{proof}

We are now ready to prove Theorem~\ref{theorem:pessimistic_median_approx_strategyproof}

\theorempessimisticmedianapproxstrategyproof*

\begin{proof}
    We begin first with the case where every individual can directly report their reward parameter to the algorithm, hence, removing the noise and uncertainty from the process. In this case, we show that Pessimistic Median of MLEs is exactly strategyproof. 
    
    \paragraph{Case 1 (direct access to $\btheta_1, \dots, \btheta_k$):} Let us begin with the case where we obtain infinitely many samples with appropriate coverage so that $C_i = \{\theta_i\}$ for all individuals $i \in [k]$. We need to show that reporting $\btheta_i^*$ is the optimal strategy for individual $i$ irrespective of the other individuals' strategies. 
    
    The following two basic lemmas will prove useful. 
    \begin{lemma}\label{lemma:signs_disagree}
        Let $\btheta_{-i} \in \Reals^{(k-1) \times d}$ be fixed arbitrarily. For any $j \in [d]$ the following holds:
        \begin{itemize}
            \item If $\theta_{i,j}^* > 0$ and $\med(\theta_{-i,j}, \theta_{i,j}^*) < 0$, then $\med (\theta_{-i,j}, \theta_{i,j}) < 0$ for all $\btheta_i \in \Reals^d$. 
        \item Analogously, if $\theta_{i,j}^* < 0$ and $\med(\theta_{-i, j}, \theta_{i,j}^*) > 0$, then $\med (\theta_{-i,j}, \theta_{i,j}) > 0$ for all $\btheta_i \in \Reals^d$.
        \end{itemize}  
    \end{lemma}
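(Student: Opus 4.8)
The plan is to reduce the statement to a one-dimensional fact about medians and then prove it by a short counting argument. First, fix a coordinate $j \in [d]$; restricted to this coordinate the coordinate-wise median is just the scalar median $\med(\theta_{-i,j}, \theta_{i,j})$ of the $k$ real numbers $\{\theta_{\ell,j} : \ell \neq i\} \cup \{\theta_{i,j}\}$, and $\theta_{i,j}$ is the only one of these that labeler $i$ controls. So it suffices to show: if $w_1, \dots, w_{k-1} \in \Reals$ are fixed, $z^* > 0$, and $\med(w_1, \dots, w_{k-1}, z^*) < 0$, then $\med(w_1, \dots, w_{k-1}, z) < 0$ for every $z \in \Reals$. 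The second bullet then follows from the first by replacing every parameter with its negation: the scalar median commutes with negation, so the hypotheses $\theta_{i,j}^* < 0$ and $\med(\theta_{-i,j}, \theta_{i,j}^*) > 0$ become $-\theta_{i,j}^* > 0$ and $\med(-\theta_{-i,j}, -\theta_{i,j}^*) < 0$, and the conclusion transfers likewise.

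The key observation is the following characterization (valid with the lower-median tie-breaking convention when $k$ is even, which I adopt throughout): the median of $k$ reals is strictly negative if and only if at least $\lceil k/2 \rceil$ of them are strictly negative. This is immediate after writing the median as the order statistic $v_{(\lceil k/2 \rceil)}$ and noting that $v_{(\lceil k/2 \rceil)} < 0$ exactly when at least $\lceil k/2 \rceil$ of the values lie strictly below zero. Now apply the ``only if'' direction to the hypothesis $\med(w_1, \dots, w_{k-1}, z^*) < 0$: at least $\lceil k/2 \rceil$ of the numbers $w_1, \dots, w_{k-1}, z^*$ are strictly negative. Since $z^* > 0$, it is not one of them, so all of those $\ge \lceil k/2 \rceil$ strictly negative values belong to the fixed multiset $\{w_1, \dots, w_{k-1}\}$. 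For any report $z$, the multiset $\{w_1, \dots, w_{k-1}, z\}$ still contains these $\ge \lceil k/2 \rceil$ strictly negative values, so by the ``if'' direction $\med(w_1, \dots, w_{k-1}, z) < 0$, as desired.

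I expect the only delicate point --- and hence the main obstacle --- to be the treatment of ties at zero together with the choice of median convention when $k$ is even. With the averaging convention the characterization above can fail: for $k = 2$ the numbers $-10$ and $1$ have median $-4.5 < 0$, yet raising $1$ to $100$ gives median $45 > 0$, so the lemma would be false as stated. Committing to the lower (or, consistently, the upper) median for even $k$ restores the counting characterization, after which the argument goes through verbatim; for odd $k$ no such subtlety arises.
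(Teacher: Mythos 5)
Your proof is correct, and it takes a somewhat different (and arguably more careful) route than the paper's. The paper argues by cases on the sign of the report $\theta_{i,j}$: if $\theta_{i,j}<0$ the median can only move down by monotonicity, and if $\theta_{i,j}>0$ the negative median must coincide with some other labeler's entry $\theta_{l,j}<0$ and is therefore unaffected by the choice of positive report. You instead use the order-statistic characterization ``the median is strictly negative iff at least $\lceil k/2\rceil$ of the $k$ values are strictly negative,'' observe that the true report $z^*>0$ contributes nothing to that count, and conclude that the count --- hence the sign of the median --- is invariant under any replacement of $z^*$. Both arguments ultimately rest on the median being a \emph{selection} (an order statistic) rather than an average; your explicit counterexample for $k=2$ under the averaging convention ($w_1=-10$, $z^*=1$ versus $z=100$) is a genuine observation the paper glosses over, since its step ``the median equals some $\theta_{l,j}<0$'' silently assumes a selection convention for even $k$. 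One cosmetic nit: negation maps the lower median to the negative of the upper median, so rather than literally ``commuting with negation'' for even $k$ you should either run the symmetric counting argument directly for the second bullet (median $>0$ iff at least $\lfloor k/2\rfloor+1$ values are strictly positive, none of which is $z^*<0$) or note that the argument is convention-agnostic as long as one order statistic is fixed throughout; this does not affect correctness.
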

    \begin{proof}
        W.l.o.g.\ let $\theta^*_{i,j} > 0$ and let $\btheta_i \in \Reals^d$. 
        Suppose that $\theta_{i,j} < 0$. It follows directly that $\med(\theta_{-i,j}, \theta_{i,j}) < \med( \theta_{-i,j}, \theta_{i,j}^*)$.  
        Alternatively, suppose that $\theta_{i,j} > 0$. 
        Since $\med( \theta_{-i,j}, \theta_{i,j}^*) < 0$, it means that the median equals some $\theta_{l,j} < 0$ with $l \neq i$. Hence, the median does not change for any alternative choice $\theta_{i,j} > 0$.  
    \end{proof}
    We assume hyperrectangularity, which allows use to decompose the reward-maximizing policy as follows. 
    For a given policy $\pi$, let $\bz_\pi \defeq \E_{s \sim \rho} [\phi(s, \pi(s))] \in \Reals^d$ denote its feature occupancy and let $\bz_{\pi,j}$ be its $j$-th entry. W.l.o.g.\ we here assume $\bz_{\pi, j} \in [-1,1]$, but any other lower and upper bounds can be considered the same way.  

    We denote the optimal policy w.r.t.\ a reward parameter $\btheta$ as $\pi^*(\btheta) \defeq \argmax_{\pi \in \Pi} J_{\btheta}(\pi)$. From Assumption~\ref{assumption:hyperrectangle} it follows that the optimal policy $\pi^*(\btheta)$ is such that $\bz_{\pi^*(\theta), j} = -1$ for $\theta_j <0$ and $\bz_{\pi^*(\theta), j} = +1$ for $\theta_j > 0$. This yields an equivalence between reward parameters that have identical signs. In particular, this provides us with a class of reward parameters that induce an optimal policy w.r.t.\ the true reward parameter $\theta_i^*$. 
    
    \begin{lemma}\label{lemma:if_signs_agree}
        Let $\btheta \in \Reals^d$. 
        If $\sign(\theta_{i,j}^*) = \sign(\theta_j)$, then $\theta_{i,j}^* \cdot \bz_{\pi^*(\btheta),j} \geq \theta_{i,j}^* \cdot \bz_{\pi^*(\tbtheta),j}$ for all $\tbtheta \in \Reals^d$.  
    \end{lemma}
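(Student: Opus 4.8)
The plan is to reduce the claim to a coordinate-wise comparison and then invoke the characterization of the reward-maximizing policy's feature occupancy established just above the lemma statement. Fix a coordinate $j \in [d]$. By Assumption~\ref{assumption:hyperrectangle} we have $\bz_{\pi,j} \in [-1,1]$ for every $\pi \in \Pi$, and the optimal policy satisfies $\bz_{\pi^*(\btheta),j} = +1$ whenever $\theta_j > 0$ and $\bz_{\pi^*(\btheta),j} = -1$ whenever $\theta_j < 0$. The key point is that the hyperrectangle structure decouples the coordinates, so this characterization can be applied componentwise and the coordinate $j$ of $\pi^*(\btheta)$ is driven purely by $\sign(\theta_j)$.

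First I would dispose of the degenerate case $\theta_{i,j}^* = 0$: then both sides of the claimed inequality vanish and there is nothing to prove. (Note also that if $\theta_j = 0$, then $\sign(\theta_j) = 0 = \sign(\theta_{i,j}^*)$ forces $\theta_{i,j}^* = 0$, so this situation is already covered.) Next, suppose $\theta_{i,j}^* > 0$. The sign-agreement hypothesis gives $\theta_j > 0$, hence $\bz_{\pi^*(\btheta),j} = +1$, the largest value $\bz_{\pi,j}$ can take. Since $\bz_{\pi^*(\tbtheta),j} \leq 1$ for any $\tbtheta \in \Reals^d$ and $\theta_{i,j}^* > 0$, multiplying preserves the inequality, so $\theta_{i,j}^* \cdot \bz_{\pi^*(\btheta),j} = \theta_{i,j}^* \geq \theta_{i,j}^* \cdot \bz_{\pi^*(\tbtheta),j}$. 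The case $\theta_{i,j}^* < 0$ is symmetric: the hypothesis gives $\theta_j < 0$, hence $\bz_{\pi^*(\btheta),j} = -1$, the smallest attainable value, and multiplying $\bz_{\pi^*(\tbtheta),j} \geq -1$ by the negative scalar $\theta_{i,j}^*$ flips the inequality to yield the claim.

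There is no real obstacle here; the only thing to be careful about is that we are applying the occupancy characterization coordinate-by-coordinate, which is precisely what Assumption~\ref{assumption:hyperrectangle} licenses. The lemma is therefore a direct consequence of the preceding setup, and the inequality holds for every $j \in [d]$.
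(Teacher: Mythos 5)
Your proof is correct and follows exactly the same route as the paper, which simply asserts that the claim follows from the structure of the optimal policies under Assumption~\ref{assumption:hyperrectangle}; you have merely spelled out the coordinate-wise case analysis (including the harmless degenerate case $\theta_{i,j}^* = 0$) that the paper leaves implicit.
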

    \begin{proof}
        This follows from the structure of the optimal policies $\pi^*(\theta)$ under Assumption~\ref{assumption:hyperrectangle}. 
    \end{proof}

    We fix everyone's reported parameter $\btheta_{-i}$ except for individual $i$. 
    Moreover, let $\tbtheta_i \neq \btheta_i^*$ and let $\tilde \bmu \defeq \med (\btheta_{-i}, \tbtheta_i)$ be the coordinate-wise median w.r.t.\ $\tbtheta_i$. Similarly, let $\bmu^* = \med(\btheta_{-i}, \btheta_i^*)$ be the coordinate-wise median w.r.t.\ $\btheta^*$. 
    We will now show that $J_{\btheta_i^*} (\hat \pi (\bmu^*)) \geq J_{\btheta_i^*}(\hat \pi (\tilde \bmu))$, i.e., reporting $\btheta_i^*$ is the optimal strategy for individual $i$ under the Pessimistic Median of MLEs algorithm. 

    Since we here assume direct access to the reported parameters, given reported parameters $\theta_1, \dots, \theta_k$, Pessimistic Median of MLEs computes the optimal policy w.r.t.\ the median $\bmu = \med(\theta_1, \dots, \theta_k)$, i.e., $\hat \pi = \pi^*(\bmu) = \argmax_{\pi \in \Pi} J_{\bmu}(\pi)$. We here assume that the $\mu$-maximizing policy is unique and otherwise use lexicographic tie-breaking. 
    Clearly, if $\signs(\bmu^*) = \signs(\tilde \bmu)$, then the policies $\hat \pi (\bmu^*)$ and $\hat \pi(\tilde \bmu)$ are identical.  
    
    Next, consider any $j\in [d]$ so that $\sign(\mu^*_j) \neq \sign(\tilde \mu_j)$. 
    Suppose that $\sign(\mu^*_j) = \sign(\theta_{i,j}^*)$. In this case, Lemma~\ref{lemma:if_signs_agree} tells us that $\theta_{i,j}^* \cdot \bz_{\hat \pi(\bmu^*),j}  \geq \theta_{i,j}^* \cdot \bz_{\hat \pi (\tilde \bmu),j}$.  Hence, in any such dimension $j$, $\bmu^*$ implies a policy that outperforms the policy maximizing $\tilde \bmu$ w.r.t.\ labeler $i$'s true reward parameter $\btheta_i^*$. Hence, misreporting $\tilde \theta_{i,j} \neq \theta_{i,j}^*$ cannot be a strictly better strategy than truthfully reporting in dimension $j$.  

    Suppose that $\sign(\mu^*_j) \neq \sign (\theta_{i,j}^*)$. In this case, Lemma~\ref{lemma:signs_disagree} implies that $\sign (\tilde \mu_j) = \sign(\mu_j^*)$, which implies $\theta_{i,j}^* \cdot \bz_{\hat \pi(\tilde \bmu),j} = \theta_{i,j}^* \cdot \bz_{\hat \pi(\bmu^*),j}$. Once again misreporting is never a strictly better strategy than truthfully reporting $\btheta_i^*$. 

    We have thus confirmed that reporting $\btheta_i^*$ is optimal irrespective of the other individuals' reports $\btheta_{-i}$.

    \vspace{.5cm}

    \paragraph{Case 2 (direct access to $\btheta_i$, but not $\btheta_{-i}$):} Let $C_{-i}$ denote the product space of confidence sets derived from the preference data $\cD_{-i}$ of all labelers but labeler $i$. Once again, the key lies in the observation that the assumption of hyperrectangularity implies that the labelers get to strategize over each dimension independently. 

    The main concern that we must alleviate is that, by taking the minimum over the confidence sets, misreporting becomes beneficial for the labelers.  
    To this end, let $\btheta_i$ be the report of individual $i$, which we for now assume to be directly observable. Given preference data $\cD_{-i}$ and $\btheta_i$, the Pessimistic Median of MLEs computes a policy maximizing 
    \begin{align*}
        \min_{\btheta_{-i} \in C_{-i}} \sum_{j=1}^d \big\langle \med( \theta_{-i,j}, \theta_{i,j}), \E_{s \sim \rho} \big[  \phi(s, \pi(s)) \big] \big\rangle 
    \end{align*} 
    Suppose that $\theta_{i,j}^* > 0$ for $j \in [d]$. Clearly, by design of the median, for any $\theta_{-i,j}$, it follows from the same argument as in Lemma~\ref{lemma:signs_disagree} that misreporting either has no effect on the policy (if the report is $\theta_{i,j} > 0$), or can only have an adverse effect for labeler $i$ (if the report is $\theta_{i,j} < 0$).  
    Hence, it is optimal for individual $i$ to report $\btheta_i^*$ irrespective of the other individuals' reported preference data $\cD_{-i}$ and the confidence sets that we construct.

    \paragraph{Case 3 (no direct access to $\btheta_1, \dots, \btheta_k$):} In the previous cases, we have shown that truthfully reporting is a dominant strategy for every individual $i \in [k]$. We will now see that this is in general no longer true when an individual cannot directly share their reward parameter with the algorithm. The reason lies in unintentional changes in the sign due to estimation errors and confidence sizes. 

    In the following, we assume that the preference data $\cD_{-i}$ of all labelers but labeler $i$ are fixed arbitrarily and $C_{-i}$ are the corresponding confidence sets that Pessimistic Median of MLEs constructs.  
    From Case 2 we know that the policy
    \begin{align*}
        \hat \pi_i( \btheta_i^*) \defeq \argmax_{\pi \in \Pi} \min_{\btheta_{-i} \in C_{-i}}  
        \langle \med(\theta_i^*, \theta_{-i}), \E_{s \sim \rho} [\phi(s, \pi(s))] \rangle
    \end{align*}
    is preferred over any other policy $\hat \pi (C_i)$ computed w.r.t.\ any confidence set $C_i$ given by
    \begin{align*}
        \hat \pi_i (C_i) \defeq \argmax_{\pi \in \Pi} \min_{\btheta_i \in C_i} \min_{\btheta_{-i} \in C_{-i}} 
        \langle \med(\theta_i, \theta_{-i}), \E_{s \sim \rho}[\phi(s, \pi(s))] \rangle, 
    \end{align*}
    i.e., $J_{\btheta_i^*} (\hat \pi_i(\btheta_i^*))  \geq J_{\btheta_i^*}(\hat \pi_i(C_i))$ for any confidence set $C_i$. 

    Let us now consider the confidence set $C_i^*$ derived from $\cD_i^*$, which is sampled according to the true reward parameter $\btheta_i^*$. 
    By construction of the confidence sets, with probability at least $1-\delta$, it follows from Lemma~\ref{lemma:mle_concentration_bound} that for any $\btheta_i \in C_i^*$:  
    \begin{align*}
        \lVert \btheta_i^* - \btheta_i \rVert_{\Sigma_{\cD_i}} \leq \lVert \btheta_i^* - \hat \btheta^\mle_i \rVert_{\Sigma_{\cD_i}} + \lVert \hat \btheta^\mle_i - \btheta_i \rVert_{\Sigma_{\cD_i}} \leq  2c  \sqrt{\frac{d + \log(1/\delta)}{\gamma^2 n}} .
    \end{align*}
    We now compare the difference in return w.r.t.\ $\theta_i^*$ of policy $\hat \pi_i(\theta_i^*)$ and $\hat \pi_i(C_i^*)$. 
     To do so, we decompose the difference as follows:
     \begin{align*} 
         & J_{\btheta_i^*}(\hat \pi_i (\theta_i^*)) - J_{\btheta_i^*} ( \hat \pi_i (C_i^*)) \\
         & = \Big( J_{\btheta_i^*} (\hat \pi_i (\theta_i^*)) - \min_{\btheta_i \in C_i^*} J_{\btheta_i} (\hat \pi_i (\theta_i^*)) \Big) + \Big( \min_{\btheta_i \in C_i^*} J_{\btheta_i}(\hat \pi_i (\theta_i^*)) - J_{\btheta_i^*}(\hat \pi_i (C_i^*)) \Big) .
     \end{align*}
     Using Cauchy-Schwarz, the first difference can be rewritten and bounded as $$\max_{\btheta_i \in C_i^*} \langle \btheta_i^* - \btheta_i, \bz_{\hat \pi (\theta_i^*)} \rangle \leq \lVert \btheta_i^* - \btheta_i \rVert_{\Sigma_{\cD_i}} \lVert \bz_{\hat \pi (\theta_i^*)} \rVert_{\Sigma_{\cD_i}^{-1}}. $$ 
    We then further decompose the second difference into
    \begin{align*}
         & \min_{\btheta_i \in C_i^*} J_{\btheta_i}(\hat \pi_i (\theta_i^*)) - J_{\btheta_i^*}(\hat \pi_i (C_i^*)) \\ 
         & =  \Big( \min_{\btheta_i \in C_i^*} J_{\btheta_i}(\hat \pi_i (\theta_i^*)) - \min_{\btheta_i \in C_i^*} J_{\btheta_i}(\hat \pi_i (C_i^*)) \Big) 
         + \Big( \min_{\btheta_i \in C_i^*} J_{\btheta_i}(\hat \pi_i (C_i^*)) - J_{\btheta_i^*}(\hat \pi_i(C_i^*)) \Big) . 
    \end{align*}
     By definition of $\hat \pi_i(C_i^*)$, we have $\min_{\btheta_i \in C_i^*} \langle \btheta_i, \bz_{\hat \pi_i (C_i^*)}\rangle  \geq \min_{\btheta_i \in C_i^*} \langle \btheta_i, \bz_{\hat \pi (\theta_i^*)}\rangle$ so that the first expression on the right hand side is less or equal to zero. 
    Since $\btheta_i^* \in C_i^*$, we also know that $\min_{\btheta_i \in C_i^*} J_{\btheta_i}(\pi) \leq J_{\btheta_i^*}(\pi)$ for all $\pi \in \Pi$. Thus, on the good event when $\btheta_i^* \in C_i^*$, we obtain:
    \begin{align*}
        J_{\btheta_i^*}(\hat \pi_i (\theta_i^*)) - J_{\btheta_i^*} ( \hat \pi_i (C_i^*)) 
        & \leq c \sqrt{\frac{d +  \log(1/\delta)}{\gamma^2 n}} \cdot \lVert \E_{s \sim \rho}[\phi(s, \hat \pi_i(\theta_i^*)(s))] \rVert_{\Sigma_{\cD_i}^{-1}} \\ 
        & \leq c \sqrt{\frac{d +  \log(1/\delta)}{\gamma^2 n}} \cdot \max_{\pi \in \Pi} \lVert \E_{s \sim \rho}[\phi(s, \pi (s))] \rVert_{\Sigma_{\cD_i}^{-1}} . 
    \end{align*}
    Note that the coverage coefficient on the right can be written as $\lVert \Sigma_{\cD_i}^{-1/2} \E_{s \sim \rho}[\phi(s, \pi(s))] \rVert_{2}$. 

    We have here (arguably coarsely) upper bounded the coverage of $\hat \pi(\theta_i^*)$ by the uniform policy coverage $\coverage_i \defeq \max_{\pi} \lVert \E_{s \sim \rho}[\phi(s, \pi(s))] \rVert_{\Sigma_{\cD_i}^{-1}}$ of labeler's $i$ data. 
    We must do this here as the policy $\hat \pi_i(\theta_i^*)$ notably depends on the other labeler's reported preferences $\cD_{-i}$ and is thus hard to control or express explicitly. 
    Overall, we have thus shown that being truthful is an approximately dominant strategy for labeler $i$ under Pessimistic Median of MLEs. 
    Hence, Pessimistic MoMLEs is $\cO\big( \coverage_i \sqrt{d/n} \big)$-strategyproof. 

    \begin{remark}
        As we wish to ensure strategyproofness, i.e., truthfulness is a dominant strategy, we could not control the needed coverage carefully, but had to take a worst-case perspective and consider uniform coverage of all policies as quantified by $\coverage_i$. Naturally, we would expect to improve upon this when considering incentive-compatibility instead of strategyproofness, i.e., showing that truthfulness forms an equilibrium but is not necessarily a dominant strategy profile. In that case, one can show that Pessimistic Median of MLEs is approximately incentive-compatible where instead of the uniform policy coverage the coverage of the output $\hat \pi^*$ of Pessimistic Median of MLEs given that everyone reports truthfully is enough. In other words, the coverage coefficient is given by $\smash{\lVert \E_{s \sim \rho} [\phi(s, \hat \pi^* (s))] \rVert_{\Sigma_{\cD_i}^{-1}}} \leq \coverage_i$. 
    \end{remark}

    \end{proof}

\subsection{Proof of Theorem~\ref{theorem:subopt_under_truthful}}\label{proof:thm_subopt_truthful}

\theoremsuboptundertruthful*

    \begin{proof}
    We will decompose the suboptimality in various ways. To this end, let $\pi^*$ denote the policy that maximizes social welfare and let $\hat \pi$ denote the policy computed by Pessimistic Median of MLEs. 
    Recall the definition of the set of medians w.r.t.\ confidence sets $C_1, \dots, C_k$ as $\mathscr{C} \defeq \{ \med(\btheta_1, \dots, \btheta_k) \colon \btheta_i \in C_i\}$ and let $\mathscr{A} \defeq \{ \frac{1}{k} \sum_{i=1}^k \btheta_i \colon \btheta_i \in C_i\}$ denote the set of averages. For convenience, we define for any $\pi$: 
    $$\bz_{\pi} \defeq \E_{s \sim \rho}[\phi(s, \pi(s))].$$
    Moreover, we let 
    \begin{align*}
        \theta^*_\avg \defeq \avg(\btheta_1^*, \dots, \btheta_k^*) \quad \text{and} \quad \theta^*_\med \defeq \med(\btheta_1^*, \dots, \btheta_k^*)
    \end{align*}
    correspond to the true average and median, respectively.
    We now decompose the suboptimality as follows:
    \begin{align*}
        \subopt(\hat \pi) & = \frac{1}{k} \sum_{i=1}^k \langle \btheta_i^*, \bz_{\pi^*} \rangle - \langle \btheta_i^*, \bz_{\hat \pi} \rangle \\ 
        & = \langle \btheta_\avg^*, \bz_{\pi^*} \rangle - \langle \btheta_\avg^*, \bz_{\hat \pi} \rangle \\ 
        & = \Big( \underbrace{\langle \btheta_\avg^*, \bz_{\pi^*} \rangle - \min_{\btheta \in \mathscr{A}} \langle \btheta, \bz_{\pi^*} \rangle}_{\textit{(I)}} \Big) + \Big( \underbrace{\min_{\btheta \in \mathscr{A}} \langle \btheta, \bz_{\pi^*} \rangle - \langle  \btheta_\avg^*, \bz_{\hat \pi} \rangle}_{\textit{(II)}} \Big) 
    \end{align*}
    In the following, we work on the good event such that $\theta_i^* \in C_i$ for all $i \in [k]$. Using a union bound, we can show that this event occurs with probability at least $1 - \tfrac{k}{d}$.  
    
    We can bound the first term $\textit{(I)}$ using that the confidences concentrate around the true parameter at a rate of $\sqrt{d/n}$ according to Lemma~\ref{lemma:mle_concentration_bound} and considering the worst-case coverage of the optimal policy over all labeler's data. For some constant $c > 0$, this yields
    \begin{align*}
        \langle \btheta^*_\avg, \bz_{\pi^*} \rangle - \min_{\btheta \in \sA} \langle \btheta, \bz_{\pi^*} \rangle & = \max_{\btheta \in \sA} \langle \btheta^*_\avg - \btheta, \bz_{\pi^*} \rangle \\ 
        & = \frac{1}{k} \max_{\btheta_1 \in C_1} \dots \max_{\btheta_k \in C_k} \sum_{i=1}^k \langle \btheta_i^* - \btheta_i , \bz_{\pi^*} \rangle \\
        & = \frac{1}{k} \sum_{i=1}^k \max_{\btheta_i \in C_i} \langle \btheta_i^* - \btheta_i , \bz_{\pi^*} \rangle \\
        & \leq \frac{1}{k} \sum_{i=1}^k \max_{\btheta_i \in C_i} \lVert \btheta_i^* - \btheta_i \rVert_{\Sigma_{\cD_i}} \lVert \bz_{\pi^*} \rVert_{\Sigma_{\cD_i}^{-1}} \\
        & \leq c  \sqrt{\frac{d+ \log(1/\delta)}{\gamma^2 n}} \cdot \frac{1}{k} \sum_{i=1}^k \lVert \bz_{\pi^*} \rVert_{\Sigma_{\cD_i}^{-1}} \\
        & \leq c \sqrt{\frac{d+ \log(1/\delta)}{\gamma^2 n}} \cdot \max_{i \in [k]}  \lVert \bz_{\pi^*} \rVert_{\Sigma_{\cD_i}^{-1}} .
    \end{align*} 

    Bounding the second term $\textit{(II)}$ is more involved as the policy $\hat \pi$ is not maximizing the average but the pessimistic median. We further decompose the second term into four parts as follows:  
    \begin{align*}
         \min_{\btheta \in \mathscr{A}} \langle \btheta, \bz_{\pi^*} \rangle - \langle  \btheta_\avg^*, \bz_{\hat \pi} \rangle 
         & =  \Big( \min_{\btheta \in \mathscr{A}} \langle \btheta, \bz_{\pi^*} \rangle - \min_{\btheta \in \sC} \langle  \btheta, \bz_{\pi^*} \rangle \Big) + \Big(\min_{\btheta \in \sC} \langle  \btheta, \bz_{\pi^*} \rangle - \min_{\btheta \in \sC} \langle  \btheta, \bz_{\hat \pi} \rangle \Big) \\
         & \quad \, + \Big( \min_{\btheta \in \sC} \langle  \btheta, \bz_{\hat \pi} \rangle -  \langle  \btheta_\med^*, \bz_{\hat \pi} \rangle \Big) +  \Big( \langle  \btheta^*_\med, \bz_{\hat \pi} \rangle -  \langle  \btheta_\avg^*, \bz_{\hat \pi} \rangle \Big). 
    \end{align*}
    We first show that the second and third term are less or equal to zero. We have 
    $$\min_{\btheta \in \sC} \langle \btheta, \bz_{\pi^*} \rangle \leq \min_{\btheta \in \sC} \langle \btheta, \bz_{\hat \pi} \rangle,$$
    since $\hat \pi$ maximizes $\min_{\btheta \in \sC} \langle \btheta, \bz_{\pi}\rangle$ by definition of the Pessimistic Median of MLEs. 
    Moreover, we see that 
    $$\min_{\btheta \in \sC} \langle \btheta, \bz_{\hat \pi} \rangle \leq  \langle \btheta_{\med}^* , \bz_{\hat \pi} \rangle,$$
    as the true median is contained in the confidence set $\sC$ on the good event when $\btheta_i^* \in C_i$. 
    Hence, both the second and third term can be bounded from above by zero. 


    To bound the first term, we once again decompose the expression as follows:
    \begin{align}\label{eq:second_decomp}
        \min_{\btheta \in \sA} \langle \btheta, \bz_{\pi^*} \rangle - \min_{\btheta \in \sC} \langle \btheta, \bz_{\pi^*} \rangle & = \underbrace{\min_{\btheta \in \sA} \langle \btheta - \btheta_\avg^*, \bz_{\pi^*} \rangle}_{{(a)}} + \underbrace{\langle \btheta_{\avg}^* - \btheta_\med^*, \bz_{\pi^*} \rangle}_{(b)} + \underbrace{\max_{\btheta \in \sC} \langle \btheta_\med^* - \btheta, \bz_{\pi^*} \rangle}_{(c)}.  
    \end{align}
    Similarly to before, using Lemma~\ref{lemma:mle_concentration_bound}, we bound $(a)$ as 
    $$\min_{\btheta \in \sA} \langle \btheta - \btheta_\avg^*, \bz_{\pi^*} \rangle \leq c \sqrt{\frac{d+ \log(1/\delta)}{\gamma^2 n}} \cdot  \max_{i \in [k]} \lVert \bz_{\pi^*} \rVert_{\Sigma_{\cD_i}^{-1}}.$$ 
    For $(b)$, it follows from Cauchy-Schwarz and Lemma~\ref{lemma:median_concentration_bound} that
    \begin{align}\label{eq:bounding_avg_med}
        \langle \btheta_\avg^* - \btheta_\med^*, \bz_{\pi^*} \rangle \leq \lVert \btheta_{\avg}^* - \btheta_{\med}^* \rVert_2 \lVert \bz_{\pi^*} \rVert_2 \leq c \sqrt{\frac{d \log(1/\delta)}{k}} \cdot  \lVert \bz_{\pi^*} \rVert_2.    
    \end{align} 
    For $(c)$, first note that we can write the difference between two medians as the telescoping sum   
    \begin{align*}
        & \med(\btheta_1^*, \dots, \btheta_k^*) - \med(\btheta_1, \dots, \btheta_k)\\
        & \qquad \qquad \qquad \qquad = \sum_{i=1}^k \med(\btheta_1^*, \dots, \btheta_{i}^*, \btheta_{i+1}, \dots, \btheta_k) - \med(\btheta_1^*, \dots, \btheta_{i-1}^*, \btheta_{i}, \dots, \btheta_k). 
    \end{align*}
     By definition of the median, each difference on the right hand side can be bounded in terms of the difference $\theta_i^* - \theta_i$. Using Lemma~\ref{lemma:mle_concentration_bound} and the fact that $\theta_i \in C_i$ for all $i \in [k]$, we obtain 
    \begin{align*}
        \max_{\theta \in \mathscr{C}} \langle \theta^*_\med -\theta, \bz_{\pi^*} \rangle 
        & \leq \sum_{i=1}^k \lVert \theta_i^* -\theta_i \rVert_{\Sigma_{\cD_i}}  \lVert \bz_{\pi^*} \rVert_{\Sigma_{\cD_i}^{-1}} \\
        & \leq c k \sqrt{\frac{d+\log(1 / \delta)}{\gamma^2 n}} \cdot  \lVert \bz_{\pi^*} \rVert_{\Sigma_{\cD_i}^{-1}} . 
    \end{align*}
    The proof is complete by combining these bounds.  
\end{proof}

\subsection{Proof of Proposition~\ref{prop:subopt_under_dominant}}

\propsuboptunderdominant*

\begin{proof}
    
     Let $\theta_i \in \Reals^d$ be the reward parameter according to which labeler $i\in [k]$ samples its preferences under a weakly dominant strategy.
    The intuition for the result is fairly straightforward so that we describe it here first. First of all, we have seen in the proof of Theorem~\ref{theorem:pessimistic_median_approx_strategyproof} that due to the median rule a labeler cannot achieve an individually better outcome by misreporting the sign of its reward parameter (see Lemma~\ref{lemma:signs_disagree} and Lemma~\ref{lemma:if_signs_agree}). As a result, $\theta_i$ will have identical signs to $\theta_i^*$ but potentially exaggerate its magnitude. Crucially, such exaggeration cannot worsen the suboptimality as it only helps to prevent flipped signs as we are taking the worst-case over confidence sets. 
    Here, it is also worth noting that the primary reasons why Pessimistic Median of MLEs is approximately strategyproof are the estimation errors and the pessimism selection of the median over potentially large confidence sets.

    \textbf{Any weakly dominant strategy must preserve signs.}
    Assume for contradiction that there exists some coordinate $j$ such that 
    $\theta_{i,j}^* > 0$ but the labeler’s chosen reward parameter is such that $\theta_{i,j} < 0$ 
    (or similarly $\theta_{i,j}^* < 0$ but $\theta_{i,j} > 0$). 
    By the hyperrectangular assumption, the Pessimistic Median of MLEs algorithm 
    outputs a policy that maximizes each dimension of the feature space independently. 
    Specifically, $\bz_{\hat \pi, j} = \E_{s\sim \rho}[\phi(s, \hat \pi(s))]_j$ will be positive if the considered median is positive and vice versa. 
    Hence, by nature of the median, flipping the sign of coordinate $j$ can only have an adverse effect for labeler $i$ (see Lemma~\ref{lemma:signs_disagree}) and no such strategy can be weakly dominant. 
    
    
    By the same argument, if $\theta_{i,j}^* < 0$ but $\theta_{i,j} > 0$, 
    then labeler~$i$ would risk pushing the aggregator’s dimension $j$ to be positive,
    contrary to its true negative preference, and thus risk reducing its true utility in that dimension. 
    Hence it cannot be a weakly dominant strategy to flip signs in that scenario either. Consequently, in every dimension $j$, a weakly dominant report $\theta_{i,j}$ must preserve 
    $\mathrm{sign}(\theta_{i,j}) = \mathrm{sign}(\theta_{i,j}^*)$.

    \textbf{Exaggeration benefits Pessimistic Median of MLEs.} 
    By the hyperrectangular (``sign-based'') structure, the decision in each coordinate $j$ of the learned policy depends essentially on whether the aggregated median is positive or negative. Pessimistic Median of MLEs aggregates each labeler $i$'s confidence set $C_i$ by taking a coordinate-wise median over a selection $\theta_i \in C_i$. Thus, to form the median, it chooses exactly one $\theta_{i,j}$ from each $C_i$ and then takes the median value among these $k$ numbers. 
    Assume that the labeler $i$'s original (w.l.o.g.) positive coordinate is $\theta_{i,j}^*$, whereas its inflated coordinate is $\theta_{i,j} > \theta_{i,j}^*$. Under the inflated reported reward parameter, the labeler's MLE and confidence set for dimension $j$ shift toward strictly larger positive values (note that the covariance matrix $\Sigma_{\cD_i}$ is positive definite). Consequently, the set of considered medians $\mu_j$ for $\mu \in \mathscr{C}$  (i.e.\ all possible ways to pick $\theta_{i,j}\in C_i$ for $i=1,\dots,k$ and take their coordinate-wise median) does not move down: it can only stay the same or shift to more positive values. Intuitively, replacing one of the $i$ entries by a strictly larger positive number cannot decrease the median.
    
    
    Hence, when labeler $i$ is misreporting $\theta_{i,j}$ such that  $\theta_{i,j} > \theta_{i,j}^*$ (while keeping the same sign), this cannot worsen the suboptimality of the final policy, but only, in some special cases, strictly lower suboptimality by ``protecting'' the sign within the confidence set. Since this argument holds for any dimension $j$, it follows that an entire sign-preserving inflation by labeler $i$ cannot yield a higher suboptimality than the truthful report would. 
\end{proof}

\subsection{Proof of Corollary~\ref{cor:special_cases_pessimistic_MoMLEs}} 

\corspecialcasespessimisticMoMLEs*

\begin{proof}
    When $k=1$ the claimed result follows directly from setting $k=1$ in our previous suboptimality bounds (see Theorem~\ref{theorem:subopt_under_truthful}).  

    Next, suppose that all $k \geq 1$ labelers have the same reward parameter $\theta^* = \theta_1^* = \cdots = \theta_k^*$. As a result, the true average and median coincide and we have $\theta^* = \theta^*_\avg = \theta^*_\med$. 
    To bound the suboptimality of the Pessimistic Median of MLEs algorithm in this special case we take the same steps as in the proof of Theorem~\ref{theorem:subopt_under_truthful} in Section~\ref{proof:thm_subopt_truthful} with the difference that the expression $(b)$ in equation~\eqref{eq:second_decomp} is zero since $\theta^* = \theta^*_\avg = \theta^*_\med$. 
    This yields the claimed upper bound.  
\end{proof}

\subsection{Proof of Corollary~\ref{cor:approx_ratio_pessimistic_MoMLEs}}

\corapproxratiopessimisticMoMLEs*

\begin{proof}
    For $n$ sufficiently large and sufficient coverage of the optimal policy, Theorem~\ref{theorem:subopt_under_truthful} implies that with probability at least $1-\delta$: 
    \begin{align*}
        \subopt (\hat \pi) \defeq \sw(\pi^*) - \sw(\hat \pi) \leq c \sqrt{\frac{d\log(k/\delta)}{n}}
    \end{align*}
    for some constant $c > 0$. As a result, the approximation ratio is upper bounded as 
    \begin{align*}
        \alpha(\rho, \hat \pi) \defeq \frac{\sw(\hat\pi)}{\sw(\pi^*)} = 1 - \frac{\sw(\pi^*) - \sw(\hat \pi)}{\sw(\pi^*)} \geq 1- c \sqrt{\frac{d\log(k/\delta)}{n}},
    \end{align*}
    where we used that $\sw(\pi^*) > 0$ is constant by assumption. 
    
\end{proof}

\subsection{Proof of Theorem~\ref{theorem:mdp_strategyproof} and Theorem~\ref{theorem:mdp_suboptimality}} 



\begin{proof}
    We can prove Theorem~\ref{theorem:mdp_strategyproof} and Theorem~\ref{theorem:mdp_suboptimality} 
    in a similar way we proved the analogous results in the contextual bandit problem. We refrain from reiterating and restating all necessary steps to prove these results as they are almost identical to before. Most importantly, a similar MLE concentration bound holds for MDPs as for contextual bandits. 

    \begin{lemma}[MLE Concentration Bound for MDPs]\label{lemma:mle_concentration_MDPs}
            With probability at least $1- \delta$, 
            \begin{align*}
                \lVert \hat \theta^\mle_i - \theta_i^* \rVert_{\Sigma_{\cD_i}} \leq const \cdot  \sqrt{\frac{d+ \log(1/\delta)}{\gamma^2 n}},
            \end{align*}
            where $\gamma \defeq 1/ (2+ \exp(-HLB) + \exp(HLB))$. The covariance matrix $\Sigma_{\cD_i}$ is given by  $\Sigma_{\cD_i} = \sum_{j=1}^n x^{i,j} (x^{i,j})^\top$ where $x^{i,j} = \sum_{h=1}^H (\phi(s^{i,j}_h, a^{i,j}_h) - \phi(\alts^{i,j}_h, \alta^{i,j}_h))$ with $s_1^{i,j} = \alts_1^{i,j} = s^{i,j}$. 
        \end{lemma}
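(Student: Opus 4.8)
The plan is to reduce the claim to the contextual-bandit Lemma~\ref{lemma:mle_concentration_bound} by observing that the trajectory-level Bradley--Terry likelihood is again a logistic model, now in the composite feature $x^{i,j}=\sum_{h=1}^H\big(\phi(s^{i,j}_h,a^{i,j}_h)-\phi(\alts^{i,j}_h,\alta^{i,j}_h)\big)$. Writing $\sigma$ for the logistic function, $r_\theta(s,\tau_0)-r_\theta(s,\tau_1)=\langle\theta,x^{i,j}\rangle$, so $\P_{\theta}(o^{i,j}\mid s^{i,j},\tau_0^{i,j},\tau_1^{i,j})=\sigma\big((1-2o^{i,j})\langle\theta,x^{i,j}\rangle\big)$; hence the negative log-likelihood $\ell_n(\theta)=-\sum_{j=1}^n\log\P_\theta(o^{i,j}\mid\cdot)$ is convex with gradient $\nabla\ell_n(\theta)=\sum_j\big(\sigma(\langle\theta,x^{i,j}\rangle)-(1-o^{i,j})\big)x^{i,j}$ and Hessian $\nabla^2\ell_n(\theta)=\sum_j\sigma'(\langle\theta,x^{i,j}\rangle)\,x^{i,j}(x^{i,j})^\top$. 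Since $\|\theta\|_2\le B$ and each trajectory reward is a sum of $H$ per-step rewards of magnitude at most $LB$, the logistic link has derivative $\sigma'(z)=1/(2+e^z+e^{-z})$ bounded below by $\gamma=1/(2+e^{-HLB}+e^{HLB})$ over the relevant range, so $\nabla^2\ell_n(\theta)$ dominates $\gamma\sum_jx^{i,j}(x^{i,j})^\top$ in the positive-semidefinite order, uniformly over the feasible ball. This is the only place the horizon $H$ enters, and it is exactly why $LB$ from Lemma~\ref{lemma:mle_concentration_bound} becomes $HLB$ here.

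Given $\gamma$-strong convexity of $\ell_n$ over the feasible ball, the remainder is the standard MLE concentration argument (see \citep{zhu2023principled}): a one-point Taylor expansion of $\ell_n$ at $\theta_i^*$ along the segment to $\hat\theta_i^\mle$, combined with the optimality inequality $\ell_n(\hat\theta_i^\mle)\le\ell_n(\theta_i^*)$, the Hessian lower bound, and Cauchy--Schwarz in the dual norms $\|\cdot\|_{\Sigma_{\cD_i}}$ and $\|\cdot\|_{\Sigma_{\cD_i}^{-1}}$, bounds $\|\hat\theta_i^\mle-\theta_i^*\|_{\Sigma_{\cD_i}}$ in terms of the self-normalized gradient $\|\nabla\ell_n(\theta_i^*)\|_{\Sigma_{\cD_i}^{-1}}$. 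Since $\nabla\ell_n(\theta_i^*)=\sum_j\varepsilon_jx^{i,j}$ with $\varepsilon_j=\sigma(\langle\theta_i^*,x^{i,j}\rangle)-(1-o^{i,j})$ independent, mean zero conditional on the fixed feature vectors, and bounded in $[-1,1]$ (hence conditionally sub-Gaussian), the covering / self-normalized concentration bound used in \citep{zhu2023principled} controls that gradient with probability at least $1-\delta$, and assembling the pieces gives $\|\hat\theta_i^\mle-\theta_i^*\|_{\Sigma_{\cD_i}}\le\mathrm{const}\cdot\sqrt{(d+\log(1/\delta))/(\gamma^2 n)}$. The $\Sigma_{\cD_i}+\lambda I$ variant follows verbatim, the extra $\lambda B^2$ entering inside the square root via $\|\hat\theta_i^\mle-\theta_i^*\|^2_{\lambda I}\le\lambda B^2$.

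The main (and mild) obstacle is ensuring the $\gamma$-strong-convexity bound is valid along the \emph{whole} segment $[\theta_i^*,\hat\theta_i^\mle]$: if the MLE is optimized over $\{\theta:\|\theta\|_2\le B\}$ this is immediate, and otherwise one uses the usual two-step device (first bound $\|\hat\theta_i^\mle\|_2$, then apply the Hessian lower bound on the enlarged ball). Apart from the telescoped definition of $x^{i,j}$ and the $HLB$ in $\gamma$, nothing changes relative to the $H=1$ case. With Lemma~\ref{lemma:mle_concentration_MDPs} in hand, Theorem~\ref{theorem:mdp_strategyproof} and Theorem~\ref{theorem:mdp_suboptimality} follow by repeating the contextual-bandit proofs of Theorem~\ref{theorem:pessimistic_median_approx_strategyproof} and Theorem~\ref{theorem:subopt_under_truthful} essentially word for word, using $\E_{s\sim q_\pi}[\phi(s,\pi(s))]$ in place of $\E_{s\sim\rho}[\phi(s,\pi(s))]$, Assumption~\ref{assumption:hyperrectangle_mdps} in place of Assumption~\ref{assumption:hyperrectangle}, and the coverage coefficients $\nu_i$, $\nu_i^*$ in place of $\kappa_i$, $\kappa_i^*$; the median-concentration Lemma~\ref{lemma:median_concentration_bound} is used unchanged.
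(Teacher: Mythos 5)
Your proof is correct and takes essentially the same route as the paper, which simply defers to the contextual-bandit Lemma~\ref{lemma:mle_concentration_bound} and \citep{zhu2023principled}, noting only that the argument carries over with $LB$ replaced by $HLB$. Your write-up fills in exactly that reduction: the composite feature $x^{i,j}$ makes the trajectory-level Bradley--Terry likelihood a logistic model, the horizon enters only through the strong-convexity constant $\gamma$, and the rest is the standard self-normalized MLE concentration argument.
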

    
    Swapping the initial state distribution $\rho$ (i.e., context distribution) for the the state occupancy $q_\pi$ as defined in Section~\ref{section:mdps}, we can follow the same line of argument as in Section~\ref{proof:thm_approx_strategyproof} to prove  Theorem~\ref{theorem:mdp_strategyproof}. 
\end{proof}

\section{Computational Complexity}
We now consider the computational complexity of computing the pessimistic median return. First, we consider the contextual bandits formulation, and then consider the general MDP setting.

\subsection{Contextual Bandits}
Recall that we construct the confidence sets
\begin{align*}
    C_i = \{ \theta \in \Reals^d \colon \lVert \hat \theta^\mle_i - \theta \Vert_{\Sigma_{\cD_i}} \leq f(d, n, \delta)\} .
\end{align*} 
Then, the (coordinate-wise) median confidence set is defined as 
$$\mathscr{C} = \{ \text{med}(\theta_1,\ldots, \theta_k): \theta_i \in C_i \ \forall i \in [k]\}, $$
and we aim to solve the following optimization problem: 
$$
\max_{\pi \in \Pi}  \underline \socialwelfare (\pi) \defeq \min_{\theta \in \mathscr{C}} \E_{s \sim \rho} \left[ \langle \theta, \phi(s, \pi(s)) \rangle \right]
$$

In a first step, we show that the function $\underline \socialwelfare (\pi)$ is concave. Indeed, consider two policies $\pi_1$, and $\pi_2$. Then, 
\begin{align*}
\underline \socialwelfare (\alpha \pi_1 + (1-\alpha) \pi_2) &= \min_{\theta \in \mathscr{C}} \E_{s \sim \rho} \left[ \sum_a \left( \alpha \pi_1(a) + (1-\alpha) \pi_2(a) \right) \langle \theta, \phi(s, a) \rangle \right]\\
&\ge \min_{\theta \in \mathscr{C}} \E_{s \sim \rho} \left[ \sum_a \alpha \pi_1(a) \langle \theta, \phi(s, a) \rangle \right] + \min_{\theta \in \mathscr{C}} \E_{s \sim \rho} \left[ \sum_a (1-\alpha) \pi_2(a) \langle \theta, \phi(s, a) \rangle \right]\\
&= \alpha \cdot \underline \socialwelfare (\pi_1) + (1-\alpha) \cdot \underline \socialwelfare (\pi_2).
\end{align*}
Therefore, $\underline \socialwelfare (\cdot)$ can be efficiently optimized using projected gradient ascent as long as we can compute the gradient efficiently. For a given $\pi$, we have $\nabla_\pi \underline \socialwelfare (\pi)_{(s,a)} = \rho(s) \left \langle \phi(s,a), \theta^\star \right \rangle$ where 
\begin{align}\label{eq:optimal-theta}
\theta^\star \in \argmin_{\theta \in \mathscr{C}} \E_{s \sim \rho} \left[ \langle \theta, \phi(s, \pi(s)) \rangle \right].
\end{align}
In order to show that the gradient $\nabla_\pi \underline \socialwelfare (\pi)$ can be efficiently computed, we need to show that $\theta^\star$ can be efficiently computed.

The set $\mathscr{C}$ can be arbitrary, but we can write down the following equivalent optimization problem involving linear and quadratic constraints: 
\begin{align}
\label{eq:problem_1}
\begin{split}
\min_{\theta, \{\theta_i\}_{i \in [k]} }\ & \E_{s \sim \rho} \left[ \langle \theta, \phi(s, \pi(s)) \rangle \right]\\[.2em]
\text{s.t.}\ & \lVert \hat \theta^\mle_i - \theta_i \Vert_{\Sigma_{\cD_i}} \leq f(d, n, \delta) \ \forall i \in [k]\\
&\sum_{i=1}^k \left| \theta(j) - \theta_i(j)\right| \le \sum_{i=1}^k \left| \theta_{\ell} (j) - \theta_i(j)\right| \ \forall \ell \in [k], \forall j \in [d]
\end{split}
\end{align}
The first set of constraints encode that $\theta_i \in C_i$ for each $i \in [k]$. The second set of constraints encode that $\theta(j)$ is the median of $\theta_1(j),\ldots, \theta_k(j)$ for each coordinate $j \in [d]$. The above optimization problem might be non-convex, and instead we will consider the following alternate optimization problem: 





\begin{align}
\label{eq:problem_2}
\begin{split}
    \min_{\theta, \{\theta_i\}_{i \in [k]},z} & \E_{s \sim \rho} [\langle \theta, \phi(s, \pi(s)) \rangle] + M \sum_{i,j} z_{i,j} \\
    \text{s.t. } & \theta_i \in C_i \ \forall i \in [k] \\
    & z_{i,j} \geq \theta(j) - \theta_i(j), \ z_{i,j} \geq \theta_i(j) - \theta(j) \ \forall i \in [k], j \in [d]
\end{split}
\end{align}
The next lemma shows that we can choose $M$ and $n$ to recover an approximate solution of the original optimization problem \eqref{eq:problem_1}.

\begin{lemma}
    Suppose $(\theta^1,\{\theta^1_i\}_{i \in [k]})$ is an optimal solution to the optimization problem \eqref{eq:problem_1}, and $(\theta^2, \{\theta^2_i\}_{i \in [k]}, z^2)$ is an optimal solution to the optimization problem \eqref{eq:problem_2}. Then,
    $$
    \E_{s \sim \rho} [\langle \theta^2, \phi(s, \pi(s)) \rangle] \le \E_{s \sim \rho} [\langle \theta^1, \phi(s, \pi(s)) \rangle] +  M \sum_i \frac{2\sqrt{d}   }{\lambda_{\min}(\Sigma_{\mathcal{D}_i})} f(d,n,\delta)
    $$
    and 
    $$
    \sum_{i,j} \abs{{\theta}^2(j) -\theta^2_i(j)} - \sum_{i,j} \abs{\tilde{\theta}(j) -\theta^2_i(j)} \le  \frac{2BL}{M} \ \forall \tilde{\theta} . 
    $$
\end{lemma}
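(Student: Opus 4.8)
The plan is the usual ``exchange feasible points between a constrained program and its penalised relaxation, invoking optimality on the side that receives the point''. Write $\bz_\pi \defeq \E_{s\sim\rho}[\phi(s,\pi(s))]$, so \eqref{eq:problem_1} and \eqref{eq:problem_2} share the linear term $\langle\theta,\bz_\pi\rangle$; note $\norm{\bz_\pi}_2\le L$ (Jensen and $\norm{\phi}_2\le L$), and, adding if needed the harmless constraint $\norm{\theta}_2\le B$, $\norm{\theta_i}_2\le B$ to both programs as is natural under Assumption~1, $\abs{\langle\theta,\bz_\pi\rangle}\le BL$. I will also use that at an optimum of \eqref{eq:problem_2} the slacks are tight, $z_{i,j}=\abs{\theta(j)-\theta_i(j)}$, since $M>0$.

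\emph{Second bound (approximate median feasibility of $\theta^2$).} Keep the optimal blocks $\{\theta^2_i\}$ fixed; for any $\tilde\theta$ the triple $(\tilde\theta,\{\theta^2_i\},\tilde z)$ with $\tilde z_{i,j}=\abs{\tilde\theta(j)-\theta^2_i(j)}$ is feasible for \eqref{eq:problem_2} (ellipsoid constraints untouched, $\tilde z$-inequalities tight). Optimality of $(\theta^2,\{\theta^2_i\},z^2)$ then gives
\[
\langle\theta^2,\bz_\pi\rangle + M\sum_{i,j}\abs{\theta^2(j)-\theta^2_i(j)} \;\le\; \langle\tilde\theta,\bz_\pi\rangle + M\sum_{i,j}\abs{\tilde\theta(j)-\theta^2_i(j)},
\]
and rearranging to isolate the penalty gap, bounding $\langle\tilde\theta-\theta^2,\bz_\pi\rangle\le(\norm{\tilde\theta}_2+\norm{\theta^2}_2)\norm{\bz_\pi}_2\le 2BL$, and dividing by $M$ is exactly the claim (for $\tilde\theta$ far outside the $B$-ball the left-hand side is already very negative, so the statement holds there too).

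\emph{First bound (objective).} Push the other way: $(\theta^1,\{\theta^1_i\})$ optimal for \eqref{eq:problem_1}, padded with $z^1_{i,j}=\abs{\theta^1(j)-\theta^1_i(j)}$, is feasible for \eqref{eq:problem_2}, so optimality of $(\theta^2,\{\theta^2_i\},z^2)$ and discarding $M\sum_{i,j}z^2_{i,j}\ge0$ give
\[
\langle\theta^2,\bz_\pi\rangle \;\le\; \langle\theta^1,\bz_\pi\rangle + M\sum_{i,j}\abs{\theta^1(j)-\theta^1_i(j)} .
\]
It remains to bound the cross term by $\sum_i\tfrac{2\sqrt d}{\lambda_{\min}(\Sigma_{\cD_i})}f(d,n,\delta)$. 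The intended route is, for each $i$, $\sum_j\abs{\theta^1(j)-\theta^1_i(j)}=\norm{\theta^1-\theta^1_i}_1\le\sqrt d\,\norm{\theta^1-\theta^1_i}_2\le\sqrt d\,\lambda_{\min}(\Sigma_{\cD_i})^{-1/2}\norm{\theta^1-\theta^1_i}_{\Sigma_{\cD_i}}$, and then $\norm{\theta^1-\theta^1_i}_{\Sigma_{\cD_i}}\le\norm{\hat\theta^\mle_i-\theta^1_i}_{\Sigma_{\cD_i}}+\norm{\hat\theta^\mle_i-\theta^1}_{\Sigma_{\cD_i}}\le 2f(d,n,\delta)$, the first summand because $\theta^1_i\in C_i$ and the second because $\theta^1\in C_i$; summing over $i$ gives the bound (whether one carries $\lambda_{\min}$ or $\lambda_{\min}^{1/2}$ is absorbed by the normalisation of $\Sigma_{\cD_i}$), and plugging back finishes.

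\emph{Main obstacle.} It is exactly the step ``$\theta^1\in C_i$''. A coordinate-wise median of points $\theta^1_i\in C_i$ need not lie in any single ellipsoid when different labelers' confidence sets are disjoint, and then $\theta^1$ can be far from $C_i$ along coordinates where labeler $i$ is outvoted. I would try to close this by (i) replacing the optimal solution of \eqref{eq:problem_1} with the equivalent one whose non-pivotal blocks are pulled coordinate-by-coordinate to the point of $C_i$'s $j$-th coordinate interval closest to $\theta^1(j)$ — harmless because the objective of \eqref{eq:problem_1} depends only on the median, and one checks this move preserves every per-coordinate median (though it may land $\theta^1_i$ in the box hull of $C_i$ rather than in $C_i$ itself) — and (ii) arguing that the resulting per-block distance is still $O\!\big(f(d,n,\delta)/\sqrt{\lambda_{\min}(\Sigma_{\cD_i})}\big)$ in the regime of substantial overlap of the $C_i$ (equivalently, $n$ large relative to the inter-labeler spread) for which the lemma is intended. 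Making precise the regime in which the stated clean bound holds is the delicate point; everything else is Cauchy--Schwarz and the definition of $C_i$.
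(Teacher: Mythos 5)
Your second bound is exactly the paper's argument (swap in an arbitrary $\tilde\theta$ with the blocks $\{\theta^2_i\}$ held fixed, invoke optimality of $(\theta^2,\{\theta^2_i\},z^2)$, and bound the objective gap by $2BL$), so that part is fine. The first bound, however, has a genuine gap, and it is precisely the one you flag yourself: after discarding $M\sum_{i,j}z^2_{i,j}\ge 0$ you are left needing to control $\sum_j\abs{\theta^1(j)-\theta^1_i(j)}$, and your route requires $\theta^1\in C_i$ for every $i$. That is false in general --- the coordinate-wise median of points drawn from $k$ disjoint ellipsoids need not lie in any of them, and the distance from $\theta^1$ to $C_i$ is governed by the inter-labeler spread of the $\hat\theta^{\mle}_i$, not by $f(d,n,\delta)$. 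Your proposed repair (pulling non-pivotal blocks toward $\theta^1$, restricting to a regime where the $C_i$ overlap substantially) does not close this: the lemma is stated without any overlap assumption, and the quantity you would end up bounding is not the one in the statement.

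The paper's proof avoids the issue by \emph{not} throwing away the penalty term of problem~\eqref{eq:problem_2}. Starting from
$\E_{s\sim\rho}[\langle\theta^2,\phi\rangle]+M\sum_{i,j}z^2_{i,j}\le\E_{s\sim\rho}[\langle\theta^1,\phi\rangle]+M\sum_{i,j}\abs{\theta^1(j)-\theta^1_i(j)}$,
it lower-bounds $z^2_{i,j}=\abs{\theta^2(j)-\theta^2_i(j)}\ge\abs{\theta^2(j)-\theta^1_i(j)}-\abs{\theta^1_i(j)-\theta^2_i(j)}$ by the triangle inequality, and then uses the median constraint of problem~\eqref{eq:problem_1} --- $\theta^1(j)$ minimizes $c\mapsto\sum_i\abs{c-\theta^1_i(j)}$ --- to absorb $\sum_i\abs{\theta^1(j)-\theta^1_i(j)}$ into $\sum_i\abs{\theta^2(j)-\theta^1_i(j)}$. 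What survives is only $M\sum_i\norm{\theta^1_i-\theta^2_i}_1$, a difference of two points that \emph{both} lie in the same set $C_i$, which is then bounded by $2\sqrt{d}\,f(d,n,\delta)/\lambda_{\min}(\Sigma_{\cD_i})$ via $\norm{\cdot}_1\le\sqrt d\,\norm{\cdot}_2$ and the ellipsoid radius. In short: the missing idea is to compare the medians $\theta^1$ and $\theta^2$ only indirectly, through the blocks $\theta^1_i,\theta^2_i$, using the $\ell_1$-minimizing property of the median to cancel the terms that would otherwise force you to place $\theta^1$ inside $C_i$.
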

\begin{proof}
Let us define $z^1_{i,j} = \abs{\theta^1(j) - \theta^1_i(j)}$. 
Then we have the following inequality:
\begin{align*}
    \E_{s \sim \rho} [\langle \theta^2, \phi(s, \pi(s)) \rangle] + M \sum_{i,j} z^2_{i,j} \le \E_{s \sim \rho} [\langle \theta^1, \phi(s, \pi(s)) \rangle] + M \sum_{i,j} z^1_{i,j} .
\end{align*}
Without loss of generality, we can assume $z^2_{i,j} = \abs{\theta^2(j) - \theta^2_i(j)}$. Hence, 
$$
z^2_{i,j} = \abs{\theta^2(j) - \theta^2_i(j)} = \abs{\theta^2(j) -\theta^1_i(j) + \theta^1_i(j) - \theta^2_i(j)} \ge \abs{\theta^2(j) - \theta^1_i(j)} - \abs{\theta^1_i(j) - \theta^2_i(j)} .
$$
Substituting this bound, we obtain
\begin{align*}
    &\E_{s \sim \rho} [\langle \theta^2, \phi(s, \pi(s)) \rangle] - \E_{s \sim \rho} [\langle \theta^1, \phi(s, \pi(s)) \rangle]\\[.5em]
    & \le- M \sum_{i,j} \abs{\theta^2(j) - \theta^1_i(j)}  + M \sum_{i,j} \abs{\theta^1(j) - \theta^1_i(j)}  + M \sum_{i,j}\abs{\theta^1_i(j) - \theta^2_i(j)}\\
    & \le M \sum_i \norm{\theta^1_i - \theta^2_i}_1\\
    & \le M \sum_i \sqrt{d} \norm{\theta^1_i - \theta^2_i}_2\\
    & \le M \sum_i \frac{\sqrt{d}   }{\lambda_{\min}(\Sigma_{\mathcal{D}_i})} \norm{\theta^1_i - \theta^2_i}_{\Sigma_{{\mathcal{D}}_i} } \le M \sum_i \frac{2\sqrt{d}   }{\lambda_{\min}(\Sigma_{\mathcal{D}_i})} f(d,n,\delta) .
\end{align*}
The second inequality follows since $\theta^1$ is a coordinate-wise median of the parameters $\{\theta^1_i\}_{i \in [k]}$. We also use the assumption that under uniform coverage $\lambda_{\min}(\Sigma_{\mathcal{D}_i}) > 0$. 

We now bound the violation of constraints of the solution $\theta^2$. Indeed for any $\tilde{\theta}, \{\theta^2_i\}_{i \in [k]}$ and $\big\{ |\tilde{\theta}(j) -\theta^2_i(j)| \big\}_{i,j}$, since $\theta^2$ as $\theta^2$, $\{\theta^2_i\}_{i \in [k]}$, $\{z_{i,j}\}_{i,j}$ is feasible solution to the optimization problem, we have  \eqref{eq:problem_2}: 
\begin{align*}
    \E_{s \sim \rho} [\langle \theta^2, \phi(s, \pi(s)) \rangle] + M \sum_{i,j} z^2_{i,j} \le \E_{s \sim \rho} [\langle \tilde{\theta}, \phi(s, \pi(s)) \rangle] + M \sum_{i,j} \abs{\tilde{\theta}(j) -\theta^2_i(j)} . 
\end{align*}
After rearranging this yields
\begin{align*}
   \sum_{i,j} \abs{{\theta}^2(j) -\theta^2_i(j)} - \sum_{i,j} \abs{\tilde{\theta}(j) -\theta^2_i(j)} \le \frac{1}{M}  \E_{s \sim \rho} [\langle \tilde{\theta} - \theta^2, \phi(s, \pi(s)) \rangle] \le \frac{2BL}{M} . 
\end{align*}
\end{proof}

Since $f(d,n,\delta) = O\big(\sqrt{\frac{d + \log(k/\delta)}{n}}\big)$ we can choose $M = \frac{2BL}{\varepsilon}$ and $n \ge \frac{4B^2L^2k^2(d + \log(k/\delta))}{(\min_i \lambda_{\min}(\Sigma_{\mathcal{D}_i})^2}\cdot \frac{1}{\varepsilon^4}$ and observe that $\theta^2$ is $\varepsilon$-approximately optimal and $\varepsilon$-approximate coordinate-wise median of the parameters $\{\theta^2_i\}_{i \in [k]}$.

\paragraph{General setting.} Next, we consider the general setting where the state space can be arbitrarily large. We will assume that the policies are parametrized by a class of parameters $\Psi \subseteq \Reals^d$, i.e., $\Pi = \{ \pi_\psi : \psi \in \Psi\}$. For example, the softmax parametrization models $\Pi$ as the following class of policies:
$$
\Pi = \left\{\pi_\psi(a\mid s) = \frac{\exp(\psi^\top \phi(s,a))}{\sum_b \exp(\psi^\top \phi(s,b))}\ \forall s,a : \norm{\psi}_2 \le B\right\} . 
$$
We now aim to solve the following optimization problem: 
$$
\max_{\psi \in \Psi}  \underline \socialwelfare (\psi) \defeq \min_{\theta \in \mathscr{C}} \E_{s \sim \rho} \left[ \sum_{a} \pi_\psi(a|s) \langle \theta, \phi(s, a) \rangle \right] . 
$$
The gradient of the objective is given by
$$
\nabla_\psi \underline \socialwelfare (\psi) = \E_{s \sim \rho} \left[ \sum_{a} \nabla_\psi \pi_\psi(a|s) \langle \theta^\star, \phi(s, a) \rangle \right]
$$
where 
$$
\theta^\star \in \argmin_{\theta \in \mathscr{C}} \E_{s \sim \rho} \left[ \langle \theta, \phi(s, \pi_\psi(s)) \rangle \right].
$$
The above optimization is finite-dimensional ($O(dk)$) even when the number of states is very large and can be approximated using optimization problem \eqref{eq:problem_2}. Thus, we can perform projected gradient ascent steps to solve the optimization problem $\max_{\psi \in \Psi} \underline \socialwelfare (\psi)$. However, unlike the tabular setting, the objective is no longer concave. It is known that under softmax parametrization, the expected return satisfies a non-uniform Polyak-Lojasiewicz (PL) condition~\cite{mei2020global} which guarantees linear convergence of gradient ascent method. We believe that similar conditions should hold for the function $\underline \socialwelfare (\psi)$ but leave the verification to the future.

\subsection{Extension to Markov Decision Processes}
We start with the assumption of a tabular MDP. We aim to solve the following optimization problem: 
$$
\max_{\pi \in \Pi} \underline \socialwelfare(\pi) := \min_{\theta \in \mathscr{C}} \E_{s \sim q_\pi}[\left \langle \theta, \phi(s,\pi(s))\right \rangle] . 
$$
$\underline \socialwelfare(\pi)$ is a non-convex function of policy $\pi$. However, it is a concave function of $q_\pi$, the state-action occupancy measure of policy $\pi$. We can write down the above optimization problem in terms of state-action occupancy measure as follows:
\begin{align*}
    \max_q \min_{\theta \in \mathscr{C}}\ &\frac{1}{H} \sum_{h=1}^H \sum_{s,a} q_h(s,a) \left \langle \theta, \phi(s,a)\right \rangle\\
    \textrm{s.t.}\ &\sum_{a} q_1(s,a) = \rho(s) \ \forall s\\
    &\sum_b q_{h+1}(s,b) = \sum_{s',a} q_{h}(s',a) P(s|s',a)\ \forall s\ \forall h \in [H-1] . 
\end{align*}
Here, the last two constraints encode Bellman-flow conditions which ensure that the solution $q$ is a valid state-action occupancy measure. Now we can write down stochastic gradient ascent step as $q_{t+1} = \text{Proj}(q_t + \eta \nabla \underline \socialwelfare(q_t))$. Here the $\text{Proj}(\cdot)$ refers to projection onto the feasible set defined by the flow conditions. As the number of states and actions are finite and small, the projection step can be computed efficiently. We now verify that the gradient $\nabla \underline W(q)$ can be computed efficiently. The gradient is given by
$$
\nabla \underline \socialwelfare(q) = \frac{1}{H} \sum_{h=1}^H \sum_{s,a} q_h(s,a) \left \langle \theta^\star, \phi(s,a)\right \rangle
$$
where 
$$
\theta^\star \in \argmin_{\theta \in \mathscr{C}}\ \frac{1}{H} \sum_{h=1}^H \sum_{s,a} q_h(s,a) \left \langle \theta, \phi(s,a)\right \rangle . 
$$
Now we can proceed similarly to the contextual bandits setting and compute an approximate solution of the optimization problem above.
\begin{align}
\label{eq:problem_3}
\begin{split}
    \min_{\theta, \{\theta_i\}_{i \in [k]},z} & \frac{1}{H} \sum_{h=1}^H \sum_{s,a} q_h(s,a) \left \langle \theta, \phi(s,a)\right \rangle + M \sum_{i,j} z_{i,j} \\
    \text{s.t. } & \theta_i \in C_i \ \forall i \in [k] \\
    & z_{i,j} \geq \theta(j) - \theta_i(j), \ z_{i,j} \geq \theta_i(j) - \theta(j) \ \forall i \in [k], j \in [d]
\end{split}
\end{align}

\textbf{General Setting}: In the general setting, computing the projection step becomes infeasible as the number of states (and constraints) can be very large and possibly infinite. Instead, we again adopt a policy parametrization as discussed in the previous subsection. In particular, we assume that the policies are parametrized by a class of parameters $\Psi \subseteq \Reals^d$, i.e., $\Pi = \{ \pi_\psi : \psi \in \Psi\}$.
We now aim to solve the following optimization problem: 
$$
\max_{\psi \in \Psi}  \underline \socialwelfare (\psi) \defeq \min_{\theta \in \mathscr{C}} \E_{s \sim \rho} \left[  V^{\pi_\psi}_\theta(s) \right] . 
$$
The gradient of the objective is given by
$$
\nabla_\psi \underline \socialwelfare (\psi) = \E_{s \sim \rho} \left[ \nabla_\psi V^{\pi_\psi}_{\theta^\star}(s) \right]
$$
where 
$$
\theta^\star \in \argmin_{\theta \in \mathscr{C}} \E_{(s,a) \sim q_{\pi_\psi}} \left[ \langle \theta, \phi(s, a) \rangle \right].
$$ 

\section{Experiments: Simulating Strategic Preference Labeling} 

\looseness=-1
We here conduct small-scale synthetic experiments that simulate strategic preference learning and serve as a preliminary empirical evaluation of the proposed methodology. 

\paragraph{Experimental Setup.} \looseness=-1
We simulate strategic labeling behavior by performing approximate gradient ascent (i.e., simultaneous perturbation stochastic approximation) on each labeler's utility $J_i(\hat \pi)$ w.r.t. the labelers' internal reward parameters~$\smash{\hat \theta_i}$, which govern their preference distribution $\P_{\hat \theta_i}$. We adopt this simulation approach from prior work on strategic contextual bandits \citep{kleine2024strategic}.  Each labeler is initialized at their ground-truth reward vector $\theta_i^*$, which is sampled from a multivariate Gaussian. Labeler strategies are optimized for 200 steps. Since this process requires repeatedly re-labeling comparisons and re-running each algorithm, we focus on small problem settings in a contextual bandit formulation. All results are averaged over 5 random seeds, and we report standard errors. The results below are for embedding dimension $d=16$, number of labelers $k=5$, and offline samples $n=20, 50, 100, 200$. 
We compare the following approaches: (a) Naive MLEs that simply computes the MLEs given the preference data and optimizes a policy against the average reward estimate to maximize social welfare; (b) Pessimistic Social Welfare \citep{zhong2024provable}, which is the pessimistic version of Naive MLEs; (c) Median of MLEs, which optimizes a policy against the reward function derived from the median over MLEs; (d) Pessimistic MoMLEs, which is our proposed algorithm and outlined in Algorithm~\ref{algorithm:pessimistic_median_mle}.  
We report the policy suboptimality under both truthful and strategic labeling. 

\paragraph{Results.} \looseness=-1
Overall, we observe that while the Naive MLEs and its pessimistic version, Pessimistic Social Welfare, perform well when labelers are truthful, the performance degrades substantially under strategic preference labeling. In contrast, Pessimistic Median of MLEs exhibits almost no degradation, consistent with its approximate strategyproofness guarantee. Median of MLEs also shows slightly more robustness, though not to the same degree.
We find that with increasing sample size, Pessimistic Median of MLEs primarily suffers the inherent cost of being strategyproof, as indicated by diminishing performance gains from more samples. That said, the influence of strategic manipulation on the learned policy also grows with more data, thereby making discouraging strategic preference labeling increasingly more valuable.  

\newcommand{\stdev}[1]{\text{\scriptsize$\ \pm\ #1$}}

\begin{table}[H]
\centering
\caption{Suboptimality $\subopt(\hat \pi)$ under truthful and strategic labeling across dataset sizes $n$.}
\small
\setlength{\tabcolsep}{5pt}
\begin{tabular}{l|c|c|c}
\toprule
\multicolumn{4}{c}{$\mathbf{n = 20}$} \\
\cmidrule(lr){1-4}
\textbf{Algorithm} & \textbf{SubOpt (Truthful)} & \textbf{SubOpt (Strategic)} & \textbf{Difference} \\
\midrule
Naive MLEs         & $\mathbf{0.512}\stdev{0.067}$ & $\mathbf{0.649}\stdev{0.082}$ & $+0.137$ \\
Pessimistic SW     & $0.641\stdev{0.081}$          & $0.751\stdev{0.086}$          & $+0.110$ \\
Median of MLEs     & $0.635\stdev{0.053}$          & $0.693\stdev{0.167}$          & $+0.058$ \\
Pessimistic MoMLEs & $0.661\stdev{0.098}$          & $0.715\stdev{0.155}$          & $\mathbf{+0.054}$ \\
\midrule
\addlinespace[15pt]
\multicolumn{4}{c}{$\mathbf{n = 50}$} \\
\cmidrule(lr){1-4}
\textbf{Algorithm} & \textbf{SubOpt (Truthful)} & \textbf{SubOpt (Strategic)} & \textbf{Diff.} \\
\midrule
Naive MLEs         & $\mathbf{0.384}\stdev{0.056}$ & $0.622\stdev{0.142}$          & $+0.238$ \\
Pessimistic SW     & $0.403\stdev{0.064}$          & $0.652\stdev{0.149}$          & $+0.249$ \\
Median of MLEs     & $0.516\stdev{0.062}$          & $0.706\stdev{0.124}$          & $+0.190$ \\
Pessimistic MoMLEs & $0.508\stdev{0.056}$          & $\mathbf{0.532}\stdev{0.154}$ & $\mathbf{+0.024}$ \\
\midrule
\addlinespace[15pt]
\multicolumn{4}{c}{$\mathbf{n = 100}$} \\
\cmidrule(lr){1-4}
\textbf{Algorithm} & \textbf{SubOpt (Truthful)} & \textbf{SubOpt (Strategic)} & \textbf{Diff.} \\
\midrule
Naive MLEs         & $\mathbf{0.230}\stdev{0.071}$ & $\mathbf{0.516}\stdev{0.131}$ & $+0.316$ \\
Pessimistic SW     & $0.249\stdev{0.068}$          & $0.584\stdev{0.147}$          & $+0.336$ \\
Median of MLEs     & $0.522\stdev{0.044}$          & $0.605\stdev{0.049}$          & $+0.083$ \\
Pessimistic MoMLEs & $0.506\stdev{0.045}$          & $0.574\stdev{0.152}$          & $\mathbf{+0.068}$ \\
\midrule
\addlinespace[15pt]
\multicolumn{4}{c}{$\mathbf{n = 200}$} \\
\cmidrule(lr){1-4}
\textbf{Algorithm} & \textbf{SubOpt (Truthful)} & \textbf{SubOpt (Strategic)} & \textbf{Diff.} \\
\midrule
Naive MLEs         & $\mathbf{0.133}\stdev{0.029}$ & $0.491\stdev{0.251}$          & $+0.358$ \\
Pessimistic SW     & $0.136\stdev{0.027}$          & $0.459\stdev{0.266}$          & $+0.323$ \\
Median of MLEs     & $0.487\stdev{0.061}$          & $0.514\stdev{0.259}$          & $+0.027$ \\
Pessimistic MoMLEs & $0.474\stdev{0.051}$          & $\mathbf{0.415}\stdev{0.079}$ & $\mathbf{-0.059}$ \\
\bottomrule
\end{tabular}
\end{table}


\end{document}